\documentclass{article} % For LaTeX2e
\usepackage{iclr2027_conference,times}
\renewcommand{\headrulewidth}{0pt}
\usepackage{placeins}
\usepackage{amsmath,amsfonts,bm}
\usepackage{amsthm} %%
\usepackage{mathtools}
\usepackage{graphicx} %%
\usepackage{wrapfig} %%
\usepackage{subfigure} %%
\usepackage{caption}
\usepackage{booktabs}       % professional-quality tables
\usepackage{tikz}
\usetikzlibrary{matrix,arrows,decorations.pathmorphing}

\def\1{\bm{1}}

\newcommand{\R}{\mathbb{R}}

\newcommand{\eat}[1]{}

\newtheorem{theorem}{Theorem}[section]
\newtheorem{proposition}[theorem]{Proposition}
\newtheorem{corollary}[theorem]{Corollary}

\makeatletter

\newcommand{\Rmnum}[1]{\expandafter\@slowromancap\romannumeral #1@}
\makeatother

\newcommand{\GL}{\mathrm{GL}}

\newcommand{\Par}{\Theta}
\newcommand{\Data}{\mathcal{D}}

\newcommand{\id}{\text{\rm id}}

\theoremstyle{plain}
\newtheorem{manualtheoreminner}{Theorem}
\newenvironment{manualtheorem}[1]{%
  \renewcommand\themanualtheoreminner{#1}%
  \manualtheoreminner
}{\endmanualtheoreminner}

\newtheorem{manualpropositioninner}{Proposition}
\newenvironment{manualproposition}[1]{%
  \renewcommand\themanualpropositioninner{#1}%
  \manualpropositioninner
}{\endmanualpropositioninner}

\newtheorem{manualcorollaryinner}{Corollary}
\newenvironment{manualcorollary}[1]{%
  \renewcommand\themanualcorollaryinner{#1}%
  \manualcorollaryinner
}{\endmanualcorollaryinner}

\theoremstyle{definition}

\theoremstyle{remark}

\usepackage{hyperref}
\usepackage{url}

\title{Discovering Symmetries in Neural Network Parameter Spaces}

\author{%
\normalfont
\begin{tabular}[t]{@{}p{0.48\textwidth}@{\hspace{0.08\textwidth}}p{0.48\textwidth}@{}}
\textbf{Bo Zhao} & \textbf{Nima Dehmamy} \\
Harvard University & IBM Research \\
\texttt{bo\_zhao@seas.harvard.edu} & \texttt{nima.dehmamy@ibm.com} \\[1.4em]
\textbf{Robin Walters} & \textbf{Rose Yu} \\
Northeastern University & University of California, San Diego \\
\texttt{r.walters@northeastern.edu} & \texttt{roseyu@ucsd.edu}
\end{tabular}%
}

\iclrfinalcopy % Uncomment for camera-ready version, but NOT for submission.
\begin{document}

\maketitle

\begin{abstract}
Parameter space symmetries are important for understanding neural networks' loss landscape, training dynamics, and generalization. 
However, systematically identifying these symmetries remains a challenge. 
In this paper, we formalize data-dependent parameter symmetries and characterize loss invariance and the group-action axioms through infinitesimal conditions, which provide objectives for jointly learning group generators and nonlinear action maps. 
Our framework systematically uncovers parameter symmetries, including previously unknown ones. 
To study larger networks, we establish conditions under which subnetwork symmetries extend to the full model. 
The same construction gives an explicit family of finite-batch symmetries, providing both analytical examples and a foundation for discovery through small subnetworks.
Using the infinitesimal characrization and subnetwork construction, we implement a framework for automated discovery of parameter symmetries, and successfully uncovered symmetries in various architectures, including pretrained transformer models.
\end{abstract}

\section{Introduction}
Parameter space symmetry, or loss-invariant transformation of parameters, influences various aspects of deep learning theory.
Continuous symmetry connects groups to their orbits, revealing topological properties such as the dimension \citep{zhao2023symmetries} and connectedness \citep{zhao2023understanding} of the minimum. 
Parameter symmetry also influences training dynamics through the associated conserved quantities of gradient flow \citep{kunin2021neural} and by steering stochastic gradient descent towards certain favored solutions \citep{ziyin2023symmetry}.
Additionally, symmetry provides a tool to perform optimization within a loss level set, with successful applications in accelerating optimization \citep{armenta2023neural, zhao2022symmetry}. 
Other applications of parameter symmetry include model compression \citep{ganev2021universal, sourek2021lossless}, model alignment and merging \citep{zhang2025beyond}, efficient sampling in Bayesian neural networks \citep{wiese2023towards}, and equivariant architectures for weight space learning \citep{navon2023equivariant,zhou2023permutation}.

Despite the wide range of applications, our knowledge of parameter space symmetries remains limited.
In particular, known symmetries often cannot account for all loss-invariant parameter transformations. 
While several frameworks have been developed to unify known symmetries, whether the symmetries in current literature are complete remains an open question \citep{zhao2026symmetry}. 
The lack of a systematic approach necessitates deriving symmetries from scratch for each new architecture, creating barriers for broader application of parameter symmetries.

In this paper, we present an automated approach for discovering parameter space symmetries and their group actions in neural networks. 
To define the search space, we formalize the definition of data-dependent symmetries and derive infinitesimal conditions for loss invariance and the group-action axioms. 
These characterizations connect local parameter changes to finite transformations and provide differentiable objectives for learning nonlinear action maps. 
We also bound finite composition and loss errors in terms of the corresponding differential residuals.

While directly searching for symmetries in modern architectures with billions of parameters is prohibitively expensive, we show that large networks often inherit symmetries from their components or subnetworks. Analyzing these smaller networks reduces the dimension of the discovery problem. 
However, a symmetry of a smaller component need not automatically induce a symmetry of the full network, so this reduction requires understanding when local symmetries can be lifted.
For smooth two-layer blocks, we prove that a class of output-preserving transformations can be decomposed into transformations of small subnetworks.
The proof also constructs a new family of data-dependent symmetries.
These results connect the structure of the network to both the construction and the decomposition of its symmetries.

Our main contributions are:
\begin{itemize}
    \item Exact infinitesimal characterizations of data-dependent parameter symmetries, together with bounds on finite errors when the differential conditions hold approximately.
    \item Conditions for lifting subnetwork symmetries and decomposing a class of symmetries into small-subnetwork symmetries, together with an explicit family of finite-batch actions.
    \item A framework for automated discovery of parameter symmetries building on the infinitesimal characterization and subnetwork construction, with demonstrations of learned nonlinear actions in pretrained transformers and other architectures.
\end{itemize}

\section{Related Work}
\paragraph{Parameter space symmetry.} Parameter symmetries are loss-invariant transformations on neural network parameters, often in the form of group actions. 
Symmetry often arises from equivariance of common activation functions \citep{godfrey2022symmetries}, including invertible linear transformations in linear networks, rescaling in homogeneous networks \citep{badrinarayanan2015symmetry, du2018algorithmic, petzka2020notes}, radial rescaling in radial neural networks \citep{ganev2021universal}, and translation in softmax and scaling in batchnorm functions \citep{kunin2021neural}.
Permutation and sign flips are often the only function-preserving transformations of tanh networks \citep{chen1993geometry}.
ReLU networks, however, possess symmetries beyond the well-known rescaling \citep{grigsby2023hidden}. 
The existence and number of symmetries in most other architectures remain as open questions.

\paragraph{Data-dependent symmetry.} 
While the above symmetries preserve the loss on all data, a relaxed definition, data-dependent symmetry, only requires loss invariance on a subset of data. 
\citet{zhao2023symmetries} gave examples of such symmetries with nontrivial data dependency, though their constructions are limited to minibatches of size one and are difficult to generalize across architectures.
This motivates more systematic discovery methods.
The concept of a symmetry dependent on data has also appeared in adjacent fields. 
For example, \citet{moskalev2023genuine} observe that learned data invariance in neural networks is strongly conditioned on data and breaks under data distribution drift;
\citet{sonoda2023joint} define a joint group action on data and parameters as part of a new proof of universal approximation theory.
% Eli's lecture (10/10/2024): Local data-dependent symmetry, defined by an action from the semigroup G to a smooth function on an epsilon ball around a point $\theta_0$ in the parameter space.

\paragraph{Discovering and measuring symmetry.}
Various work explores learning continuous symmetries by identifying generators of Lie groups \citep{krippendorf2020detecting, moskalev2022liegg, dehmamy2021automatic, yang2023generative, gabel2023learning}, including cases with nonlinear group actions \citep{yang2023latent,shaw2024symmetry,shaw2025continuous,hu2025explicit, ko2024learning}.
We build on this approach to discover data-dependent group action in high-dimensional parameter spaces.
While learning discrete symmetry \citep{zhou2021meta,karjol2024unified} and distributions of symmetry \citep{benton2020learning, romero2022learning, urbano2023self} are also relevant, they are not the primary focus of this paper. 

Extracted symmetry is often evaluated locally, by measuring function changes under infinitesimal symmetry transformations \citep{gruver2022lie} or by comparing tangent spaces of orbits under the learned group and the true symmetry group \citep{portilheiro2023quantifying}.
We use local invariance, similar to that defined in \citep{gruver2022lie, moskalev2022liegg}, as a preservation objective and additionally enforce the infinitesimal action law to learn consistent finite transformations.

\section{Parameter Space Symmetry}
In this section, we formalize the concept of data-dependent parameter symmetries and characterize them through their infinitesimal generators. 
The infinitesimal characterization identifies when a smooth parameter map is a loss-preserving group action, and yields the differentiable objectives used for discovery in Section~\ref{sec:automatic-discovery}.

\subsection{Data-Dependent Group Action and Symmetry}
Let $\Par$ be the space of parameters and $\Data$ be the space of data. 
In this paper, we consider loss functions of the form $L\colon \Par \times \Data \to \R$, which map parameters and a single data point to a real number.
We also use the same notation when the loss is applied to multiple data points at once $L\colon \Par \times \Data^d \to \R^d$ for $d \in \mathbb{N}$.
The same symmetry definition applies to vector- or matrix-valued network outputs, with invariance required componentwise.

Let $G$ be a group. Consider a map $a$, which defines a map for every data batch of size $d \in \mathbb{Z}^+$:
\begin{align}
\label{eq:map-a}
    a\colon \Data^d &\to (G \times \Par \to \Par) \cr 
    X &\mapsto (a_X\colon g, \theta \mapsto \theta').
    % x &\mapsto a_x(g, \theta) = \theta'
\end{align}
The map $a$ is a \emph{generalized group action} on $\Par$ if $a_X$ is a group action for every data batch $X$, meaning that it satisfies the following axioms:
\begin{align*}
    \text{identity:}& \quad a_X(I, \theta) = \theta, \qquad \forall X \in \Data^d, ~~\forall \theta \in \Par. \cr
    \text{associative law:}&  \quad a_{ X}(g_2, a_X(g_1, \theta)) = a_X(g_2 g_1, \theta), 
    \qquad \forall g_1, g_2 \in G, ~~\forall X \in \Data^d, ~~\forall \theta \in \Par. 
    % \text{associative law:}&  \qquad a_{g_1 X}(g_2, a_X(g_1, \theta)) = a_X(g_2 g_1, \theta), \qquad \forall g_1, g_2 \in G, ~~\forall X \subseteq \Data^d, ~~\forall \theta \in \Par. 
    % cocycle condition / compatibility
\end{align*}

We introduce our first definition formalizing the notion of data-dependent symmetry. A group action $a$ is \emph{parameter space symmetry of $L$} if it additionally satisfies
\begin{align*}
    \text{loss invariance:}& \quad L(a_X(g, \theta), X) = L(\theta, X),
    \qquad \forall g \in G, ~~\forall X \in \Data^d, ~~\forall \theta \in \Par.
\end{align*}
A function $L$ has a \emph{$G$-symmetry} if there exists a loss-invariant group action $a$. 
We refer to $G$ as a symmetry group of $L$. 
Additionally, the action $a$ is termed a non-trivial \emph{data-dependent group action} or symmetry if the map \eqref{eq:map-a} has a non-trivial dependency on $X$. 
That is, $a$ is data-dependent if there exists $X_1, X_2 \in \Data^d$, such that $a_{X_1} \neq a_{X_2}$. 
% Equivalently, $a$ is data-dependent if it cannot be written in the form of $G \times \Par \to \Par$.

\subsection{Infinitesimal Symmetry}
\label{sec:infinitesimal-symmetry}

A smooth action determines vector fields describing its instantaneous changes to the parameters. We next characterize loss invariance and composition through differential conditions on these fields and the action map. 
Proofs and examples are given in Appendix~\ref{appendix:infinitesimal-symmetry-examples}.

The following theorem characterizes loss invariance through the derivative of $L$ in the directions generated by the group's infinitesimal action.
We denote the Lie algebra of $G$ by $\mathfrak g=T_IG$.

\begin{theorem}[Infinitesimal characterization of loss invariance]
\label{thm:infinitesimal-symmetry}
Let $G$ be a connected Lie group, let $\Par$ be a smooth manifold, and let $a\colon\Data^d\to(G\times\Par\to\Par)$ be a group action such that each $a_X$ is smooth. 
Let $L\colon\Par\times\Data^d\to\mathbb R^d$ satisfy $L(\cdot,X)\in C^1(\Par,\mathbb R^d)$ for every $X$.
Then $a$ is a parameter space symmetry of $L$ if and only if
\begin{equation}
\label{eq:infinitesimal-symmetry}
    D_\theta L\big|_{\theta,X}
    \bigl(D_ga_X\big|_{I,\theta}(h)\bigr)=0
\end{equation}
for every $\theta\in\Par$, $X\in\Data^d$, and $h\in\mathfrak g$.
\end{theorem}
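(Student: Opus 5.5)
Fix a batch $X$ throughout and write $a_X$ simply as a smooth action $G\times\Par\to\Par$. The forward direction is immediate. If $L(a_X(g,\theta),X)=L(\theta,X)$ for all $g$, then for fixed $\theta$ the map $g\mapsto L(a_X(g,\theta),X)$ is constant. Differentiating at $g=I$ in direction $h\in\mathfrak g$ and applying the chain rule gives \eqref{eq:infinitesimal-symmetry}, since $L(\cdot,X)$ is $C^1$ and $a_X$ is smooth.

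For the converse, I will show that the derivative of $\phi(g):=L(a_X(g,\theta),X)$ vanishes at every $g\in G$, not only at $I$. Then I will use connectedness of $G$. For this I transport the condition along the group using the associative law. Take $g\in G$ and $h\in\mathfrak g$, and consider the right-translated curve $t\mapsto \exp(th)\,g$, whose velocity at $t=0$ is $dR_g(h)$. Since the vectors $dR_g(h)$, $h\in\mathfrak g$, exhaust $T_gG$, it suffices to differentiate along these curves. By the associative law,
\begin{equation*}
\phi(\exp(th)g)=L\bigl(a_X(\exp(th),a_X(g,\theta)),X\bigr).
\end{equation*}
Differentiating at $t=0$ gives $D_\theta L|_{\theta',X}\bigl(D_ga_X|_{I,\theta'}(h)\bigr)$ with $\theta'=a_X(g,\theta)$. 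This vanishes by hypothesis \eqref{eq:infinitesimal-symmetry} applied at the point $\theta'$. Hence $d\phi_g=0$ for every $g$.

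A $C^1$ map with identically zero differential on a connected manifold is constant. The argument is to look at $\{g:\phi(g)=\phi(I)\}$, which is closed, and to show it is open by integrating along paths in coordinate charts. Alternatively, one can join any $g$ to $I$ by a piecewise smooth path, which exists because connected manifolds are path-connected by such paths, and apply the fundamental theorem of calculus to $\phi$ along the path. Thus $\phi(g)=\phi(I)=L(a_X(I,\theta),X)=L(\theta,X)$ by the identity axiom, which is loss invariance. Since $X$ and $\theta$ were arbitrary, $a$ is a parameter space symmetry. For vector-valued $L$ the argument applies componentwise.

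The main obstacle is the step that upgrades the condition at the identity to every point of $G$. The key subtlety is choosing right translation, as opposed to left translation, so that the associative law $a_X(g_2,a_X(g_1,\theta))=a_X(g_2g_1,\theta)$ converts $\exp(th)g$ into an action at the identity on the moved point $\theta'$. This is exactly where the hypothesis must hold for all $\theta$ and not merely the original one. One also needs the minor checks that the right-translated tangent vectors span $T_gG$ and that the regularity is enough for the chain rule; both hold because $R_g$ is a diffeomorphism and $L$ is $C^1$.
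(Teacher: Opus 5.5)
Your proof is correct, but it globalizes differently from the paper.

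The paper works one generator at a time. For the orbit curve $\gamma(t)=a_X(\exp(th),\theta)$ it uses $\gamma(t+s)=a_X(\exp(sh),\gamma(t))$ to get $\dot\gamma(t)=D_ga_X|_{I,\gamma(t)}(h)$, so the hypothesis at the moved points makes $L(\gamma(t),X)$ constant. It then proves that a connected Lie group is generated by $\exp(\mathfrak g)$: the generated subgroup contains an identity neighborhood, so it is open, hence closed. Finally it composes the one-parameter factors with the action law.

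You apply the same associativity trick at an arbitrary $g$, via the right-translated curve $t\mapsto\exp(th)g$. You conclude $d\phi_g=0$ on all of $T_gG$, and finish with the fact that a $C^1$ map with vanishing differential on a connected manifold is constant. Your version is a single step and needs neither the generation lemma nor the induction over factors.

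The paper's version, in exchange, only uses the condition along the one-parameter subgroups being composed. That is why it carries over verbatim to Proposition~\ref{prop:learned-generators-suffice}, where the condition is imposed only for a generating list $h_1,\ldots,h_r$ that need not span $\mathfrak g$ (for instance $E_{12},E_{21}$, which generate a three-dimensional algebra). There $d\phi_g$ is only known to vanish on the span of the $(dR_g)_Ih_i$, so your vanishing-differential step is not directly available, while constancy along each factor plus generation still works. The generation lemma is also reused in the paper's proof of Theorem~\ref{thm:infinitesimal-action}.
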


Here $D_\theta L|_{\theta,X}\colon T_\theta\Par\to\mathbb R^d$ and $D_ga_X|_{I,\theta}\colon\mathfrak g\to T_\theta\Par$.

\begin{proof}[Proof sketch.]
Consider the curve $\gamma(t)=a_X(\exp(th),\theta)$.
Loss invariance implies \eqref{eq:infinitesimal-symmetry} by differentiation at $t=0$. Conversely, the action law gives $\dot\gamma(t)=D_ga_X|_{I,\gamma(t)}(h)$. 
If \eqref{eq:infinitesimal-symmetry} holds at every parameter point, the chain rule shows that $L(\gamma(t),X)$ is constant.
Every element of a connected Lie group is a finite product of exponentials, so the action law extends this invariance to all of $G$.
\end{proof}

Equation \eqref{eq:infinitesimal-symmetry} states that the derivative of each component of $L$ vanishes in the directions in parameter space generated by the infinitesimal action $D_g a_X\big|_{I,\theta}(h)$.
Thus, moving along these directions does not change the loss to first order, reflecting the invariance of $L$ under the group action.

We similarly characterize the group-action axioms infinitesimally. To obtain the condition for composition, we hold one group element fixed and differentiate with respect to an infinitesimal left multiplication of the other.

We use the following notation in the theorem. For $g\in G$, let $R_g(q)=qg$ denote right multiplication.
For a smooth map satisfying $a_X(I,\theta)=\theta$, let $v_{X,h}(\theta):=D_g a_X\big|_{I,\theta}(h)$ denote the infinitesimal action generated by $h\in\mathfrak g$.
As a function of $\theta$, this is a smooth vector field on $\Par$.

\begin{theorem}[Infinitesimal characterization of group actions]
\label{thm:infinitesimal-associative}
\label{thm:infinitesimal-action}
Let $G$ be a connected Lie group, let $\Par$ be a smooth manifold,
and let $a\colon\Data^d\to(G\times\Par\to\Par)$ be a smooth map
satisfying $a_X(I,\theta)=\theta$ for every $X$ and $\theta$.
Then $a_X$ is a group action for every $X$ if and only if
\begin{equation}
\label{eq:infinitesimal-action-transport}
    D_g a_X\big|_{g,\theta}\bigl((dR_g)_Ih\bigr)
    =v_{X,h}\bigl(a_X(g,\theta)\bigr)
\end{equation}
for all $g\in G$, $\theta\in\Par$, $X\in\Data^d$, and
$h\in\mathfrak g$. 
\end{theorem}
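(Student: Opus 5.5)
We prove the two directions separately, fixing a batch $X$ throughout and abbreviating $a=a_X$, $v_h=v_{X,h}$.

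\emph{Necessity.} Suppose $a$ is a group action. Fix $g$ and $\theta$, and consider the curve $t\mapsto \exp(th)g$ in $G$. Its velocity at $t=0$ is $(dR_g)_I h$. The associative law gives
\[
a(\exp(th)g,\theta)=a(\exp(th),a(g,\theta)).
\]
Differentiating both sides at $t=0$ with the chain rule, the left side yields $D_g a|_{g,\theta}((dR_g)_I h)$. The right side yields $D_g a|_{I,a(g,\theta)}(h)=v_h(a(g,\theta))$. This is exactly \eqref{eq:infinitesimal-action-transport}.

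\emph{Sufficiency.} Assume \eqref{eq:infinitesimal-action-transport} and $a(I,\cdot)=\id$. Fix $g_1\in G$ and $\theta$. We aim to show $a(g_2g_1,\theta)=a(g_2,a(g_1,\theta))$ for every $g_2$.

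First take $g_2=\exp(sh)$ and define two curves in $\Par$:
\[
\alpha(s)=a(\exp(sh)g_1,\theta),\qquad \beta(s)=a(\exp(sh),a(g_1,\theta)).
\]
Since $\frac{d}{ds}\exp(sh)g_1=(dR_{\exp(sh)g_1})_I h$, applying \eqref{eq:infinitesimal-action-transport} at the point $\exp(sh)g_1$ gives $\dot\alpha(s)=v_h(\alpha(s))$. Applying it at $\exp(sh)$ with base point $a(g_1,\theta)$ gives $\dot\beta(s)=v_h(\beta(s))$. Both curves start at $a(g_1,\theta)$, using the identity axiom for $\beta$. They are therefore integral curves of the same smooth vector field $v_h$ with the same initial point, and uniqueness of integral curves gives $\alpha\equiv\beta$ on all of $\mathbb R$. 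Both curves are defined for every $s$, so no completeness issue arises; uniqueness holds on any common interval, and a connectedness argument in $s$ handles the rest.

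This step is the main obstacle. It is the crux because the condition is only infinitesimal, and ODE uniqueness is what converts it into a finite identity.

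To extend to arbitrary $g_2$, use that $G$ is connected, so it is generated by $\exp(\mathfrak g)$. Write $g_2=\exp(h_k)\cdots\exp(h_1)$ and induct on $k$. Suppose the law holds for $g'=\exp(h_{k-1})\cdots\exp(h_1)$ with every $g_1$ and $\theta$. Then
\[
a(\exp(h_k)g'g_1,\theta)=a(\exp(h_k),a(g'g_1,\theta))=a(\exp(h_k),a(g',a(g_1,\theta))).
\]
The first equality is the one-parameter case with $g_1$ replaced by $g'g_1$. The second is the induction hypothesis. Finally, the one-parameter case applied with $g_1$ replaced by $g'$ and $\theta$ replaced by $a(g_1,\theta)$ turns the last expression into $a(\exp(h_k)g',a(g_1,\theta))=a(g_2,a(g_1,\theta))$, which is the associative law. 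Together with the assumed identity axiom, $a_X$ is a group action for every $X$.
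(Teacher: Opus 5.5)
Your proposal is correct and follows the paper's proof essentially step for step. Necessity comes from differentiating $a(\exp(th)g,\theta)=a(\exp(th),a(g,\theta))$ at $t=0$; sufficiency comes from uniqueness of the two globally defined integral curves of $v_h$ starting at $a(g_1,\theta)$, extended to all of $G$ by connectedness, with your explicit induction on the number of exponential factors spelling out the paper's ``repeated application'' step.
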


Equation~\eqref{eq:infinitesimal-action-transport} shows that the
differential of each orbit map $g\mapsto a_X(g,\theta)$ transports
the right-invariant vector field generated by $h$ on $G$ to the
vector field $v_{X,h}$ on parameter space. For a matrix Lie group, $(dR_g)_Ih=hg$. 
At the identity, associativity implies the Lie-bracket compatibility condition $[v_{X,h_1},v_{X,h_2}]=-v_{X,[h_1,h_2]},$ where the vector-field bracket uses the convention $[u,w]f=u(wf)-w(uf)$. 
Proofs and the additional global conditions are given in Appendix \ref{app:infinitesimal-action}.

\section{Building New Symmetry from Known Ones}
\label{sec:building-symmetries-from-known-ones}

One way to identify symmetries in a large network is by examining its components or subnetworks. Despite often having billions of parameters, neural networks typically consist of a limited set of function families, such as fully connected layers, attention mechanisms, and activation functions. We first establish when symmetries of these components extend to the full network. We then give conditions under which small-subnetwork transformations generate an entire class of output-preserving maps, together with an explicit finite-batch action.

\subsection{Lifting subnetwork symmetries}
\label{sec:lifting-subnetwork-symmetries}

We identify symmetries of a larger network by preserving the outputs of selected subnetworks. 
A naive choice of parameter subset is not sufficient: preserving a computation on a subset of parameters need not preserve the full network if those parameters influence the surrounding computation in other ways.
Proposition~\ref{prop:symmetry-from-subnetworks} formalizes when and how we can obtain symmetries of larger networks from those of their subnetworks.
If a loss function $L$ depends on a subset of the parameters solely through a subnetwork $f$, then any symmetry of $f$ will also preserve $L$:
\begin{proposition}
\label{prop:symmetry-from-subnetworks}
    Let $L\colon \Par \times \Data^d \to \R^d$ where the parameter space $\Par$ is a product space $\Par = \Par_1 \times \Par_2$.
    Suppose for some spaces $S$ and $T$, there exist functions 
    $h\colon \Par_1 \times \Data^d \to S$, 
    $f\colon \Theta_2 \times S \to T$
    and $j\colon (\Theta_1 \times T) \times  \Data^d \to \R^d$,
    such that for every $\theta=(\theta_1, \theta_2) \in \Par$ and $X \in \Data^d$, 
    $L(\theta, X) = j \big( (\theta_1, f \big(\theta_2, h(\theta_1, X) \big)), X \big)$.
    If $a\colon S \to (G \times \Par_2 \to \Par_2)$ is a $G$-symmetry of $f$, then there is an induced $G$-symmetry of $L$, $a'\colon \Data^d \to (G \times \Par \to \Par)$, defined by $a'_X(g, (\theta_1, \theta_2)) =  \big( \theta_1, a_{h(\theta_1, X)}(g, \theta_2) \big)$.
\end{proposition}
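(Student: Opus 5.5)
The proof is a direct verification of the three defining properties of a parameter space symmetry for the map $a'$: identity, the associative law, and loss invariance. Throughout, fix a batch $X \in \Data^d$. The key observation is that $a'_X$ never changes the first coordinate $\theta_1$. As a result, the ``data'' $s := h(\theta_1, X) \in S$ fed into the subnetwork is the same before and after the action. So for fixed $(\theta_1, X)$, the map $a'_X$ restricted to the $\Par_2$ factor is exactly the group action $a_s$ on $\Par_2$, which is a symmetry of $f(\cdot, s)$ by hypothesis. Here I read ``$a$ is a $G$-symmetry of $f$'' in the sense of the paper's definition, with $S$ playing the role of the data space. That is, each $a_s$ is a group action on $\Par_2$ and $f(a_s(g, \theta_2), s) = f(\theta_2, s)$ for all $g$, $\theta_2$, $s$.

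First, the identity axiom. We have $a'_X(I, (\theta_1, \theta_2)) = (\theta_1, a_{h(\theta_1, X)}(I, \theta_2)) = (\theta_1, \theta_2)$ by the identity axiom for $a_s$.

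Second, the associative law. Compute $a'_X(g_1, (\theta_1, \theta_2)) = (\theta_1, a_s(g_1, \theta_2))$ with $s = h(\theta_1, X)$. Applying $a'_X(g_2, \cdot)$ to this point uses the subscript $h(\theta_1, X)$ again, which is the same $s$, since the first coordinate is unchanged. This gives $(\theta_1, a_s(g_2, a_s(g_1, \theta_2))) = (\theta_1, a_s(g_2 g_1, \theta_2)) = a'_X(g_2 g_1, (\theta_1, \theta_2))$. This is the one step where some care is needed, and it is the only conceivable obstacle. If $a'$ had altered $\theta_1$, the subnetwork input $s$ would shift between the two applications, and associativity could fail. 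Keeping $\theta_1$ fixed is precisely what makes the data-dependent subscript consistent across compositions.

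Third, loss invariance. Using the assumed factorization,
\[
L(a'_X(g,\theta), X) = j\big((\theta_1, f(a_s(g,\theta_2), s)), X\big) = j\big((\theta_1, f(\theta_2, s)), X\big) = L(\theta, X),
\]
where the middle equality is the invariance of $f$ under $a_s$, and $s = h(\theta_1, X)$ is identical on both sides. This holds for every $g$, $\theta$, and $X$, so $a'$ is a $G$-symmetry of $L$, completing the plan. No smoothness or connectedness is required, so none of the infinitesimal results (Theorems~\ref{thm:infinitesimal-symmetry} and~\ref{thm:infinitesimal-action}) are needed.
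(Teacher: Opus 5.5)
Your proposal is correct and matches the paper's proof: both directly check the identity axiom, the associative law, and loss invariance for $a'_X$, relying on the fact that $\theta_1$, and hence the subscript $h(\theta_1,X)$, is left unchanged by the action. You state explicitly why fixing $\theta_1$ keeps the data-dependent subscript consistent across compositions, which the paper uses only implicitly in its associativity computation.
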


The commutative diagram below summarizes the factorization, where $p_1\colon \Theta \to \Theta_1$, $p_2\colon \Theta \to \Theta_2$ are projections,  $\id_1\colon \Theta_1 \to \Theta_1$ and $\id_2\colon \Theta_2 \to \Theta_2$ are identity maps, and $X \in \Data^d$ is a batch of data. 
The spaces $S$ and $T$ can be interpreted as intermediate feature spaces in the neural network. 
In this factorization, $h$ does not depend on $\Theta_2$, and $j$ depends on $\Theta_2$ only through the output of $f$.
This effectively confines $L$'s dependency on $\Theta_2$ to the transformation defined by $f$, ensuring that any transformation on $\Theta_2$ not altering the output of $f$ will not affect the output of $L$.
Consequently, symmetries identified in the smaller network $f$ can be extrapolated to the larger network $L$. 

\begin{center}
\begin{tikzpicture}
    \matrix (m) [matrix of math nodes, row sep=2.5em, column sep=8em, minimum width=2em]{
        \Theta & & \mathbb{R}^d\\
        \Theta_1 \times \Theta_2 \times \Theta_1 & \Theta_1 \times \Theta_2 \times S & \Theta_1 \times T \\
    };
    \path[-stealth]
    (m-1-1) edge node [above] {$L(\cdot, X)$} (m-1-3)
            edge node [left] {$p_1 \times p_2 \times p_1$} (m-2-1)
    (m-2-1) edge node [above] {$\id_1 \times \id_2 \times h(\cdot, X)$} (m-2-2)
    (m-2-2) edge node [above] {$\id_1 \times f(\cdot, \cdot)$} (m-2-3)
    (m-2-3) edge node [right] {$j(\cdot, X)$} (m-1-3);
\end{tikzpicture}
\end{center}

The next two corollaries apply Proposition~\ref{prop:symmetry-from-subnetworks} to show that some symmetries are preserved as networks scale up, either through increasing the dimensionality of a layer or through adding additional layers.
We call $\sigma$ row-wise if it applies a fixed function $\sigma_{row}\colon\mathbb R^k\to\mathbb R^k$ independently to each row of its input. Element-wise functions are a special case.

\begin{corollary}
\label{cor:symmetry-in-wider-networks}    
    Consider the parameter space $\Par(m,h,n)=\R^{m\times h}\times\R^{h\times n}$ and data space $\Data(n,k)=\R^{n\times k}$. Let $\sigma$ apply the same row function $\sigma_{row}\colon\mathbb R^k\to\mathbb R^k$ at every hidden width. Define $L_{mnhk}\colon\Par(m,h,n)\times\Data(n,k)\to\R^{m\times k}$ by $L_{mnhk}((U,V),X)=U\sigma(VX)$. If $L_{mnhk}$ has a $G$-symmetry, then $L_{mnh'k}$ has a $G$-symmetry for every $h'>h$.
\end{corollary}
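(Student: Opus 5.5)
We plan to deduce the corollary from Proposition~\ref{prop:symmetry-from-subnetworks} by splitting the wider network into a width-$h$ block and a width-$(h'-h)$ block. Write $U'=[U_1\; U_2]$ with $U_1\in\R^{m\times h}$ and $U_2\in\R^{m\times(h'-h)}$. Correspondingly, write $V'=\begin{bmatrix}V_1\\ V_2\end{bmatrix}$ with $V_1\in\R^{h\times n}$ and $V_2\in\R^{(h'-h)\times n}$. Because $\sigma$ acts row-wise with the same $\sigma_{row}$ at every width, $\sigma(V'X)$ is obtained by stacking $\sigma(V_1X)$ on top of $\sigma(V_2X)$. Hence
\begin{equation*}
L_{mnh'k}((U',V'),X)=U_1\sigma(V_1X)+U_2\sigma(V_2X)=L_{mnhk}((U_1,V_1),X)+U_2\sigma(V_2X).
\end{equation*}

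The next step is to match this to the factorization in Proposition~\ref{prop:symmetry-from-subnetworks}. Take $\Par_2=\Par(m,h,n)$ with coordinates $(U_1,V_1)$, and $\Par_1=\R^{m\times(h'-h)}\times\R^{(h'-h)\times n}$ with coordinates $(U_2,V_2)$. Set $S=\Data^d=\R^{n\times k}$ and let $h(\theta_1,X)=X$ be the projection onto the data. Set $T=\R^{m\times k}$ and $f=L_{mnhk}$. Finally define $j((\theta_1,t),X)=t+U_2\sigma(V_2X)$. The displayed identity is then exactly $L=j\big((\theta_1,f(\theta_2,h(\theta_1,X))),X\big)$.

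Given the $G$-symmetry $a$ of $L_{mnhk}$, Proposition~\ref{prop:symmetry-from-subnetworks} produces the action $a'_X(g,(\theta_1,\theta_2))=(\theta_1,a_X(g,\theta_2))$. Since the proposition is stated for general $L$, the output being matrix-valued rather than in $\R^d$ needs no extra care: the paper already requires invariance componentwise. The group-action axioms for $a'$ hold because $a'$ acts as the identity on $\Par_1$ and as $a_X$ on $\Par_2$. The proposition supplies this.

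The main point needing care is the row-wise hypothesis, and we do not expect a real obstacle beyond checking it. This hypothesis is what guarantees that $\sigma$ applied to the stacked pre-activation equals the stack of $\sigma$ applied to each block. Without it, for example if $\sigma$ mixed rows as a softmax over hidden units would, the block decomposition fails. A second check is that the identification of $\Par(m,h',n)$ with $\Par_1\times\Par_2$ is just a permutation of coordinates, so the induced action really is an action on $\Par(m,h',n)$.
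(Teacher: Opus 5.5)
Your proposal is correct and follows the paper's proof: the same split into the first $h$ hidden units ($\Par_2$) and the remaining $h'-h$ units ($\Par_1$), justified by the row-wise action of $\sigma$. You also use the same factorization $h(\theta_1,X)=X$, $f=L_{mnhk}$, and $j((\theta_1,Y),X)=Y+U_2\sigma(V_2X)$, and then apply Proposition~\ref{prop:symmetry-from-subnetworks} to obtain the action $a'_X(g,(\theta_1,\theta_2))=(\theta_1,a_X(g,\theta_2))$.
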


\begin{corollary}
\label{cor:symmetry-in-deeper-networks}
    Let $\Par = \Par_1 \times ... \times \Par_l$ be a parameter space.
    Consider a list of spaces $V_0 = \Data^d$, $V_l = \R^d$, and $V_1$, ..., $V_{l-1}$.
    Let $L\colon \Par \times \Data^d \to \R^d$ be a function defined recursively by $\{L_i\}_{i=1}^l$ with $L_i\colon \Theta_i \times V_{i-1} \to V_{i}$, such that $L = \phi_l$ where $\phi_i = L_i(\theta_i, \phi_{i-1}) \in V_i$ and $\phi_0 = X$. 
    If for some $1 \leq i \leq l$, $L_i$ has a $G$-symmetry, then $L$ has a $G$-symmetry.
\end{corollary}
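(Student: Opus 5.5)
We apply Proposition~\ref{prop:symmetry-from-subnetworks}, taking the symmetric layer $L_i$ as the subnetwork $f$. Write $\Par=\Par_1'\times\Par_2'$, with $\Par_2'=\Par_i$ and $\Par_1'=\prod_{k\neq i}\Par_k$. Take the intermediate spaces to be $S=V_{i-1}$ and $T=V_i$. Define
\[
h(\theta_1',X)=\phi_{i-1},
\]
obtained by running the recursion through layers $1,\dots,i-1$ from $\phi_0=X$. These layers use only the coordinates $\theta_1,\dots,\theta_{i-1}$, which lie in $\Par_1'$, so $h$ does not depend on $\theta_i$. Set $f=L_i\colon\Par_i\times V_{i-1}\to V_i$. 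Finally let $j((\theta_1',t),X)$ be the result of running layers $i+1,\dots,l$ starting from $\phi_i=t$. This uses only $\theta_{i+1},\dots,\theta_l\in\Par_1'$ and ignores $X$, which the proposition permits.

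Next we check the factorization $L(\theta,X)=j\big((\theta_1',f(\theta_i,h(\theta_1',X))),X\big)$. This follows by induction along the recursion. The map $h$ reproduces $\phi_{i-1}$ exactly, and $f$ then gives $\phi_i$. Composing the later layers gives $\phi_l=L$.

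One point needs care. The hypothesis says that $L_i$ has a $G$-symmetry, with the second argument of $L_i$ playing the role of data. So there is an action $a\colon V_{i-1}\to(G\times\Par_i\to\Par_i)$, as a map from $V_{i-1}$, in the singleton-batch sense used in the proposition for $f$ with input in $S$. It satisfies $L_i(a_s(g,\theta_i),s)=L_i(\theta_i,s)$ for all $g$ and $s$. This is exactly the hypothesis ``$a$ is a $G$-symmetry of $f$'' in Proposition~\ref{prop:symmetry-from-subnetworks}.

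The proposition then gives the induced symmetry
\[
a'_X\big(g,\theta\big)=\big(\theta_1',\,a_{\phi_{i-1}(\theta_1,\dots,\theta_{i-1},X)}(g,\theta_i)\big).
\]
This action acts only on the $i$-th factor, and the data it uses is the activation entering layer $i$.

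The only real obstacle is bookkeeping. We must reorder the product so that $\theta_i$ sits in the second slot, which is harmless up to the canonical reindexing of coordinates. We must also handle the edge cases. For $i=1$, $h$ is just $X$. For $i=l$, $j$ is the identity on $T=V_l$.

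With these choices the group-action axioms for $a'$ are inherited from $a$, because $h$ is independent of $\theta_i$. Loss invariance follows directly from the factorization. Both facts are already contained in the conclusion of Proposition~\ref{prop:symmetry-from-subnetworks}.
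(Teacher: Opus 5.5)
Your proposal is correct and matches the paper's proof: the same factorization through Proposition~\ref{prop:symmetry-from-subnetworks}, with $h$ the composition of layers $1,\dots,i-1$, $f=L_i$, and $j$ the composition of layers $i+1,\dots,l$ (ignoring $X$), including the same empty-composition edge cases for $i=1$ and $i=l$. Your explicit remark that the second argument of $L_i$ plays the role of data (so the action is indexed by $V_{i-1}$) spells out a point the paper leaves implicit.
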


Both corollaries follow by factoring and applying Proposition \ref{prop:symmetry-from-subnetworks}.
The corresponding $h$, $f$, and $j$ are deferred to Appendix \ref{appendix:building-symmetries-from-known-ones}.
Figure \ref{fig:symmetry-from-subnetwork} (a-b) shows the subset of parameters ($\Par_2$) whose symmetries lift to the larger network.
% These are the subnetworks where symmetries are assumed to be known and which the larger network inherits.
Figure \ref{fig:symmetry-from-subnetwork} (c) shows why this need not hold for an arbitrary subset of parameters: $\Theta_2$ omits part of the input and output fan for an intermediate neuron, so the full network depends on $\Theta_2$ through information not captured by the subnetwork output.

\begin{figure*}[ht]
\centering
\includegraphics[width=1.0\columnwidth]{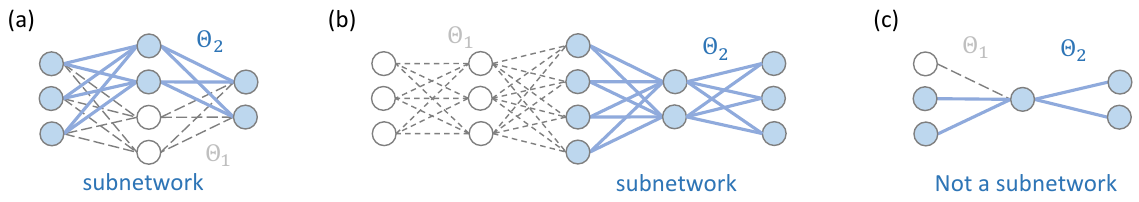}
\caption{Examples of when symmetries in subnetworks are symmetries in the original network (a-b), and when they are not (c).
Blue edges denote the transformed parameters $\Par_2$; gray dashed edges denote the fixed parameters $\Par_1$. 
}
\label{fig:symmetry-from-subnetwork}
\end{figure*}

In addition to obtaining symmetries from those in smaller networks, we can also get symmetries for a loss function over data batches with a certain size, if we know there is a symmetry for this function over larger data batches.
Concretely, an action preserving the pointwise losses on batches of size $d$ induces an action on every smaller positive batch size, by using a fixed padding of the batch.

\begin{proposition}
\label{prop:symmetry-with-smaller-data-batch}
    Let $L_d\colon \Par \times \Data^d \to \R^d$ be a function that is applied pointwise on each of $d$ data points in a data batch.
    If $L_d$ admits a $G$-symmetry, then $L_{d'}$ admits a $G$-symmetry for all $d' < d$. 
\end{proposition}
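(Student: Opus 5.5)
The plan is to build the action on the smaller batch directly from the given one, by padding every batch of size $d'$ up to size $d$ with a fixed tail of data. Fix a $G$-symmetry $a$ of $L_d$. Also fix, once and for all, an arbitrary tuple $Z=(z_1,\dots,z_{d-d'})\in\Data^{d-d'}$. This needs $\Data$ to be nonempty, which is implicit since otherwise every statement is vacuous. For $X=(x_1,\dots,x_{d'})\in\Data^{d'}$, write $\iota(X)=(x_1,\dots,x_{d'},z_1,\dots,z_{d-d'})\in\Data^d$ for the padded batch. We then define
\begin{equation*}
    a'_X(g,\theta) := a_{\iota(X)}(g,\theta), \qquad g\in G,~\theta\in\Par .
\end{equation*}

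The verification proceeds in two steps. First, the group-action axioms for $a'_X$ are inherited verbatim from those of $a_{\iota(X)}$. For the identity, $a'_X(I,\theta)=a_{\iota(X)}(I,\theta)=\theta$. For the associative law, $a'_X(g_2,a'_X(g_1,\theta))=a_{\iota(X)}(g_2,a_{\iota(X)}(g_1,\theta))=a_{\iota(X)}(g_2g_1,\theta)=a'_X(g_2g_1,\theta)$. This works because $\iota(X)$ is a single fixed batch for each $X$. Second, for loss invariance we use that $L_d$ is applied pointwise. This means there is a single $\ell\colon\Par\times\Data\to\R$ with $L_d(\theta,(x_1,\dots,x_d))_i=\ell(\theta,x_i)$, and likewise $L_{d'}(\theta,X)_i=\ell(\theta,x_i)$. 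Then for each $i\le d'$,
\begin{equation*}
    L_{d'}(a'_X(g,\theta),X)_i=\ell(a_{\iota(X)}(g,\theta),x_i)=L_d(a_{\iota(X)}(g,\theta),\iota(X))_i=L_d(\theta,\iota(X))_i=\ell(\theta,x_i)=L_{d'}(\theta,X)_i .
\end{equation*}
The invariance on the padding coordinates $i>d'$ holds as well but is simply discarded. So $a'$ is a $G$-symmetry of $L_{d'}$.

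The only real subtlety is making precise what ``applied pointwise'' means. I will read it as the existence of a common per-point loss $\ell$ shared by $L_d$ and $L_{d'}$, consistent with the paper's convention that $L\colon\Par\times\Data\to\R$ is extended to batches. This identification of the first $d'$ coordinates of $L_d(\cdot,\iota(X))$ with $L_{d'}(\cdot,X)$ is the key step; everything else is bookkeeping.

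If desired, one can also remark that $a'$ is smooth in $(g,\theta)$ whenever $a$ is. One can further note that $a'$ need not remain data-dependent in a nontrivial way. The statement, however, only asks for a $G$-symmetry, so neither remark is needed.
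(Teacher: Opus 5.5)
Your proposal is correct and follows the paper's proof essentially verbatim. Like the paper, you set $a'_{X'}=a_{\iota(X')}$ for a deterministic padding $\iota$, inherit the action axioms from the fixed padded batch, and get invariance by projecting the pointwise loss $L_d(\cdot,\iota(X'))$ onto its first $d'$ components; the only cosmetic difference is that the paper pads with repeated copies of $x_1$ rather than a fixed tail $Z$, which makes your nonemptiness remark unnecessary since $d'\ge 1$.
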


\subsection{When small subnetworks are sufficient}
\label{sec:small-subnetworks-sufficient}

The subnetwork perspective also yields a completeness result. Under the following conditions, small-subnetwork transformations generate an entire class of symmetry in the original network. 
This allows us to apply divide and conquer to symmetry discovery, making the problem computationally tractable.

Fix $X\in\mathbb R^{n\times k}$ and consider $F_X(U,V)=U\sigma(VX)$ with $U\in\mathbb R^{m\times H}$, $V\in\mathbb R^{H\times n}$, $H>k$, and smooth row-wise $\sigma$. Choose $k$ hidden units $B$, let $C$ be their complement, and assume the domain
\begin{equation}
\label{eq:subnet-regular-domain}
    \Omega_B=\{(U,V):\det[\sigma(VX)]_B\neq0\}
\end{equation}
is nonempty. A hidden-unit block comprises one row of $V$ and the matching column of $U$; subscripts select these blocks.

\begin{theorem}[Decomposition into subnetwork transformations]
\label{thm:subnet-map-generation}
Let $(\phi_t)_{t\in[0,1]}$ be a smooth family of diffeomorphisms of $\Omega_B$ satisfying $\phi_0=\mathrm{id}$ and $F_X\circ\phi_t=F_X$ for every $t\in[0,1]$.
Assume that all $\phi_t$ equal the identity outside the same compact set $K\subset\Omega_B$.
Then $\phi_1 = \psi_N \circ \ldots \circ \psi_1$ for finitely many output-preserving diffeomorphisms $\psi_i$, each changing at most $k+1$ hidden-unit blocks and preserving the combined output of those units on $X$.
\end{theorem}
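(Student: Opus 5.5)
The plan is to use the isotopy to break $\phi_1$ into many near-identity output-preserving maps, and then to factor each near-identity map algebraically into steps that each touch only the $k$ units in $B$ plus one unit from $C$. Write the output as a sum over hidden units,
\begin{equation*}
F_X(U,V)=\sum_{i}U_{:,i}\,\sigma_{row}(V_iX)=U_B S_B+U_C\sigma(V_CX),
\end{equation*}
where $S_B=[\sigma(VX)]_B$ is the $k\times k$ block, invertible on $\Omega_B$.

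\textbf{Time slicing.} Since $(t,\theta)\mapsto\phi_t(\theta)$ is smooth and equals the identity off the compact set $K$, it is uniformly continuous in the $C^1$ topology. Choose $0=t_0<\dots<t_M=1$ so that each $\chi_j=\phi_{t_j}\circ\phi_{t_{j-1}}^{-1}$ is $C^1$-close to the identity. Each $\chi_j$ is output-preserving, is supported in $K$, and maps $\Omega_B$ to itself. It therefore suffices to factor one such $\chi=\chi_j$, with target $\chi(U,V)=(U',V')$.

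\textbf{Algebraic factorization of a near-identity $\chi$.} Enumerate $C=\{c_1,\dots,c_r\}$. Define intermediate maps $\Psi_0=\mathrm{id},\Psi_1,\dots,\Psi_{r+1}$ from the original coordinates $\theta=(U,V)$ as follows.
\begin{itemize}
\item For $s\le r$, $\Psi_s(\theta)$ is obtained from $\Psi_{s-1}(\theta)$ by two changes. First, replace block $c_s$ by its target $(U'_{c_s},V'_{c_s})$. Second, keep $V_B$ unchanged and update
\begin{equation*}
U_B\leftarrow U_B+\bigl(U_{c_s}\sigma(V_{c_s}X)-U'_{c_s}\sigma(V'_{c_s}X)\bigr)S_B^{-1}.
\end{equation*}
This is well defined because $V_B$ is still the original one, so $S_B$ is invertible. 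By construction the combined output of $B\cup\{c_s\}$ is preserved.
\item The last map $\Psi_{r+1}$ changes only $B$. It sets $V_B\leftarrow V'_B$ and $U_B\leftarrow U_B^{\mathrm{cur}}S_B(S'_B)^{-1}$, where $S'_B=[\sigma(V'X)]_B$. This preserves the output of $B$, and $S'_B$ is invertible because $\chi(\theta)\in\Omega_B$.
\end{itemize}
Because $\chi$ preserves $F_X$ and all $C$-blocks already agree with their targets, the final $U_B$ must equal $U'_B$ (using invertibility of $S'_B$). Hence $\Psi_{r+1}=\chi$.

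Each $\Psi_s$ is a smooth expression in $\theta$ and $\chi(\theta)$. It equals the identity wherever $\chi(\theta)=\theta$, hence outside $K$. It is also $C^1$-close to the identity whenever $\chi$ is, because $S_B^{-1}$ is bounded on the compact set $K\subset\Omega_B$. A compactly supported $C^1$-small perturbation of the identity is a diffeomorphism onto its image, and here it is the identity off $K$. So each $\Psi_s$ is a diffeomorphism of $\Omega_B$. Then set $\psi_s=\Psi_s\circ\Psi_{s-1}^{-1}$. Each $\psi_s$ is a diffeomorphism that alters only the blocks $B\cup\{c_s\}$ (or only $B$), so at most $k+1$ blocks. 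It preserves their combined output on $X$, and hence preserves $F_X$. Concatenating these factorizations over all slices $j$ gives $\phi_1=\psi_N\circ\dots\circ\psi_1$.

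\textbf{Main obstacle.} I expect the hard step to be proving that the $\Psi_s$ are genuine diffeomorphisms of $\Omega_B$, not merely local ones. This needs two things.
\begin{itemize}
\item A uniform $C^1$ bound on the $U_B$-update in terms of $\|\chi-\mathrm{id}\|_{C^1}$. I would get it from compactness of $K$ and continuity of $S_B^{-1}$ on a neighbourhood of $K$.
\item A check that intermediate points stay inside $\Omega_B$. The $c$-steps do not move $V_B$, so they stay in $\Omega_B$ automatically. For the final step, $V'_B$ comes from $\chi(\theta)\in\Omega_B$.
\end{itemize}
Injectivity then follows from the standard argument that $\mathrm{id}+g$, with $g$ compactly supported and $\|Dg\|<1$, is a diffeomorphism. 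This argument is applied inside a convex-chart neighbourhood, or via the time-slicing fineness, since $\Omega_B$ itself need not be convex.
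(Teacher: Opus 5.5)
Your proposal is correct and is essentially the paper's proof: your $U_B$-compensation is the pullback of the paper's output chart $\Phi(U,V)=(Y,V,U_C)$. After the same time slicing, your $\Psi_s$ are exactly the paper's triangular interpolants $Q_j$ (conjugated by $\Phi$), except that you replace the coordinates $z=(V,U_C)$ one whole hidden-unit block at a time, with $V_B$ last, rather than one scalar at a time. For the injectivity step, the clean route (the one the paper takes) is to extend $\Psi_s$ by the identity to the ambient Euclidean space; there $\|D(\Psi_s-\mathrm{id})\|<1$ makes it a global diffeomorphism by the contraction argument, and a bijection fixing $\Omega_B^{c}$ pointwise maps $\Omega_B$ onto itself, so no convex chart is needed.
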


When $H>k+1$, every factor $\psi_i$ acts on a proper subnetwork. 
The factors may depend on unchanged parameters.

The output coordinates used in the proof additionally give an explicit family of finite-batch actions.
% The output coordinates used in the proof also give an explicit family. 
For $P\in\mathbb R^{m\times(H-k)}$ and $Q\in\mathbb R^{(H-k)\times n}$, consider
\[
    V'_B=V_B,\qquad V'_C=V_C+Q,\qquad U'_C=U_C+P.
\]
The units in $B$ compensate for the output change by setting
\[
    U'_B
    =\bigl[F_X(U,V)-U'_C\sigma(V'_CX)\bigr]
      \bigl[\sigma(V_BX)\bigr]^{-1}.
\]
This is the unique compensating value because $\sigma(V_BX)$ is invertible on $\Omega_B$. Keeping $V_B$ fixed preserves this domain.
Changing a single entry of $P$ or $Q$ affects one unit in $C$ and the compensating output weights of the $k$ units in $B$.

\begin{corollary}[An explicit data-dependent symmetry]
\label{thm:subnet-compensating-action}
\label{cor:subnet-compensating-action}
The maps $a_X^B((P,Q),(U,V))=(U',V')$ defined above form a smooth free action of the additive group $\mathbb R^{(H-k)(m+n)}$ on $\Omega_B$ and preserve $F_X$. This action is generated by commuting one-parameter subgroups, each changing at most $k+1$ hidden-unit blocks and preserving their combined output.
\end{corollary}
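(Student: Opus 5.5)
We verify each claim of Corollary~\ref{cor:subnet-compensating-action} directly from the explicit formulas, working at a fixed batch $X$; data dependence enters only through $F_X$ and $\sigma(V_BX)$. \textbf{Smoothness and domain.} Since $V'_B=V_B$, we have $[\sigma(V'X)]_B=[\sigma(VX)]_B$, so $(U',V')\in\Omega_B$ whenever $(U,V)\in\Omega_B$. On $\Omega_B$ the matrix inverse $[\sigma(V_BX)]^{-1}$ is smooth by Cramer's rule. The remaining ingredients, namely $F_X$, $\sigma$, and the affine shifts by $P,Q$, are smooth. Hence $a^B_X$ is a smooth map $\mathbb R^{(H-k)(m+n)}\times\Omega_B\to\Omega_B$.

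\textbf{Output preservation.} Split $F_X(U',V')=U'_B\sigma(V_BX)+U'_C\sigma(V'_CX)$. Substituting the definition of $U'_B$ makes the first term equal to $F_X(U,V)-U'_C\sigma(V'_CX)$, so $F_X(U',V')=F_X(U,V)$.

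\textbf{Action axioms.} For the identity axiom, take $P=0$ and $Q=0$. Then $U'_C=U_C$ and $V'_C=V_C$, and $U'_B=U_B\sigma(V_BX)\sigma(V_BX)^{-1}=U_B$. For composition, apply $(P_1,Q_1)$ and then $(P_2,Q_2)$. The $C$-blocks become $U_C+P_1+P_2$ and $V_C+Q_1+Q_2$, and $V_B$ is untouched. The key observation is that $U'_B$ is the unique solution of $F_X(U',V')=F_X(U,V)$ given the $C$-blocks and $V_B$. The output is preserved at each step, so after both steps the $B$-block must equal the unique compensating value for the final $C$-blocks relative to the original output $F_X(U,V)$. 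That value is exactly the $B$-block produced by the single step $(P_1+P_2,Q_1+Q_2)$. Thus the map is an action of the additive group. Uniqueness of the compensating value is the one point requiring care, and it holds by invertibility of $\sigma(V_BX)$ on $\Omega_B$. One should also note that $U'_B$ depends on the current parameters only through $F_X(U,V)$, $V_B$, and the new $C$-blocks. \textbf{Freeness.} If $a^B_X((P,Q),(U,V))=(U,V)$, then comparing the $C$-blocks gives $P=0$ and $Q=0$.

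\textbf{Generation by one-parameter subgroups.} Let $E$ range over the standard basis of $\mathbb R^{(H-k)(m+n)}$. The maps $t\mapsto a^B_X(tE,\cdot)$ are one-parameter subgroups, and they commute because the group is abelian. Every $(P,Q)$ is a finite sum of multiples of basis elements, so by the action law $a^B_X((P,Q),\cdot)$ is the corresponding composition of these flows. A basis element touches a single entry of $P$ or $Q$, hence changes only one unit $c\in C$ together with $U_B$ (the $V_B$ blocks are fixed). That is at most $k+1$ hidden-unit blocks. The combined output of $B\cup\{c\}$ is $U_B\sigma(V_BX)+U_c\sigma(V_cX)$. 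The other $C$-units are unchanged and $F_X$ is preserved, so this combined output is also preserved.

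I expect no real obstacle. The only subtle point is the composition step, and the uniqueness argument above handles it without expanding the nested inverse formulas.
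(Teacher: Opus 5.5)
Your proof is correct and takes essentially the same route as the paper. The paper passes to the output coordinates $\Phi(U,V)=(Y,V,U_C)$, where the map is simply a translation of $(U_C,V_C)$ that fixes $Y$ and $V_B$. Your uniqueness-of-the-compensating-$U_B$ argument is exactly the injectivity of $\Phi$ that this relies on, and your direct checks of identity, composition, freeness, and the $(k+1)$-unit support of the coordinate one-parameter subgroups match the paper's reasoning.
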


% Successive transformations add their $(P,Q)$ increments while keeping $F_X$ and $V_B$ fixed, so the inverse uses $(-P,-Q)$. 

The corollary is of independent interest. 
It defines a closed-form data-dependent action on $\Omega_B$ for finite batches, extending the scope of the single-input constructions in past work without requiring the activation to preserve full rank \citep{zhao2023symmetries}.
The subgraph view reveals the mechanism that enables this family of symmetry: one unit changes, $k$ selected units compensate, and their combined output stays fixed. 
Proposition~\ref{prop:symmetry-from-subnetworks} then lifts each such symmetry to the full network on the corresponding invariant domain. 
Proofs and an illustrative example can be found in Appendix~\ref{app:small-subnetworks-sufficient}.

\section{Automatic Discovery of Parameter Space Symmetry}
\label{sec:automatic-discovery}

Using the infinitesimal characterization derived in Section~\ref{sec:infinitesimal-symmetry}, we construct an automated framework for discovering parameter space symmetries. We describe joint learning of matrix generators and a nonlinear action map, define the loss terms and regularizers, and show how infinitesimal errors control loss invariance and composition errors.

\subsection{Enforcing Loss Invariance and Group Axioms}
\label{sec:discovery-objectives}

Given a function $L$, our goal is to learn a group and its action on $\mathbb R^p$. We learn $r$ generators $h_1,\ldots,h_r\in\mathbb R^{s\times s}$ subject to $\|h_i\|_F=1$. These matrices define the group
\begin{equation}
\label{eq:learned-generated-group}
    G=\left\{\exp(t_mh_{i_m})\cdots\exp(t_1h_{i_1})
    : m\ge0,\ t_j\in\mathbb R,\ i_j\in\{1,\ldots,r\}\right\}.
\end{equation}
% The empty product is $I$. 
Group elements compose by matrix multiplication, and inverses reverse the factors and negate their coefficients. The matrix size $s$ and number of generators $r$ specify the search class.
% ; $r$ need not equal the dimension of the generated Lie algebra.

We parameterize a candidate action using a smooth neural network $b_X$ with a matrix input, enforcing identity exactly,
\begin{equation}
\label{eq:exact-identity-action}
    a_X(g,\theta)=\theta+\bigl(b_X(g,\theta)-b_X(I,\theta)\bigr).
\end{equation}
Thus $a_X(I,\theta)=\theta$ throughout training. Write $v_{X,h}(\theta)=D_ga_X|_{I,\theta}(h)$. We sample parameter--batch pairs $(\theta,X)$, products $g$ with varying lengths, orders, and signed coefficients, and test directions $h$ uniformly from the learned list. If the group $G$ is supplied, we fix $h_i$ to a Frobenius-orthonormal basis of $\mathfrak g$.

We define the following infinitesimal loss terms:
\begin{align}
\label{eq:discovery-invariance-loss}
    \mathcal L_{\mathrm{invariance}}
    &=\mathbb E_{X,\theta,h}\bigl\|D_\theta L\big|_{\theta,X}[v_{X,h}(\theta)]\bigr\|_2^2,\\
\label{eq:discovery-associativity-loss}
    \mathcal L_{\mathrm{assoc}}
    &=\mathbb E_{X,\theta,g,h}\bigl\|D_ga_X\big|_{g,\theta}(hg)-v_{X,h}(a_X(g,\theta))\bigr\|_2^2.
\end{align}
These terms encourage loss invariance and the composition condition in Theorem~\ref{thm:infinitesimal-associative}. Proposition~\ref{prop:learned-generators-suffice} shows that testing the learned generating directions suffices when the residuals vanish everywhere. To control finite transformations directly, we also use
\begin{equation}
\label{eq:discovery-finite-composition-loss}
    \mathcal L_{\mathrm{comp}}
    =\mathbb E_{X,\theta,g_1,g_2}
    \bigl\|a_X(g_2,a_X(g_1,\theta))-a_X(g_2g_1,\theta)\bigr\|_2^2.
\end{equation}
The batch $X$ stays fixed within every composition. We evaluate the residuals at both original and transformed parameter points. Derivatives are computed by automatic differentiation, with sampling details in Appendix~\ref{app:discovery-objectives}.

\subsection{Regularization}
\label{sec:discovery-regularization}

To prevent the learned group action from becoming trivial, we encourage infinitesimal parameter changes with mean squared norm close to $\beta^2$, where $\beta>0$. When learning multiple generators, we also encourage distinct parameter changes \citep{yang2023generative,ko2024learning}. We apply these regularizers to $v_i=v_{X,h_i}(\theta)$,
\begin{align}
\label{eq:generator-scale-regularizer}
    \mathcal L_{\mathrm{scale}}
    &=\frac1r\sum_{i=1}^r\left(\frac{\mathbb E_{X,\theta}\|v_i\|_2^2}{\beta^2}-1\right)^2,\\
\label{eq:generator-diversity-regularizer}
    \mathcal L_{\mathrm{div}}
    &=\frac{2}{r(r-1)\beta^4}\sum_{1\le i<j\le r}\bigl(\mathbb E_{X,\theta}\langle v_i,v_j\rangle\bigr)^2.
\end{align}
We set $\mathcal L_{\mathrm{div}}=0$ when $r=1$. These terms control the average size of the induced fields and penalize correlations between them. The expectations use the original parameter and batch samples.

The base training objective, before fixed normalization, is
\begin{equation}
\label{eq:objective-weighted}
\begin{aligned}
    \min_{b,h_1,\ldots,h_r}\quad
    &\gamma_1\mathcal L_{\mathrm{invariance}}
     +\gamma_2\mathcal L_{\mathrm{comp}}
     +\gamma_3\mathcal L_{\mathrm{assoc}}
    +\gamma_4\mathcal L_{\mathrm{scale}}
     +\gamma_5\mathcal L_{\mathrm{div}},
\end{aligned}
\end{equation}
where $\gamma_1,\gamma_3,\gamma_4,\gamma_5>0$ and $\gamma_2\ge0$.

We call this the hybrid objective when $\gamma_2>0$, since it combines the infinitesimal conditions with a direct test of finite composition. 
Setting $\gamma_2=0$ gives the infinitesimal-only objective. 
To assess the value of the infinitesimal conditions, we also train a finite-constraint baseline that checks output preservation and composition directly, with the same regularizers. 
Appendix~\ref{app:discovery-objectives} gives the exact variants and training normalization.

\subsection{Approximate Symmetry}
\label{sec:approximate-symmetry}

We prove that small identity and infinitesimal errors give small loss-preservation and composition errors over bounded group paths.
These bounds support using the infinitesimal characterizations as discovery objectives.

On a data batch $X$, let $a_X\colon G\times\mathbb R^p\to\mathbb R^p$ be a smooth map on a connected Lie subgroup $G\subseteq\mathrm{GL}(s,\mathbb R)$, with $L(\cdot,X) \in C^1$. 
% Use Euclidean norms on the parameter and loss spaces and the Frobenius norm on $\mathfrak g$. 
Suppose $\|a_X(I,\theta)-\theta\|\le\delta$ and the unsquared residuals in $\mathcal L_{\mathrm{invariance}}$ and $\mathcal L_{\mathrm{assoc}}$ are bounded by $\varepsilon_{\mathrm{loss}},\varepsilon_{\mathrm{act}}\ge0$, uniformly over $\theta\in\mathbb R^p$, $g\in G$, and unit $h\in\mathfrak g$. Equation~\eqref{eq:exact-identity-action} gives $\delta=0$. Assume also a bounded loss derivative and Lipschitz generators, with $M,K\ge0$ such that, for all $\theta,\vartheta\in\mathbb R^p$ and $h\in\mathfrak g$,
$$\|D_\theta L|_{\theta,X}\|_{\mathrm{op}}\le M,\quad \text{and}\quad
    \|v_{X,h}(\theta)-v_{X,h}(\vartheta)\|
    \le K\|h\|\,\|\theta-\vartheta\|.$$
For a piecewise $C^1$ path $\gamma\colon[0,1]\to G$ with finitely many pieces, define its length by
\[
    \ell=\int_0^1\|\dot\gamma(t)\gamma(t)^{-1}\|\,dt,
    \qquad F_K(\ell)=\int_0^\ell e^{Kt}\,dt.
\]
The length measures the accumulated group motion. For example, $\gamma(t)=\exp(th)$ has length $\|h\|$.

\begin{proposition}[Finite symmetry errors]
\label{prop:uniform-finite-defects}
Under the preceding assumptions, for every such path $\gamma$ from $I$ to $g_2$ and every $g_1\in G$, $\theta\in\mathbb R^p$,
\begin{align}
\label{eq:uniform-associativity-error}
    \|a_X(g_2,a_X(g_1,\theta))-a_X(g_2g_1,\theta)\|
    &\leq\delta e^{K\ell}+2\varepsilon_{\mathrm{act}}F_K(\ell),\\
\label{eq:uniform-loss-error}
    \|L(a_X(g_2,\theta),X)-L(\theta,X)\|
    &\leq M\delta+
    (\varepsilon_{\mathrm{loss}}+M\varepsilon_{\mathrm{act}})\ell.
\end{align}
\end{proposition}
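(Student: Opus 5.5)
Both bounds come from comparing curves in parameter space that are driven along $\gamma$, using the assoc residual as a perturbation of an ODE and Grönwall's inequality. Write $\xi(t)=\dot\gamma(t)\gamma(t)^{-1}\in\mathfrak g$ on each $C^1$ piece, so $\|\xi(t)\|$ integrates to $\ell$. Since $G$ is a matrix group, $\dot\gamma(t)=\xi(t)\gamma(t)=(dR_{\gamma(t)})_I\xi(t)$. Fix $g_1,\theta$ and define
\[
    \alpha(t)=a_X(\gamma(t),a_X(g_1,\theta)),\qquad \beta(t)=a_X(\gamma(t)g_1,\theta).
\]
For $\beta$, the derivative of $t\mapsto\gamma(t)g_1$ is $\xi(t)\gamma(t)g_1$. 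By the chain rule, $\dot\beta(t)=D_ga_X|_{\gamma(t)g_1,\theta}(\xi\,\gamma g_1)$. By linearity in $h$ and the assoc residual bound (scaled by $\|\xi\|$), this gives $\|\dot\beta(t)-v_{X,\xi(t)}(\beta(t))\|\le\varepsilon_{\mathrm{act}}\|\xi(t)\|$. The same argument applied at base point $a_X(g_1,\theta)$ gives $\|\dot\alpha(t)-v_{X,\xi(t)}(\alpha(t))\|\le\varepsilon_{\mathrm{act}}\|\xi(t)\|$.

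Set $e(t)=\|\alpha(t)-\beta(t)\|$, which is absolutely continuous. Using the Lipschitz bound on $v_{X,h}$ in $\theta$,
\[
    \tfrac{d}{dt}e(t)\le \|v_{X,\xi}(\alpha)-v_{X,\xi}(\beta)\|+2\varepsilon_{\mathrm{act}}\|\xi\|\le K\|\xi\|e+2\varepsilon_{\mathrm{act}}\|\xi\|
\]
almost everywhere. At $t=0$ we have $\gamma(0)=I$, so $e(0)=\|a_X(I,a_X(g_1,\theta))-a_X(g_1,\theta)\|\le\delta$. Reparametrize by arclength $s(t)=\int_0^t\|\xi\|$; the inequality becomes $de/ds\le Ke+2\varepsilon_{\mathrm{act}}$, valid across the finitely many pieces since $e$ is continuous. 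Grönwall's inequality then gives $e\le \delta e^{Ks}+2\varepsilon_{\mathrm{act}}\int_0^s e^{K(s-u)}du$, and $\int_0^s e^{K(s-u)}du=F_K(s)$. Evaluating at $s=\ell$ yields \eqref{eq:uniform-associativity-error}.

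For the loss bound, set $\eta(t)=a_X(\gamma(t),\theta)$. Split
\[
    \|L(\eta(1))-L(\theta)\|\le\|L(\eta(0))-L(\theta)\|+\|L(\eta(1))-L(\eta(0))\|.
\]
The first term is at most $M\delta$, by the mean value inequality on the segment and $\|\eta(0)-\theta\|\le\delta$. For the second term, use $\dot\eta(t)=D_ga_X|_{\gamma(t),\theta}(\xi\gamma)=v_{X,\xi}(\eta)+r(t)$ with $\|r\|\le\varepsilon_{\mathrm{act}}\|\xi\|$, as before. This gives
\[
    \|\tfrac{d}{dt}L(\eta)\|\le\|D_\theta L|_{\eta}\,v_{X,\xi}(\eta)\|+M\|r\|\le(\varepsilon_{\mathrm{loss}}+M\varepsilon_{\mathrm{act}})\|\xi\|,
\]
where the invariance residual bound is scaled by $\|\xi\|$. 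Integrating over $[0,1]$ gives \eqref{eq:uniform-loss-error}.

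The main obstacle is the bookkeeping that rewrites each trajectory's velocity as the learned field plus an assoc residual. One must check that the residual bound, stated for unit $h$ and all $g$, applies at the shifted points $\gamma(t)g_1$, with direction $\xi(t)$, and with base points $\theta$ and $a_X(g_1,\theta)$. This holds because the uniformity is over all $g$ and $\theta$, and both the residual and $v_{X,h}$ are linear in $h$, so the bound scales by $\|\xi(t)\|$. The case $\xi=0$ is trivial. One must also handle the piecewise $C^1$ path, which only requires using continuity of the compared curves at the finitely many breakpoints.
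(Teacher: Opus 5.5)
Your proposal is correct and follows essentially the same route as the paper. The paper first proves pathwise versions: Gr\"onwall on $\|u-w\|$ for the curves $a_X(\gamma(t)g_1,\theta)$ and $a_X(\gamma(t),a_X(g_1,\theta))$, and integration of $\frac{d}{dt}L(a_X(\gamma(t),\theta),X)$ split into the loss residual and the transport residual, followed by specialization to $\lambda(t)=K\|h(t)\|$. The only cosmetic difference is that the paper multiplies by the integrating factor $e^{-K\int_0^t\|h(s)\|\,ds}$ in the original time variable rather than reparametrizing by arclength, which avoids having to argue that $e$ factors through $s(t)$ when $\xi$ vanishes on an interval.
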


For the learned group, the same inequalities hold for a product $g_2$ from \eqref{eq:learned-generated-group} with $\ell=\sum_{j=1}^m|t_j|$, using the corresponding residual and Lipschitz bounds on the normalized generators $h_i$. 
Appendix~\ref{app:associativity-stability} gives the proofs and versions using bounds along individual paths.

\section{Experiments}
\label{sec:experiments}

We use this framework to find parameter symmetries in several common architectures. We first examine known data-independent symmetries and recover the structure of higher-dimensional symmetry groups. We then study nonlinear data-dependent actions and transformations on non-contiguous layers. Finally, we use subnetworks to study larger models, including pretrained Pythia transformers.

The direct-map experiments jointly learn one generator and a nonlinear action, preserving complete network or subnetwork outputs $F_X(\theta)$. Independent evaluations retain all five small-network seeds and all three Pythia seeds. Numerical summaries are medians across seeds of within-seed $95$th-percentile errors; motion uses medians. Appendix~\ref{app:experimental-protocol} describes the sampling and additional results.

\subsection{Learned data-independent symmetries}
\label{sec:experimental-recovery}

We first validate that our method can learn generators for known data-independent symmetries in neural networks. 
We consider a two-layer linear network $F(U,V)=UV$, with $U,V\in\mathbb R^{2\times2}$, and a bias-free ReLU network with widths $(2,4,2)$. 
Their reference symmetries are hidden-basis changes and positive unit rescalings. 
The learned maps approximately preserve outputs and compose consistently while moving the parameters. Their fields follow the reference directions, indicating successful recovery of one-parameter actions (Appendix~\ref{app:experimental-structure}).

An affine-field solver, following \citet{moskalev2022liegg}, additionally recovers symmetry algebras of up to eight dimensions without receiving the reference generators or their dimension. It recovers noncommuting transformations in linear networks, for which reversing the order of two transformations can change the final parameters. The ReLU rescalings commute. This separate specialization and exact verification of its reconstructed generators are described in Appendix~\ref{app:frontier-recovery}.

\subsection{Learned data-dependent symmetries}
\label{sec:experimental-compensation-main}

We next study data-dependent transformations in a two-layer sigmoid network, $F_X(U,V)=U\sigma(VX)$. We fix the incoming weights of $k$ compensating units and the outgoing weights of one moving unit. The action changes the moving unit's incoming weights and the compensators' output weights. 
The analytic reference (Corollary~\ref{cor:subnet-compensating-action}) is withheld from training.

\begin{figure}[t]%[!htbp]
\centering
\begin{minipage}{0.32\linewidth}\centering
\includegraphics[width=\linewidth]{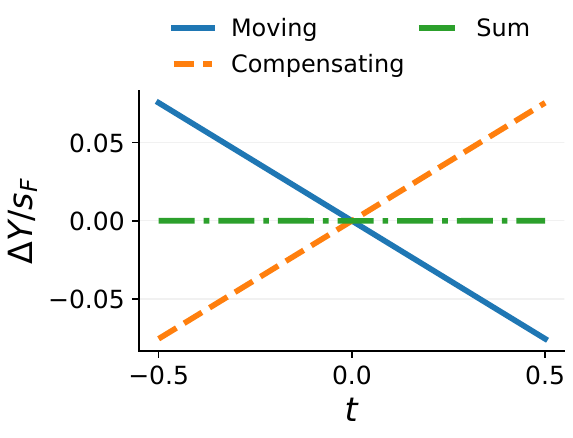}
\small (a) Nonlinear compensation
\end{minipage}\hfill
\begin{minipage}{0.32\linewidth}\centering
\includegraphics[width=\linewidth]{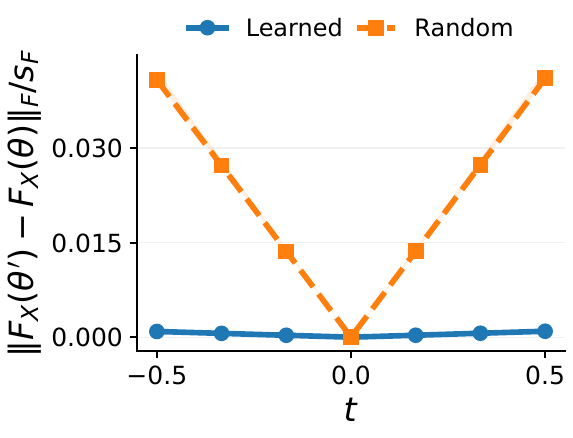}
\small (b) Output along tanh curves
\end{minipage}\hfill
\begin{minipage}{0.32\linewidth}\centering
\includegraphics[width=\linewidth]{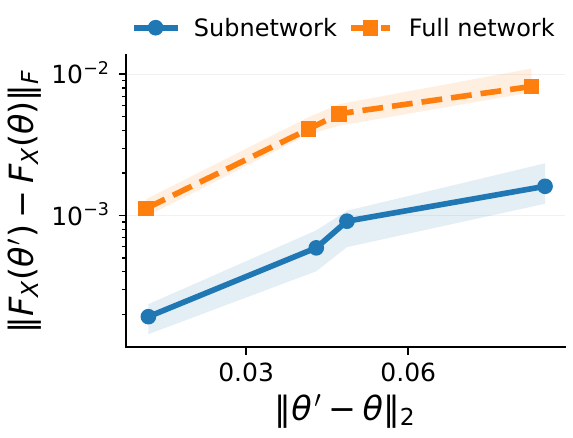}
\small (c) Subnetwork discovery
\end{minipage}
\caption{
Learned compensation, finite preservation, and subnetwork discovery. 
% In (a,b), $\theta'=a_X(\exp(th),\theta)$ with signed coefficient $t$. 
(a) Opposing changes $\Delta Y_j=Y'_j-Y_j$ in unit contributions $Y_j=u_j\sigma(v_jX)$ nearly cancel. $s_F$ is the fixed root-mean-square output norm. (b) Tanh output drift is smaller compared to random translations. (c) Subnetwork discovery gives smaller full-network output error at similar displacement. 
% Panel (b) aggregates median drift and (c) $95$th-percentile error across five seeds; shading spans their seed summaries.
}
\label{fig:experimental-compensation}
\label{fig:experimental-orbits}
\label{fig:experimental-paired-lifting}
\end{figure}

Our method learns to preserve outputs through nonlinear compensation. In the scalar, single-input case, the transformed parameter block changes by a median of $12\%$. The two units make opposing output changes (Figure~\ref{fig:experimental-compensation}(a)), with a very small cancellation error.
% with a cancellation error of $0.060\%$ relative to the sum of their magnitudes. 
The same transformed weights produce much larger changes on fresh inputs, showing that preservation is specific to the conditioning data. Appendix~\ref{app:experimental-comparisons} gives the comparison and two-input results.

The method also learns coordinated transformations on non-contiguous layers. In a three-layer tanh network with widths $(4,16,16,2)$ trained on synthetic regression, we change the first and last weight matrices while fixing the middle layer. On $24$ fixed inputs, output drift along generated curves $\theta(t)=a_X(\exp(th),\theta_0)$ is much smaller than for random translations calibrated to similar parameter motion (Figure~\ref{fig:experimental-orbits}(b)). Here $\theta_0$, $X$, and $h$ stay fixed along each curve.

The infinitesimal conditions improve the consistency of finite transformations. We divide the discrepancy between successive transformations and their group product by the summed displacements of the two steps. In scalar sigmoid, the hybrid objective reduces this composition error from $0.306\%$ under finite constraints to $0.037\%$, at similar motion. Both objectives share the architecture, regularizers, and final step budget, with separate validation-based tuning.

\subsection{Symmetry in subnetworks and pretrained transformers}
\label{sec:experimental-lifting}
\label{sec:experimental-pythia}

Subnetwork discovery preserves full-network outputs more accurately than discovery over all parameters in our paired comparison. We learn an action on $10$ weights incident to one hidden unit in a ReLU network with widths $(2,8,8,1)$, preserving its full boundary output as required by Proposition~\ref{prop:symmetry-from-subnetworks}. At similar parameter displacement, it gives $5.8$ times smaller full-network output error than discovery over all $88$ parameters (Figure~\ref{fig:experimental-paired-lifting}(c)). Both maps use identical evaluation parameters and inputs. This compares the smaller support and boundary-output target together.

We also learn symmetries in pretrained Pythia-160M and Pythia-1B feedforward blocks. For a fixed $k$-token prefix, the action changes one unit's incoming weights and bias and $k$ compensators' outgoing columns, keeping all other parameters fixed. We train on their combined boundary output, then install the edits in float32 and evaluate the complete model at the original checkpoint. 
% The analytic compensation formula is used only as a reference.

\begin{figure}[t]%[!htbp]
\centering
\begin{minipage}{0.485\linewidth}\centering
\includegraphics[width=\linewidth]{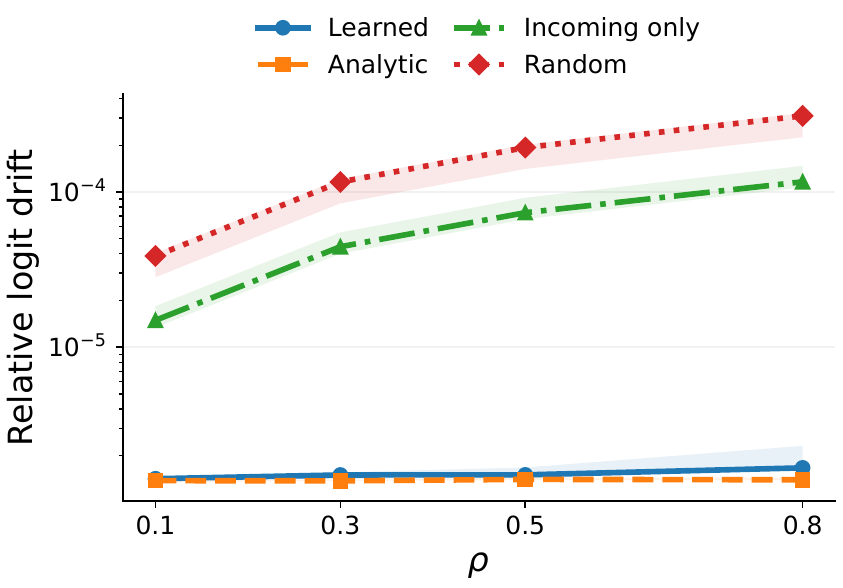}
\small (a) Preserving protected outputs
\end{minipage}\hfill
\begin{minipage}{0.485\linewidth}\centering
\includegraphics[width=\linewidth]{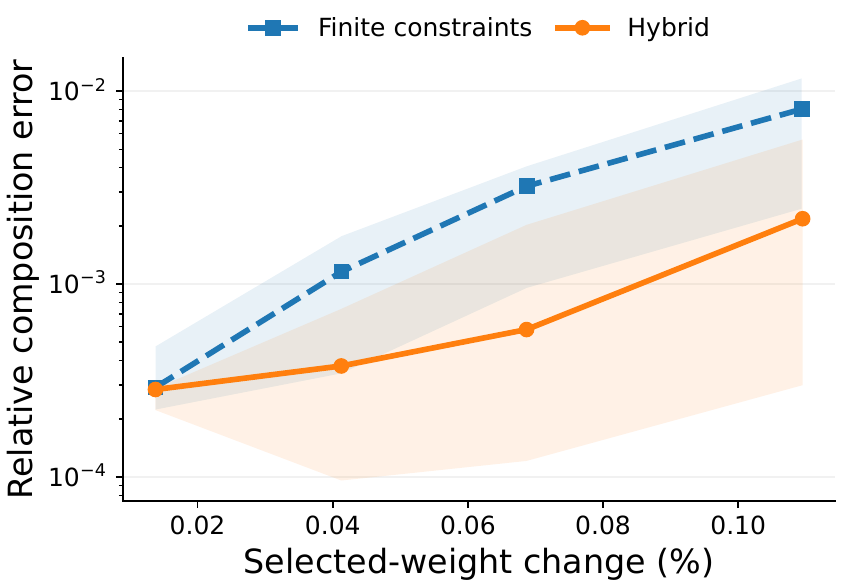}
\small (b) Composing learned transformations
\end{minipage}
\caption{Learned symmetry in Pythia-1B, layer $3$, at the original checkpoint. (a) Relative drift $\|Z'-Z\|_F/\|Z\|_F$ of original and changed logits on one fixed $16$-token prefix. $\rho$ bounds $|t|$ in $\exp(th)$. Learned uses the hybrid objective; analytic solves for compensation; incoming-only omits it. Analytic and random references are motion-calibrated. (b) Composition error versus $100\|\theta'_S-\theta_S\|_2/\|\theta_S\|_2$ on edited weights $S$. }
\label{fig:pythia-controls}
\end{figure}

Learned compensation sharply reduces protected-output drift in Pythia-1B (Figure~\ref{fig:pythia-controls}(a)). At coefficient radius $\rho=0.5$, the selected weights change by a median of $0.069\%$. Protected-logit drift is approximately $129$ times smaller than for a random translation at similar motion. Keeping the learned incoming update while removing compensation increases drift by $49$ times, despite reducing the total displacement.

Infinitesimal training also improves composition after the weights are installed. At nearly identical motion, the hybrid objective reduces stored-weight composition error from $0.321\%$ to $0.058\%$ of the two-step displacement (Figure~\ref{fig:pythia-controls}(b)). 
Each update is rounded to float32 before the next action. 
Pythia-160M shows the same benefits of learned compensation and improved composition (Appendix~\ref{app:experimental-pythia-details}).

\section{Discussion}
In this work, we introduce a framework for formalizing and discovering parameter space symmetries in neural networks. Infinitesimal characterizations and finite-error bounds provide objectives for learning nonlinear actions. Subnetwork lifting and decomposition connect this differential structure to the modular structure of neural networks.
Mathematically, the existence and number of symmetries in neural network parameter spaces remain open questions. 
Whether the number of symmetries is affected by existence of symmetry in data or changes during training are also interesting directions.
The discovered symmetries could be useful in downstream applications such as weight space learning, where architectures respecting known parameter symmetries already process neural network weights effectively \citep{zhang2023neural,lim2024graph,kalogeropoulos2024scale,tran2024monomial}. 
Leveraging the learned symmetries via equivariant architectures \citep{finzi2021practical} or frame averaging \citep{puny2021frame} can potentially improve current approaches.% that are limited to known symmetries.

% \clearpage
\subsection*{AI use statement}

In this work, we used generative AI tools for checking proof correctness, implementing experiment code, and reviewing drafts of the manuscript. 
We have reviewed all AI-assisted work. We take responsibility for the final content of this work.

% \subsection*{Reproducibility statement}

% (This section is \textbf{recommended} and does not count toward the page limit.)

% It is important that the work published in ICLR is reproducible. Authors are
% strongly encouraged to include a paragraph-long Reproducibility Statement at the
% end of the main text (before references) to discuss the efforts that have been
% made to ensure reproducibility. This paragraph should not itself describe
% details needed for reproducing the results, but rather reference the parts of
% the main paper, appendix, and supplemental materials that will help with
% reproducibility. For example, for novel models or algorithms, a link to an
% anonymous downloadable source code can be submitted as supplementary materials;
% for theoretical results, clear explanations of any assumptions and a complete
% proof of the claims can be included in the appendix; for any datasets used in
% the experiments, a complete description of the data processing steps can be
% provided in the supplementary materials. Each of the above are examples of
% things that can be referenced in the reproducibility statement.

% \subsubsection*{Acknowledgments}
% This work was supported in part by the U.S. Army Research Office under Army-ECASE award W911NF-07-R-0003-03, the U.S. Department Of Energy, Office of Science, IARPA HAYSTAC Program, and NSF Grants \#2205093, \#2146343, \#2134274, \#2442658, \#2134178, CDC-RFA-FT-23-0069, DARPA AIE FoundSci and DARPA YFA. 

\bibliography{iclr2027_conference}
\bibliographystyle{iclr2027_conference}

\clearpage
\appendix
\section*{Appendix}

\section{Infinitesimal Characterizations of Parameter Symmetry}
\label{appendix:infinitesimal-symmetry-examples}

This section proves the infinitesimal characterizations in Section~\ref{sec:infinitesimal-symmetry}, recalls the integration result used there, and proves the finite-error guarantees following the discovery objectives in Section~\ref{sec:approximate-symmetry}. Examples are given in Appendix~\ref{app:infinitesimal-examples}.

\subsection{Infinitesimal Formulation for Loss Invariance}
\label{app:infinitesimal-loss-invariance}

\begin{manualtheorem}{\ref{thm:infinitesimal-symmetry}}
Let $G$ be a connected Lie group, let $\Par$ be a smooth manifold, and let $a\colon\Data^d\to(G\times\Par\to\Par)$ be a group action such that each $a_X$ is smooth. Let $L\colon\Par\times\Data^d\to\mathbb R^d$ satisfy $L(\cdot,X)\in C^1(\Par,\mathbb R^d)$ for every $X$. Then $a$ is a parameter space symmetry of $L$ if and only if
\[
    D_\theta L\big|_{\theta,X}
    \bigl(D_ga_X\big|_{I,\theta}(h)\bigr)=0
\]
for every $\theta\in\Par$, $X\in\Data^d$, and $h\in\mathfrak g$.
\end{manualtheorem}

\begin{proof}
Fix $X$ and $h\in\mathfrak g$, and consider the smooth curve $\gamma(t)=a_X(\exp(th),\theta)$. If $a$ is a symmetry of $L$, then
\[
    L(\gamma(t),X)=L(\theta,X)
    \qquad\text{for all }t\in\mathbb R.
\]
Differentiating at $t=0$, using $\gamma(0)=\theta$ and $\left.\frac{d}{dt}\right|_0\exp(th)=h$, gives
\[
    0=D_\theta L\big|_{\theta,X}
      \left[\left.\frac{d\gamma(t)}{dt}\right|_0\right]
     =D_\theta L\big|_{\theta,X}
      \bigl[D_ga_X\big|_{I,\theta}(h)\bigr].
\]

Conversely, suppose \eqref{eq:infinitesimal-symmetry} holds at every parameter point. The action law gives
\[
    \gamma(t+s)=a_X(\exp(sh),\gamma(t)).
\]
Differentiating with respect to $s$ at $0$ yields $\dot\gamma(t)=D_ga_X|_{I,\gamma(t)}(h)$. The chain rule therefore gives
\[
    \frac{d}{dt}L(\gamma(t),X)
    =D_\theta L\big|_{\gamma(t),X}
      \bigl[D_ga_X\big|_{I,\gamma(t)}(h)\bigr]=0.
\]
Consequently every exponential acts loss-invariantly. The subgroup generated by the image of the exponential map contains an identity neighborhood and is therefore open. Its other cosets are open as well, so the subgroup is also closed. Since $G$ is connected, this subgroup is all of $G$. Every $g\in G$ is thus a finite product of exponentials. Repeated use of the action law gives $L(a_X(g,\theta),X)=L(\theta,X)$ for every $g$. The argument holds for every $X$, proving the converse.
\end{proof}

\subsection{Infinitesimal formulation for the group-action axioms}
\label{app:infinitesimal-action}

Throughout this subsection, Lie groups and manifolds are finite-dimensional, and manifolds are without boundary. Smoothness of $a$ means smoothness of $a_X$ in $(g,\theta)$ for each fixed data batch $X$.

\begin{manualtheorem}{\ref{thm:infinitesimal-associative}}
Let $G$ be a connected Lie group, let $\Par$ be a smooth manifold, and let $a\colon\Data^d\to(G\times\Par\to\Par)$ be a smooth map satisfying $a_X(I,\theta)=\theta$ for every $X$ and $\theta$. Define $v_{X,h}(\theta)=D_g a_X|_{I,\theta}(h)$. Then $a_X$ is a group action for every $X$ if and only if
\[
    D_g a_X\big|_{g,\theta}\bigl((dR_g)_Ih\bigr)
    =v_{X,h}\bigl(a_X(g,\theta)\bigr)
\]
for all $g,\theta,X,h$.
\end{manualtheorem}

\begin{proof}
Fix $X$ and suppress it from the notation. First suppose that $a$ is a group action. Associativity gives
\[
    a(\exp(th)g,\theta)
    =a(\exp(th),a(g,\theta)).
\]
Differentiating at $t=0$, and using $\left.\frac{d}{dt}\right|_{0}\exp(th)g=(dR_g)_Ih$, gives
\[
    D_g a\big|_{g,\theta}\bigl((dR_g)_Ih\bigr)
    =D_g a\big|_{I,a(g,\theta)}(h)
    =v_h(a(g,\theta)).
\]

Conversely, suppose that identity and the differential condition hold. For fixed $h$, $g$, and $\theta$, define
\[
    u(t)=a(\exp(th)g,\theta),\qquad
    w(t)=a(\exp(th),a(g,\theta)).
\]
Both curves are defined for every $t\in\mathbb R$. Since
\[
    \frac{d}{dt}\exp(th)g=(dR_{\exp(th)g})_Ih,
\]
the differential condition implies
\[
    \dot u(t)=v_h(u(t)),\qquad \dot w(t)=v_h(w(t)).
\]
The identity axiom gives $u(0)=w(0)=a(g,\theta)$. Uniqueness of integral curves of a smooth vector field therefore yields
\begin{equation}
\label{eq:action-one-parameter-factor}
    a(\exp(th)g,\theta)
    =a(\exp(th),a(g,\theta))
    \qquad\text{for all }t\in\mathbb R.
\end{equation}
No completeness assumption is needed in this step: the two integral curves are already globally defined by the given map $a$.

As shown in Appendix~\ref{app:infinitesimal-loss-invariance}, every element of a connected Lie group is a finite product of exponentials.
Write $g_2=\exp(h_m)\cdots\exp(h_1)$. Repeated application of \eqref{eq:action-one-parameter-factor} shows that both $a(g_2g_1,\theta)$ and $a(g_2,a(g_1,\theta))$ equal the successive application of $a(\exp(h_1),\cdot),\ldots,a(\exp(h_m),\cdot)$ to $a(g_1,\theta)$. This proves associativity.

Associativity and identity imply
\[
    a(g^{-1},a(g,\theta))=\theta,
    \qquad a(g,a(g^{-1},\theta))=\theta.
\]
Thus $a(g,\cdot)$ and $a(g^{-1},\cdot)$ are mutually inverse smooth maps. Finally, both sides of the differential condition are linear in $h$, so checking a basis suffices.
\end{proof}

Every $v_{X,h}$ is complete, with flow $t\mapsto a_X(\exp(th),\theta)$. Here completeness follows from the globally defined map.

\paragraph{Testing generating directions.} The group in Section~\ref{sec:discovery-objectives} is determined by finite products of learned matrix exponentials. The following consequence of the preceding proof shows why testing their generating directions is sufficient.

Let $G$ be a Lie group generated by $\{\exp(th_i):t\in\mathbb R,\ 1\leq i\leq r\}$, with $h_i\in\mathfrak g$. For each fixed $X$, let $a_X\colon G\times\Par\to\Par$ be smooth with $a_X(I,\theta)=\theta$, set $v_{X,h}(\theta)=D_ga_X|_{I,\theta}(h)$, and let $L(\cdot,X)$ be $C^1$.

\begin{proposition}[Symmetry from generating directions]
\label{prop:learned-generators-suffice}
In this setup, $a_X$ is a loss-preserving group action for every $X$ if and only if, for every $g,\theta,X$ and $i\in\{1,\ldots,r\}$,
\begin{equation}
\label{eq:learned-generator-transport}
\begin{aligned}
    D_ga_X\big|_{g,\theta}\bigl((dR_g)_Ih_i\bigr)
    &=v_{X,h_i}(a_X(g,\theta)),\\
    D_\theta L\big|_{\theta,X}[v_{X,h_i}(\theta)]&=0.
\end{aligned}
\end{equation}
\end{proposition}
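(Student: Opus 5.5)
The forward direction follows by differentiation, as in the proofs of Theorems~\ref{thm:infinitesimal-symmetry} and~\ref{thm:infinitesimal-associative}. If $a_X$ is a group action, fix $g,\theta$ and differentiate $a_X(\exp(th_i)g,\theta)=a_X(\exp(th_i),a_X(g,\theta))$ at $t=0$; this gives the first line of \eqref{eq:learned-generator-transport}. If $a_X$ also preserves $L$, differentiate $L(a_X(\exp(th_i),\theta),X)=L(\theta,X)$ at $t=0$; this gives the second line. Both steps only specialize $h$ to the generators $h_i$, so no new ideas are needed.

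For the converse, fix $X$ and suppress it. Take $g,\theta$ and an index $i$, and define two curves
\[u(t)=a(\exp(th_i)g,\theta),\qquad w(t)=a(\exp(th_i),a(g,\theta)).\]
Since $\frac{d}{dt}\exp(th_i)g=(dR_{\exp(th_i)g})_Ih_i$, the transport condition evaluated at the group element $\exp(th_i)g$ gives $\dot u=v_{h_i}(u)$. The same condition evaluated at $\exp(th_i)$ and base point $a(g,\theta)$ gives $\dot w=v_{h_i}(w)$. Both curves start at $a(g,\theta)$ because $a(I,\cdot)=\id$. By uniqueness of integral curves of the smooth field $v_{h_i}$, and because both curves are globally defined, we get $a(\exp(th_i)g,\theta)=a(\exp(th_i),a(g,\theta))$ for all $t$.

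Next, every $g_2\in G$ is a finite product $\exp(t_m h_{i_m})\cdots\exp(t_1h_{i_1})$; this holds here by hypothesis, since $G$ is generated by these one-parameter subgroups, and the exponential-generation argument of Theorem~\ref{thm:infinitesimal-symmetry} is not needed. Induction on $m$, using the factorization above, shows that both $a(g_2g_1,\theta)$ and $a(g_2,a(g_1,\theta))$ equal the successive application of the maps $a(\exp(t_jh_{i_j}),\cdot)$ to $a(g_1,\theta)$. This proves associativity.

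For loss invariance, use the same curve argument with $g=I$. The curve $\gamma(t)=a(\exp(th_i),\theta)$ satisfies $\dot\gamma=v_{h_i}(\gamma)$, so the second line of \eqref{eq:learned-generator-transport}, applied at every point $\gamma(t)$, gives $\frac{d}{dt}L(\gamma(t),X)=0$. Each generator exponential therefore preserves $L$. Writing $g_2$ as a product of such exponentials and using the associativity just proved, $a(g_2,\theta)$ is the successive application of these loss-preserving maps, so $L(a(g_2,\theta),X)=L(\theta,X)$.

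The main subtlety is that the conditions are imposed only on the $h_i$, not on all of $\mathfrak g$, so we cannot invoke Theorem~\ref{thm:infinitesimal-associative} directly. The plan avoids this by never using linearity in $h$ beyond the generators: every step integrates along a single generator direction. The second point to check is the equation $\dot u=v_{h_i}(u)$. This requires the transport identity at arbitrary group elements $\exp(th_i)g$, which the hypothesis supplies because it is assumed for every $g$, with right translation $dR$ matching the left multiplication by $\exp(th_i)$.
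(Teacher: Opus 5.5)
Your proposal is correct and follows the paper's proof essentially step for step. Both arguments get the forward direction by differentiating the two identities at $t=0$. For the converse, both compare the integral curves $u$ and $w$ of $v_{h_i}$ to obtain the one-generator factorization, extend it to associativity over products of generator exponentials, and integrate the loss condition along each generator flow.
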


The selected directions may generate additional Lie algebra directions through commutators. The proposition justifies testing the selected list while the group argument ranges over all its finite products.

\begin{proof}
Fix $X$ and suppress it from the notation. For a loss-preserving action, differentiating $a(\exp(th_i)g,\theta)=a(\exp(th_i),a(g,\theta))$ and $L(a(\exp(th_i),\theta),X)=L(\theta,X)$ at $t=0$ gives the two conditions.

Conversely, suppose they hold. For fixed $i,g,\theta$, define
\[
    u(t)=a(\exp(th_i)g,\theta),\qquad
    w(t)=a(\exp(th_i),a(g,\theta)).
\]
The first condition gives $\dot u=v_{h_i}(u)$ and $\dot w=v_{h_i}(w)$ for all $t\in\mathbb R$. Identity gives $u(0)=w(0)=a(g,\theta)$, so uniqueness of integral curves implies
\[
    a(\exp(th_i)g,\theta)=a(\exp(th_i),a(g,\theta)).
\]
For $g_2=\exp(t_mh_{i_m})\cdots\exp(t_1h_{i_1})$, repeated application of this equality expresses both $a(g_2g_1,\theta)$ and $a(g_2,a(g_1,\theta))$ as the same successive transformations of $a(g_1,\theta)$. Thus $a$ satisfies associativity. Together with identity, this gives a group action whose inverse maps are $a(g^{-1},\cdot)$.

The second condition and the chain rule give
\[
    \frac{d}{dt}L(a(\exp(th_i),\theta),X)
    =D_\theta L\big|_{a(\exp(th_i),\theta),X}
      [v_{h_i}(a(\exp(th_i),\theta))]=0.
\]
Each generating one-parameter action preserves $L$, and finite composition proves invariance on $G$.
\end{proof}

\paragraph{Lie-bracket compatibility.} For smooth vector fields we use the convention $[u,w]f=u(wf)-w(uf)$. For a smooth left action, the generators satisfy
\begin{align}
\label{eq:generator-bracket-compatibility}
    [v_{X,h_1},v_{X,h_2}]&=-v_{X,[h_1,h_2]},\\
\label{eq:generator-bracket-coordinates}
    D_\theta v_{X,h_1}[v_{X,h_2}]
    -D_\theta v_{X,h_2}[v_{X,h_1}]&=v_{X,[h_1,h_2]},
\end{align}
where the second identity is in local coordinates. To see this, fix $X$ and suppress it from the notation. Differentiate $a(g,a(\exp(th),\theta))=a(\exp(t\,\mathrm{Ad}_g h),a(g,\theta))$ at $t=0$ to obtain
\begin{equation}
\label{eq:generator-adjoint-equivariance}
    D_\theta a\big|_{g,\theta}[v_h(\theta)]
    =v_{\mathrm{Ad}_g h}(a(g,\theta)).
\end{equation}
Next take $g=\exp(th_1)$ and $h=h_2$ and differentiate at $t=0$. Using $\left.\frac{d}{dt}\right|_0\mathrm{Ad}_{\exp(th_1)}h_2=[h_1,h_2]$ gives $D_\theta v_{h_1}[v_{h_2}]=v_{[h_1,h_2]}+D_\theta v_{h_2}[v_{h_1}]$, which proves both identities. These relations describe how the learned matrix algebra acts on parameter space.

\paragraph{Constructing an action from infinitesimal generators.} For a simply connected Lie group $G$, the Lie--Palais theorem integrates a linear family $h\mapsto v_{X,h}$ of smooth vector fields into a unique smooth left action if every $v_{X,h}$ is complete and \eqref{eq:generator-bracket-compatibility} holds \citep{palais1957global,tuynman2013integrating}. Completeness means that each integral curve exists for every real time. By Theorem~\ref{thm:infinitesimal-symmetry}, the resulting action preserves a $C^1$ loss exactly when $D_\theta L|_{\theta,X}[v_{X,h}(\theta)]=0$ for every $\theta,X,h$.

\paragraph{An integrated action need not equal a prescribed map.} For $G=(\mathbb R,+)$ and $\Par=\mathbb R$, the map $a(t,\theta)=\theta+t+t^2$ satisfies identity and has complete, compatible generators $v_h(\theta)=h$, but $a(s+t,\theta)-a(s,a(t,\theta))=2st$. Integrating its generators instead gives $A(t,\theta)=\theta+t$. Thus a directly parameterized map still requires \eqref{eq:infinitesimal-action-transport}.

\subsection{Examples}
\label{sec:infinitesimal-symmetry-examples}
\label{app:infinitesimal-examples}

Below are examples of familiar parameter space symmetry and their infinitesimal form. 

\subsubsection{Linear action of matrix groups}
For a linear action, the infinitesimal parameter change is a matrix--vector product. Let $G\subseteq\GL(n)$ be a connected matrix Lie group acting on $\Par=\mathbb R^n$ by $a_x(g,\theta)=g\theta$. Then $v_{x,h}(\theta)=h\theta$, and Theorem~\ref{thm:infinitesimal-symmetry} characterizes invariance of a $C^1$ scalar loss by
\begin{equation}
\label{eq:infinitesimal-symmetry-coordinates}
    D_\theta L\big|_{\theta,x}[h\theta]
    =\sum_{i,j=1}^n
      \frac{\partial L}{\partial\theta_i}(\theta,x)\,h_{ij}\theta_j
    =0
\end{equation}
for every $\theta,x,h$. For vector-valued losses, the condition holds componentwise. 

This has the same algebraic expression as the infinitesimal criterion in Theorem 3.1 of \citet{moskalev2022liegg}, with transformations acting on parameters rather than data. Their theorem characterizes preservation of a regular zero set, whereas ours characterizes preservation of the value of $L$ throughout parameter space.

\subsubsection{Homogeneous two-layer neural network}
Rescaling one weight can be compensated by an inverse rescaling of the next. Consider $L((w_1,w_2),x)=w_2\sigma(w_1x)$, where $\sigma(\alpha z)=\alpha^c\sigma(z)$ for every $\alpha>0$ and $z\in\mathbb R$, with $c>0$. The group $(\mathbb R_{>0},\times)$ acts by
\[
    a_x(\alpha,(w_1,w_2))
    =(\alpha w_1,\alpha^{-c}w_2).
\]
Multiplication of the scaling factors gives the action law, and homogeneity gives $L(a_x(\alpha,(w_1,w_2)),x)=\alpha^{-c}w_2\alpha^c\sigma(w_1x)=L((w_1,w_2),x)$. Differentiating at $\alpha=e^t$, $t=0$, gives
\[
    v_{x,1}(w_1,w_2)=(w_1,-cw_2),
    \qquad
    w_1\frac{\partial L}{\partial w_1}
    -cw_2\frac{\partial L}{\partial w_2}=0.
\]
Finite invariance uses only homogeneity. The derivative identity holds wherever $L(\cdot,x)$ is differentiable, and the global characterization applies when its stated $C^1$ assumption holds.

\subsection{Stability of finite transformations}
\label{app:associativity-stability}
\label{app:finite-symmetry-errors}

We prove and refine the finite-error guarantees in Section~\ref{sec:approximate-symmetry}. The candidate map may have nonzero identity and composition errors. The following estimates track the differential residuals along the paths used to compare finite transformations.

Fix $X$, let $G$ be a Lie group, and let $a_X\colon G\times\mathbb R^p\to\mathbb R^p$ be smooth. Using the canonical identification of Euclidean tangent spaces, set $v_{X,h}(\theta)=D_ga_X|_{I,\theta}(h)$. Let $L(\cdot,X)$ be $C^1$. The differential errors penalized in Section~\ref{sec:discovery-objectives} are
\begin{align}
\label{eq:action-transport-residual}
    r_X(g,\theta;h)
    &=D_ga_X\big|_{g,\theta}\bigl((dR_g)_Ih\bigr)-v_{X,h}(a_X(g,\theta)),\\
\label{eq:loss-infinitesimal-residual}
    s_X(\theta;h)
    &=D_\theta L\big|_{\theta,X}[v_{X,h}(\theta)].
\end{align}
For matrix groups, $(dR_g)_Ih=hg$. Euclidean norms are used after flattening matrix-valued outputs.

Let $\gamma\colon[0,1]\to G$ be a piecewise $C^1$ path with finitely many pieces, from $I$ to $g_2$. Define $h(t)$ by $\dot\gamma(t)=(dR_{\gamma(t)})_Ih(t)$ wherever the derivative exists. For fixed $g_1\in G$ and $\theta\in\mathbb R^p$, write
\[
    u(t)=a_X(\gamma(t)g_1,\theta),\qquad
    w(t)=a_X(\gamma(t),a_X(g_1,\theta)).
\]
The next proposition bounds the difference between applying $g_1$ and $g_2$ successively and applying their product. It only requires control along these two trajectories.

\begin{proposition}[Transport residual controls associativity error]
\label{prop:pathwise-associativity}
In the preceding setup, suppose $\lambda\colon[0,1]\to[0,\infty)$ is integrable and
\[
    \|v_{X,h(t)}(u(t))-v_{X,h(t)}(w(t))\|
    \leq\lambda(t)\|u(t)-w(t)\|
\]
almost everywhere. Set $\Lambda(t)=\int_0^t\lambda(s)\,ds$. Then
\begin{align*}
    \|u(1)-w(1)\|
    &\leq e^{\Lambda(1)}\|u(0)-w(0)\|\\
    &\quad+\int_0^1 e^{\Lambda(1)-\Lambda(s)}
       \Bigl(\|r_X(\gamma(s)g_1,\theta;h(s))\|\\
    &\hspace{39mm}
       +\|r_X(\gamma(s),a_X(g_1,\theta);h(s))\|\Bigr)\,ds.
\end{align*}
\end{proposition}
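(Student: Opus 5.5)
The argument is a Grönwall estimate for the difference $e(t)=u(t)-w(t)$. We show that $u$ and $w$ both approximately solve the same time-dependent ODE $\dot z=v_{X,h(t)}(z)$, with the transport residual $r_X$ as forcing. Then we integrate the resulting differential inequality for $\|e\|$.

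\emph{Step 1: derivatives of $u$ and $w$.} On each $C^1$ piece of $\gamma$ we compute both derivatives with the chain rule. For $u$, right multiplication gives
\[
\tfrac{d}{dt}\bigl(\gamma(t)g_1\bigr)=(dR_{g_1})(dR_{\gamma(t)})_Ih(t)=(dR_{\gamma(t)g_1})_Ih(t),
\]
using $R_{g_1}\circ R_{\gamma}=R_{\gamma g_1}$. Hence
\[
\dot u(t)=D_ga_X\big|_{\gamma(t)g_1,\theta}\bigl((dR_{\gamma(t)g_1})_Ih(t)\bigr)=v_{X,h(t)}(u(t))+r_X(\gamma(t)g_1,\theta;h(t)),
\]
directly from the definition \eqref{eq:action-transport-residual}. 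Similarly,
\[
\dot w(t)=v_{X,h(t)}(w(t))+r_X(\gamma(t),a_X(g_1,\theta);h(t)).
\]
Subtracting gives
\[
\dot e=\bigl[v_{X,h}(u)-v_{X,h}(w)\bigr]+(r_1-r_2),
\]
where $r_1,r_2$ denote the two residuals.

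\emph{Step 2: scalar inequality.} The function $\|e(t)\|$ is absolutely continuous, because $e$ is piecewise $C^1$ and continuous across breakpoints. Almost everywhere, $\frac{d}{dt}\|e\|\le\|\dot e\|$. Combining this with the Lipschitz hypothesis gives
\[
\tfrac{d}{dt}\|e\|\le\lambda(t)\|e\|+\|r_1(t)\|+\|r_2(t)\|\quad\text{a.e.}
\]
To avoid nondifferentiability of the norm at $e=0$, I would work with $\sqrt{\|e\|^2+\eta^2}$. Its derivative is bounded by $\|\dot e\|$ everywhere, and at the end we let $\eta\to0$.

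\emph{Step 3: integrating factor.} Multiply by $e^{-\Lambda(t)}$. The function $\Lambda$ is absolutely continuous with $\Lambda'=\lambda$ a.e., so the product $e^{-\Lambda}\|e\|$ is absolutely continuous and
\[
\tfrac{d}{dt}\bigl(e^{-\Lambda}\|e\|\bigr)\le e^{-\Lambda}\bigl(\|r_1\|+\|r_2\|\bigr)\quad\text{a.e.}
\]
Integrating from $0$ to $1$ and multiplying by $e^{\Lambda(1)}$ yields the stated bound, with initial term $e^{\Lambda(1)}\|u(0)-w(0)\|$.

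\emph{Main obstacle.} The delicate point is regularity, not algebra. We must justify that $\|e\|$ and $e^{-\Lambda}\|e\|$ are absolutely continuous, so that the fundamental theorem of calculus applies when $\lambda$ is only integrable and $\gamma$ is only piecewise $C^1$. We also need measurability of the residual terms, which follows from continuity on each piece. Finally, the bound is trivially true when its right-hand integral is infinite.
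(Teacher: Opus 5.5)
Your proposal is correct and follows the paper's proof: you write $\dot u$ and $\dot w$ as $v_{X,h(t)}$ plus the two transport residuals, derive $\frac{d}{dt}\|u-w\|\le\lambda\|u-w\|+\|r_1\|+\|r_2\|$ almost everywhere, and integrate against $e^{-\Lambda}$. Your additional regularity remarks fill in details the paper leaves implicit; the $\eta$-smoothing of the norm is harmless but unnecessary, since the reverse triangle inequality already gives $\bigl|\frac{d}{dt}\|e\|\bigr|\le\|\dot e\|$ wherever both derivatives exist.
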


The left-hand side is the finite composition error. The initial discrepancy is the identity error at $a_X(g_1,\theta)$, and the integral accumulates the transport residuals. Thus, small identity error and small transport residuals along both paths imply small composition error when the growth factor is controlled.

\begin{proof}
Right translation gives $\frac{d}{dt}(\gamma(t)g_1)=(dR_{\gamma(t)g_1})_Ih(t)$. Thus, almost everywhere,
\begin{align*}
    \dot u(t)&=v_{X,h(t)}(u(t))
        +r_X(\gamma(t)g_1,\theta;h(t)),\\
    \dot w(t)&=v_{X,h(t)}(w(t))
        +r_X(\gamma(t),a_X(g_1,\theta);h(t)).
\end{align*}
The function $e(t)=\|u(t)-w(t)\|$ is absolutely continuous. The assumed growth bound and the triangle inequality imply
\[
    e'(t)\leq\lambda(t)e(t)
        +\|r_X(\gamma(t)g_1,\theta;h(t))\|
        +\|r_X(\gamma(t),a_X(g_1,\theta);h(t))\|
\]
almost everywhere. Multiplication by $e^{-\Lambda(t)}$ and integration give the asserted bound. Since $\gamma(0)=I$, we have
\[
    e(0)=\|a_X(I,a_X(g_1,\theta))-a_X(g_1,\theta)\|,
\]
while $\gamma(1)=g_2$ makes $e(1)$ the stated composition error.
\end{proof}

\paragraph{Loss variation along the same path.} In addition to the initial loss error, variation along the path has two sources: motion of the generators across loss level sets and deviation of the candidate map from the generator dynamics. The next bound separates these contributions. Put $z(t)=a_X(\gamma(t),\theta)$.

\begin{proposition}[Infinitesimal residuals control loss variation]
\label{prop:pathwise-loss-drift}
In the preceding setup,
\begin{align*}
    &\|L(a_X(g_2,\theta),X)-L(\theta,X)\|\\
    &\quad\leq\|L(a_X(I,\theta),X)-L(\theta,X)\|\\
    &\qquad+\int_0^1\Bigl(\|s_X(z(t);h(t))\|
      +\|D_\theta L|_{z(t),X}\|_{\mathrm{op}}
       \|r_X(\gamma(t),\theta;h(t))\|\Bigr)\,dt.
\end{align*}
\end{proposition}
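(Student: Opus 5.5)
The plan is to differentiate the loss along the trajectory $z(t)=a_X(\gamma(t),\theta)$ and split its velocity into a generator term plus the transport residual. Since $\gamma$ is piecewise $C^1$ with finitely many pieces and $a_X$ is smooth, $z$ is piecewise $C^1$, hence absolutely continuous. Because $L(\cdot,X)$ is $C^1$, the composite $t\mapsto L(z(t),X)$ is also absolutely continuous. The fundamental theorem of calculus then gives
\[
  L(a_X(g_2,\theta),X)-L(a_X(I,\theta),X)=\int_0^1 D_\theta L\big|_{z(t),X}[\dot z(t)]\,dt ,
\]
using $\gamma(0)=I$ and $\gamma(1)=g_2$. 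Adding and subtracting $L(\theta,X)$ and applying the triangle inequality produces the initial term $\|L(a_X(I,\theta),X)-L(\theta,X)\|$. It then remains to bound the norm of the integral by the integral of the norm of the integrand.

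Next I compute $\dot z(t)$ at points where $\gamma$ is differentiable, which is all but finitely many $t$. By the chain rule and the definition $\dot\gamma(t)=(dR_{\gamma(t)})_Ih(t)$,
\[
  \dot z(t)=D_ga_X\big|_{\gamma(t),\theta}\bigl((dR_{\gamma(t)})_Ih(t)\bigr)=v_{X,h(t)}(z(t))+r_X(\gamma(t),\theta;h(t)),
\]
which is exactly the definition \eqref{eq:action-transport-residual} of the residual with $g=\gamma(t)$. This mirrors the first step in the proof of Proposition~\ref{prop:pathwise-associativity}, applied to a single trajectory without the shift by $g_1$.

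Substituting into the integrand gives
\[
  D_\theta L|_{z(t),X}[\dot z(t)]=s_X(z(t);h(t))+D_\theta L|_{z(t),X}[r_X(\gamma(t),\theta;h(t))].
\]
The first term is the loss residual \eqref{eq:loss-infinitesimal-residual} evaluated at $z(t)$. The second term is bounded by $\|D_\theta L|_{z(t),X}\|_{\mathrm{op}}\|r_X(\gamma(t),\theta;h(t))\|$. Integrating these bounds gives the stated inequality.

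There is little real obstacle here. The only care needed is the regularity bookkeeping: the integrand must be measurable and the fundamental theorem of calculus must hold across the finitely many break points of $\gamma$. I would handle both by applying the fundamental theorem on each $C^1$ piece, where everything is continuous, and summing over the pieces. If the right-hand integral is infinite, the bound holds trivially.
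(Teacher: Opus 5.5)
Your proposal is correct and follows the paper's argument. The paper's proof has the same steps in the same order: write $\dot z(t)=v_{X,h(t)}(z(t))+r_X(\gamma(t),\theta;h(t))$ from the definition of the transport residual, use the chain rule to split $\frac{d}{dt}L(z(t),X)$ into $s_X(z(t);h(t))$ plus $D_\theta L|_{z(t),X}[r_X]$, integrate using absolute continuity of the piecewise $C^1$ composition, and add the initial discrepancy. Your piece-by-piece use of the fundamental theorem of calculus is a more explicit version of the paper's absolute-continuity remark.
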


Therefore, finite loss variation is controlled by the initial loss error, the infinitesimal loss residual, and the transport residual weighted by the loss derivative along the path. Only these trajectory-dependent quantities are needed, not a global bound on the loss derivative.

\begin{proof}
The definition of $r_X$ gives, almost everywhere,
\[
    \dot z(t)=v_{X,h(t)}(z(t))+r_X(\gamma(t),\theta;h(t)).
\]
By the chain rule,
\[
    \frac{d}{dt}L(z(t),X)
    =s_X(z(t);h(t))+
      D_\theta L\big|_{z(t),X}[r_X(\gamma(t),\theta;h(t))].
\]
The composition $t\mapsto L(z(t),X)$ is piecewise $C^1$, hence absolutely continuous. Integrating, taking norms, and adding the initial loss discrepancy proves the bound.
\end{proof}

The parameterization \eqref{eq:exact-identity-action} gives $u(0)=w(0)$ and $z(0)=\theta$, so both initial-error terms in the pathwise estimates vanish. We retain the general bounds below to cover other candidate maps.

\paragraph{Uniform bounds.} We state the full assumptions for Proposition~\ref{prop:uniform-finite-defects} before deriving it from the pathwise estimates. In the preceding setup, let $G$ be connected.

Use Euclidean norms on the parameter and loss spaces, and choose a norm on $\mathfrak g$. Suppose the following bounds hold for all $g\in G$, $h\in\mathfrak g$, and $\theta,\vartheta\in\mathbb R^p$, with nonnegative constants $\delta,\varepsilon_{\mathrm{act}},\varepsilon_{\mathrm{loss}},K,M$:
\begin{equation}
\label{eq:uniform-symmetry-assumptions}
\begin{aligned}
    \|a_X(I,\theta)-\theta\|&\leq\delta,\qquad
    \|r_X(g,\theta;h)\|\leq\varepsilon_{\mathrm{act}}\|h\|,\\
    \|s_X(\theta;h)\|&\leq\varepsilon_{\mathrm{loss}}\|h\|,\qquad
    \|D_\theta L|_{\theta,X}\|_{\mathrm{op}}\leq M,\\
    \|v_{X,h}(\theta)-v_{X,h}(\vartheta)\|
    &\leq K\|h\|\,\|\theta-\vartheta\|.
\end{aligned}
\end{equation}
For a piecewise $C^1$ path $\gamma\colon[0,1]\to G$ with finitely many pieces, define $h(t)$ by $\dot\gamma(t)=(dR_{\gamma(t)})_Ih(t)$ wherever the derivative exists, and write
\begin{equation}
\label{eq:group-path-length}
    \ell=\int_0^1\|h(t)\|\,dt,\qquad
    F_K(\ell)=
    \begin{cases}
        (e^{K\ell}-1)/K,&K>0,\\
        \ell,&K=0.
    \end{cases}
\end{equation}

\begin{manualproposition}{\ref{prop:uniform-finite-defects}}
Under \eqref{eq:uniform-symmetry-assumptions}, for every piecewise $C^1$ path $\gamma$ with finitely many pieces from $I$ to $g_2$, and every $g_1\in G$, $\theta\in\mathbb R^p$,
\begin{align*}
    \|a_X(g_2,a_X(g_1,\theta))-a_X(g_2g_1,\theta)\|
    &\leq\delta e^{K\ell}+2\varepsilon_{\mathrm{act}}F_K(\ell),\\
    \|L(a_X(g_2,\theta),X)-L(\theta,X)\|
    &\leq M\delta+
       (\varepsilon_{\mathrm{loss}}+M\varepsilon_{\mathrm{act}})\ell.
\end{align*}
\end{manualproposition}

\begin{proof}
For the composition bound, the Lipschitz assumption gives $\lambda(t)=K\|h(t)\|$ in Proposition~\ref{prop:pathwise-associativity}. The initial discrepancy is at most $\delta$, and the sum of the two transport residuals is at most $2\varepsilon_{\mathrm{act}}\|h(t)\|$. Set $q(t)=\int_0^t\|h(s)\|\,ds$. That proposition gives
\[
    \|u(1)-w(1)\|\leq\delta e^{K\ell}
    +2\varepsilon_{\mathrm{act}}\int_0^1
      e^{K(\ell-q(t))}\|h(t)\|\,dt.
\]
For $K>0$, integrating the derivative of $e^{K(\ell-q(t))}$ evaluates the integral as $(e^{K\ell}-1)/K$. For $K=0$, the integral equals $\ell$. This proves \eqref{eq:uniform-associativity-error}.

The derivative bound on $L(\cdot,X)$ makes it $M$-Lipschitz on $\mathbb R^p$, by integration along a straight line segment. Thus the initial loss discrepancy is at most $M\delta$. Substitution of the residual bounds into Proposition~\ref{prop:pathwise-loss-drift} gives \eqref{eq:uniform-loss-error}.
\end{proof}

\paragraph{Products of learned transformations.} A path that successively applies the learned generators requires bounds only in those directions. This gives the finite-error interpretation of the group samples used in Section~\ref{sec:discovery-objectives}.

Let $G$ be generated by $h_1,\ldots,h_r$ as in \eqref{eq:learned-generated-group}, with $\|h_i\|_F=1$. Retain the smooth candidate map and $C^1$ loss from the preceding setup. Suppose, for all $g\in G$, $\theta,\vartheta\in\mathbb R^p$, and $i$, that
\begin{equation}
\label{eq:generator-error-assumptions}
\begin{aligned}
    \|a_X(I,\theta)-\theta\|&\le\delta,
    &\|r_X(g,\theta;h_i)\|&\le\varepsilon_{\mathrm{act}},\\
    \|s_X(\theta;h_i)\|&\le\varepsilon_{\mathrm{loss}},
    &\|D_\theta L|_{\theta,X}\|_{\mathrm{op}}&\le M,\\
    \|v_{X,h_i}(\theta)-v_{X,h_i}(\vartheta)\|
    &\le K\|\theta-\vartheta\|.
\end{aligned}
\end{equation}
All constants are nonnegative, and identity may hold approximately.

\begin{corollary}[Errors along products of learned transformations]
\label{cor:learned-generator-word-errors}
Under \eqref{eq:generator-error-assumptions}, write
\[
    g_2=\exp(t_mh_{i_m})\cdots\exp(t_1h_{i_1}),
    \qquad \ell=\sum_{j=1}^m|t_j|.
\]
Then the composition and loss bounds \eqref{eq:uniform-associativity-error}--\eqref{eq:uniform-loss-error} hold for every $g_1\in G$ and $\theta\in\mathbb R^p$, with $F_K(\ell)=\int_0^\ell e^{Kt}\,dt$.
\end{corollary}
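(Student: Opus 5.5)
The plan is to build an explicit piecewise $C^1$ path $\gamma\colon[0,1]\to G$ from $I$ to $g_2$ whose right-trivialized velocity $h(t)$ is always a scalar multiple of one learned generator. Along such a path, only the assumptions \eqref{eq:generator-error-assumptions} are used, and the pathwise estimates give the result directly. Concretely, split $[0,1]$ into $m$ equal subintervals $[(j-1)/m,\,j/m]$. On the $j$-th one, set
\[
\gamma(t)=\exp\bigl((mt-j+1)\,t_jh_{i_j}\bigr)\,\exp(t_{j-1}h_{i_{j-1}})\cdots\exp(t_1h_{i_1}).
\]
This path is continuous, piecewise $C^1$ with finitely many pieces, and satisfies $\gamma(0)=I$ and $\gamma(1)=g_2$. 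Since $\frac{d}{dt}\exp(sh)g=(dR_{\exp(sh)g})_I h\,\frac{ds}{dt}$, the velocity on the $j$-th piece is $h(t)=m\,t_jh_{i_j}$. Because $\|h_{i_j}\|_F=1$, the length is
\[
\ell=\int_0^1\|h(t)\|\,dt=\sum_j\tfrac1m\cdot m|t_j|=\sum_j|t_j|.
\]

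Next, I transfer the generator-only bounds to the scalar multiples $h(t)=c\,h_i$. The residuals $r_X(g,\theta;h)$ and $s_X(\theta;h)$ are linear in $h$, since $v_{X,h}$ and $D_ga_X|_{g,\theta}\circ(dR_g)_I$ are linear. Therefore $\|r_X(g,\theta;ch_i)\|\le|c|\varepsilon_{\mathrm{act}}=\varepsilon_{\mathrm{act}}\|h\|$ and $\|s_X(\theta;ch_i)\|\le\varepsilon_{\mathrm{loss}}\|h\|$. Likewise, $\|v_{X,ch_i}(\theta)-v_{X,ch_i}(\vartheta)\|\le K|c|\,\|\theta-\vartheta\|$. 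Hence along this specific path, Proposition~\ref{prop:pathwise-associativity} applies with $\lambda(t)=K\|h(t)\|$. Then $\Lambda(1)=K\ell$, and the integrand residuals are bounded by $2\varepsilon_{\mathrm{act}}\|h(t)\|$.

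The remaining step repeats the computation from the proof of Proposition~\ref{prop:uniform-finite-defects}. With $q(t)=\int_0^t\|h\|$, we have
\[
\int_0^1 e^{K(\ell-q(t))}\|h(t)\|\,dt=\int_0^\ell e^{Ks}\,ds=F_K(\ell),
\]
and the initial discrepancy is at most $\delta$. This gives \eqref{eq:uniform-associativity-error}. For the loss bound, I apply Proposition~\ref{prop:pathwise-loss-drift} along the same path. The initial term is at most $M\delta$, because $L(\cdot,X)$ is $M$-Lipschitz by the operator-norm bound integrated along segments. The integrand is at most $(\varepsilon_{\mathrm{loss}}+M\varepsilon_{\mathrm{act}})\|h(t)\|$, which integrates to the stated multiple of $\ell$.

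The main point needing care is that the uniform proposition's hypotheses quantify over all $h\in\mathfrak g$, whereas here only the generating directions are controlled. So I must invoke the pathwise propositions directly rather than Proposition~\ref{prop:uniform-finite-defects} itself, and check that those propositions only evaluate residuals at $h(t)$ along the chosen path. They do, so restricting $h(t)$ to multiples of the $h_i$ suffices. Two minor checks remain: piecewise differentiability at the junctions $t=j/m$, which is harmless since the estimates hold almost everywhere; and the requirement that $g_1\in G$, so that the bounds quantified over $g\in G$ apply at $\gamma(t)g_1$.
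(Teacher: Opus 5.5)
Your proposal is correct and matches the paper's proof essentially step for step. The paper uses the same piecewise exponential path with right-logarithmic velocity $m t_j h_{i_j}$, the same linearity-in-$h$ transfer of the generator-only bounds, and direct application of the two pathwise propositions. The one omission is the degenerate case $m=0$, where your subdivision of $[0,1]$ is undefined: there $g_2=I$ and $\ell=0$, and both bounds ($\delta$ and $M\delta$) follow immediately from the identity-error bound, or equivalently from the constant path $\gamma\equiv I$.
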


Each factor contributes $|t_j|$ to the path length. Thus the same bounds control mixed-generator compositions using residual bounds on the learned list.

\begin{proof}
For $m\ge1$, put $b_0=I$ and $b_j=\exp(t_jh_{i_j})b_{j-1}$. On $[(j-1)/m,j/m]$, define
\[
    \gamma(t)=\exp\bigl((mt-j+1)t_jh_{i_j}\bigr)b_{j-1}.
\]
This is a piecewise smooth path from $I$ to $g_2$, with right-logarithmic velocity $mt_jh_{i_j}$ on its $j$th interval and length $\ell$. Since the residuals and $v_{X,h}$ are linear in $h$, the transport residuals along this interval are bounded by $m|t_j|\varepsilon_{\mathrm{act}}$, the loss residual by $m|t_j|\varepsilon_{\mathrm{loss}}$, and the growth factor in Proposition~\ref{prop:pathwise-associativity} is $m|t_j|K$. That proposition therefore gives
\[
    \|u(1)-w(1)\|
    \le\delta e^{K\ell}
       +2\varepsilon_{\mathrm{act}}\int_0^\ell e^{K(\ell-s)}\,ds
    =\delta e^{K\ell}+2\varepsilon_{\mathrm{act}}F_K(\ell).
\]
The derivative bound makes $L(\cdot,X)$ $M$-Lipschitz, so the initial loss discrepancy is at most $M\delta$. Substituting the same interval bounds into Proposition~\ref{prop:pathwise-loss-drift} yields $M\delta+(\varepsilon_{\mathrm{loss}}+M\varepsilon_{\mathrm{act}})\ell$. For $m=0$, $g_2=I$ and both assertions follow directly from the identity-error bound.
\end{proof}

\paragraph{Inverse consistency.} A further consequence bounds the error when a transformation is followed by its proposed inverse. It is obtained by applying the composition bound to the pair $(g^{-1},g)$.

\begin{corollary}[Error in undoing a transformation]
\label{cor:uniform-inverse-error}
Under \eqref{eq:uniform-symmetry-assumptions}, let $\gamma$ be a piecewise $C^1$ path with finitely many pieces from $I$ to $g^{-1}$, and let $\ell$ be its length from \eqref{eq:group-path-length}. Then
\begin{equation}
\label{eq:uniform-inverse-error}
    \|a_X(g^{-1},a_X(g,\theta))-\theta\|
    \leq\delta+\delta e^{K\ell}
      +2\varepsilon_{\mathrm{act}}F_K(\ell).
\end{equation}
\end{corollary}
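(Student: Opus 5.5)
The bound comes from the triangle inequality plus one application of Proposition~\ref{prop:uniform-finite-defects} to the pair $(g_1,g_2)=(g,g^{-1})$. Write $\vartheta=a_X(I,\theta)$ for the image of $\theta$ under the approximate identity. We split
\[
    \|a_X(g^{-1},a_X(g,\theta))-\theta\|
    \leq\|a_X(g^{-1},a_X(g,\theta))-a_X(g^{-1}g,\theta)\|
    +\|a_X(I,\theta)-\theta\|,
\]
using $g^{-1}g=I$ in the first term, so that its second argument is exactly $a_X(I,\theta)$.

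\textbf{Second term.} This is the identity error at $\theta$. By the first line of \eqref{eq:uniform-symmetry-assumptions} it is at most $\delta$.

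\textbf{First term.} This is a composition error with $g_1=g$ and $g_2=g^{-1}$. The given path $\gamma$ is piecewise $C^1$ with finitely many pieces and runs from $I$ to $g^{-1}$. It is therefore an admissible path for Proposition~\ref{prop:uniform-finite-defects}, with the length $\ell$ defined in \eqref{eq:group-path-length}. The composition bound \eqref{eq:uniform-associativity-error} then gives
\[
    \|a_X(g^{-1},a_X(g,\theta))-a_X(g^{-1}g,\theta)\|\leq\delta e^{K\ell}+2\varepsilon_{\mathrm{act}}F_K(\ell).
\]

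Adding the two estimates yields \eqref{eq:uniform-inverse-error}. There is no real obstacle here. The only points to check are that $G$ is connected, so the uniform assumptions apply, and that $\ell$ is measured along a path ending at $g^{-1}$, the element applied second, rather than at $g$.
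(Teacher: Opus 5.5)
Your proposal is correct and follows the paper's proof. You apply the composition bound \eqref{eq:uniform-associativity-error} with $(g_2,g_1)=(g^{-1},g)$, which makes $a_X(g_2g_1,\theta)=a_X(I,\theta)$, and then add the identity error $\|a_X(I,\theta)-\theta\|\le\delta$ via the triangle inequality.
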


This controls how well $a_X(g^{-1},\cdot)$ undoes $a_X(g,\cdot)$; it does not assert that either candidate map is exactly invertible. Interchanging $g$ and $g^{-1}$ gives the reverse-composition bound, using a path from $I$ to $g$.

\begin{proof}
Apply \eqref{eq:uniform-associativity-error} with $(g_2,g_1)=(g^{-1},g)$, and then use $\|a_X(I,\theta)-\theta\|\leq\delta$ and the triangle inequality.
\end{proof}

\paragraph{Local bounds and empirical residuals.} For a particular comparison, the proof only needs transport bounds at the group--parameter pairs defining $u,w,z$, a growth bound between $u(t)$ and $w(t)$, and loss-residual and loss-derivative bounds along $z(t)$. To bound the initial loss error by $M\delta$, a loss-derivative bound along the segment between $\theta$ and $a_X(I,\theta)$ is enough. Thus the constants need not be finite on the entire parameter space. Small residuals at finitely many sampled points alone do not supply the trajectory bounds required by these results. Here approximate action refers to controlled identity, composition, and inverse-consistency errors on the specified domain. Closeness to an exact action and exact invertibility require additional assumptions.

% \newpage
\section{Building symmetries from known ones}
\label{appendix:building-symmetries-from-known-ones}

This section proves the lifting result and its width, depth, and batch-size consequences from Section~\ref{sec:building-symmetries-from-known-ones}. The small-subnetwork generation results are proved in Appendix~\ref{app:small-subnetworks-sufficient}.

\begin{manualproposition}{\ref{prop:symmetry-from-subnetworks}}
    Let $L\colon \Par \times \Data^d \to \R^d$ be a function, where the parameter space $\Par$ is a product space $\Par = \Par_1 \times \Par_2$, with spaces $\Par_1, \Par_2$.
    Suppose there exist functions 
    $h\colon \Par_1 \times \Data^d \to S$, 
    $f\colon \Theta_2 \times S \to T$, 
    and $j\colon (\Theta_1 \times T) \times  \Data^d \to \R^d$,
    such that for every $\theta=(\theta_1, \theta_2) \in \Par$ and $X \in \Data^d$, 
    $L(\theta, X) = j \big( (\theta_1, f \big(\theta_2, h(\theta_1, X) \big)), X \big)$.
    If $a\colon S \to (G \times \Par_2 \to \Par_2)$ is a $G$-symmetry of $f$, then there is an induced $G$-symmetry of $L$, $a'\colon \Data^d \to (G \times \Par \to \Par)$, defined by $a'_X(g, (\theta_1, \theta_2)) =  \big( \theta_1, a_{h(\theta_1, X)}(g, \theta_2) \big)$.
\end{manualproposition}

\begin{proof}
We need to show that $a'$ satisfies the identity and associative law of a group action and preserves $L$.

Since $a$ is a group action on $\Par_2$, it satisfies the identity axiom $a_{h(\theta_1, X)}(I, \theta_2) = \theta_2$.
Applying this in the definition of \(a'\), we get $a'_X(I, (\theta_1, \theta_2)) = (\theta_1, a_{h(\theta_1, X)}(I, \theta_2)) = (\theta_1, \theta_2)$.

Since $a$ is a group action on $\Par_2$, it satisfies the associative law $a_{h(\theta_1, X)}(g_2 g_1, \theta_2) = a_{h(\theta_1, X)}(g_2, a_{h(\theta_1, X)}(g_1, \theta_2))$, for all $g_1, g_2 \in G$.
It follows that $a'$ also satisfies the associative law: 
$a'_X(g_2 g_1, (\theta_1, \theta_2)) 
    = ( \theta_1, a_{h(\theta_1, X)}(g_2g_1, \theta_2) )
    = ( \theta_1, a_{h(\theta_1, X)}(g_2, a_{h(\theta_1, X)}(g_1, \theta_2)) )
    = a'_X(g_2, a'_X(g_1, (\theta_1, \theta_2)))$

Finally, since $a$ is a symmetry of $f$, we have $f(a_{h(\theta_1, X)}(g, \theta_2), h(\theta_1, X)) = f(\theta_2, h(\theta_1, X))$, for all $g \in G$.
It follows that $a'$ preserves the value of $L$:
$L(a'_X(g, \theta), X) 
    = j \big( (\theta_1, f \big(a_{h(\theta_1, X)}(g, \theta_2), h(\theta_1, X) \big)), X \big)
    = j \big( (\theta_1, f \big(\theta_2, h(\theta_1, X) \big) ), X \big)
    = L(\theta, X)$.
\end{proof}

\begin{manualcorollary}{\ref{cor:symmetry-in-wider-networks}}
        Consider the parameter space $\Par(m,h,n)=\R^{m\times h}\times\R^{h\times n}$ and data space $\Data(n,k)=\R^{n\times k}$. Let $\sigma$ apply the same row function $\sigma_{row}\colon\mathbb R^k\to\mathbb R^k$ at every hidden width. Define $L_{mnhk}\colon\Par(m,h,n)\times\Data(n,k)\to\R^{m\times k}$ by $L_{mnhk}((U,V),X)=U\sigma(VX)$. If $L_{mnhk}$ has a $G$-symmetry, then $L_{mnh'k}$ has a $G$-symmetry for every $h'>h$.
\end{manualcorollary}

\begin{proof}[Proof of Corollary~\ref{cor:symmetry-in-wider-networks}.]
Use the same row function $\sigma_{row}$ at every hidden width. Write $\theta_2=(U_{1:h},V_{1:h})$ for the first $h$ hidden units and $\theta_1=(U_{h+1:h'},V_{h+1:h'})$ for the remaining units. Here subscripts select columns of $U$ and rows of $V$. Since $\sigma$ acts independently on rows,
\[
    L_{mnh'k}((U,V),X)
    =L_{mnhk}(\theta_2,X)+L_{mn(h'-h)k}(\theta_1,X).
\]
Apply Proposition~\ref{prop:symmetry-from-subnetworks} with $S=\Data(n,k)$, $T=\R^{m\times k}$, and
\[
    h(\theta_1,X)=X,\qquad
    f(\theta_2,X)=L_{mnhk}(\theta_2,X),\qquad
    j((\theta_1,Y),X)=Y+L_{mn(h'-h)k}(\theta_1,X).
\]
If $a_X$ is the given symmetry of $L_{mnhk}$, the induced action is
\[
    a'_X(g,(\theta_1,\theta_2))
    =\bigl(\theta_1,a_X(g,\theta_2)\bigr).
\]
It applies the original symmetry to the first $h$ units and leaves all remaining units fixed.
\end{proof}

\begin{manualcorollary}{\ref{cor:symmetry-in-deeper-networks}}
    Let $\Par = \Par_1 \times ... \times \Par_l$ be a parameter space.
    Consider a list of spaces $V_0 = \Data^d$, $V_l = \R^d$, and $V_1$, ..., $V_{l-1}$.
    Let $L\colon \Par \times \Data^d \to \R^d$ be a function defined recursively by $\{L_i\}_{i=1}^l$ with $L_i\colon \Theta_i \times V_{i-1} \to V_{i}$, such that $L = \phi_l$ where $\phi_i = L_i(\theta_i, \phi_{i-1}) \in V_i$ and $\phi_0 = X$. 
    If for some $1 \leq i \leq l$, $L_i$ has a $G$-symmetry, then $L$ has a $G$-symmetry.
\end{manualcorollary}

\begin{proof}
Write $\theta_{-i}=(\theta_1,\ldots,\theta_{i-1},\theta_{i+1},\ldots,\theta_l)$ and $L_{j,\theta_j}(z)=L_j(\theta_j,z)$. With $\Par_1$ in Proposition~\ref{prop:symmetry-from-subnetworks} identified with the product of the parameter spaces other than $\Par_i$, and $\Par_2$ identified with $\Par_i$, define
\begin{align*}
    h(\theta_{-i},X)
    &=\bigl(L_{i-1,\theta_{i-1}}\circ\cdots\circ L_{1,\theta_1}\bigr)(X),\\
    f(\theta_i,z)&=L_{i,\theta_i}(z),\\
    j((\theta_{-i},z),X)
    &=\bigl(L_{l,\theta_l}\circ\cdots\circ L_{i+1,\theta_{i+1}}\bigr)(z).
\end{align*}
Empty compositions are identity maps, so $h(\theta_{-1},X)=X$ when $i=1$ and $j((\theta_{-l},z),X)=z$ when $i=l$. The recursive definition of $L$ gives
\[
    L(\theta,X)
    =j\bigl((\theta_{-i},f(\theta_i,h(\theta_{-i},X))),X\bigr).
\]
The prefix $h$ does not depend on $\theta_i$, and $f=L_i$ has the assumed $G$-symmetry. Proposition~\ref{prop:symmetry-from-subnetworks} therefore gives a $G$-symmetry of $L$.
\end{proof}

\begin{manualproposition}{\ref{prop:symmetry-with-smaller-data-batch}}
    Let $L_d\colon \Par \times \Data^d \to \R^d$ be a function that is applied pointwise on each of $d$ data points in a data batch.
    If $L_d$ admits a $G$-symmetry, then $L_{d'}$ admits a $G$-symmetry for all $d' < d$. 
\end{manualproposition}
\begin{proof}
Fix $1\leq d'<d$. For $X'=(x_1,\ldots,x_{d'})$, define the deterministic padding
\[
    \iota(X')=(x_1,\ldots,x_{d'},\underbrace{x_1,\ldots,x_1}_{d-d'\text{ entries}}).
\]
Let $a$ be the given $G$-symmetry of $L_d$, and set $a'_{X'}(g,\theta)=a_{\iota(X')}(g,\theta)$. For each fixed $X'$, the padded batch is fixed, so $a'_{X'}$ inherits identity and associativity from $a_{\iota(X')}$. In particular,
\[
    a'_{X'}(g_2,a'_{X'}(g_1,\theta))
    =a_{\iota(X')}(g_2g_1,\theta)
    =a'_{X'}(g_2g_1,\theta).
\]
Since $L_d$ and $L_{d'}$ apply the same function pointwise, projecting $L_d(\theta,\iota(X'))$ onto its first $d'$ components gives $L_{d'}(\theta,X')$. Applying this projection to the $L_d$-invariance identity proves $L_{d'}(a'_{X'}(g,\theta),X')=L_{d'}(\theta,X')$. Thus $a'$ is a $G$-symmetry of $L_{d'}$.
\end{proof}

\section{Proofs and further discussion of small-subnetwork symmetries}
\label{app:small-subnetworks-sufficient}

This section proves Theorem~\ref{thm:subnet-map-generation} and Corollary~\ref{cor:subnet-compensating-action}, and gives the explicit action and examples.

Throughout this section, we use the main-text setup and notations: $F_X(U,V)=U\sigma(VX)$, where $U\in\mathbb R^{m\times H}$, $V\in\mathbb R^{H\times n}$, $X\in\mathbb R^{n\times k}$, $H>k$, and $\sigma$ is smooth and row-wise. Let $Z=\sigma(VX)$. The set $B$ has $k$ indices, $C$ is its complement, and $Z_B$ is invertible on the nonempty domain $\Omega_B$ in \eqref{eq:subnet-regular-domain}. Subscripts on $U$ select columns; those on $V,Z$ select rows. 

\subsection{Decomposing an entire output-preserving transformation}
\label{app:small-subnetworks-generation}

\begin{manualtheorem}{\ref{thm:subnet-map-generation}}
Let $(\phi_t)_{t\in[0,1]}$ be a smooth family of diffeomorphisms of $\Omega_B$ satisfying $\phi_0=\mathrm{id}$ and $F_X\circ\phi_t=F_X$ for every $t\in[0,1]$.
Assume that all $\phi_t$ equal the identity outside the same compact set $K\subset\Omega_B$.
Then $\phi_1 = \psi_N \circ \ldots \circ \psi_1$ for finitely many output-preserving diffeomorphisms $\psi_i$, each changing at most $k+1$ hidden-unit blocks and preserving the combined output of those units on $X$.
\end{manualtheorem}

\begin{proof}
The output equation $Y=U_BZ_B+U_CZ_C$ uniquely determines
\begin{equation}
\label{eq:subnet-output-coordinates}
    U_B=(Y-U_CZ_C)Z_B^{-1}.
\end{equation}
Consequently,
\[
    \Phi(U,V)=(Y,V,U_C),\qquad Y=F_X(U,V),
\]
is a diffeomorphism from $\Omega_B$ onto
\[
    \mathcal O=\mathbb R^{m\times k}\times
    \{V:\det\sigma(V_BX)\neq0\}\times\mathbb R^{m\times(H-k)},
\]
with inverse given by \eqref{eq:subnet-output-coordinates}.

Flatten these coordinates as $(y,z)$ with $y=Y$ and $z=(V,U_C)\in\mathbb R^d$, where $d=Hn+m(H-k)$. The conjugate maps $\psi_t=\Phi\circ\phi_t\circ\Phi^{-1}$ preserve $y$ and equal identity outside a common compact subset of $\mathcal O$. Extend them by identity to the entire Euclidean coordinate space. These extensions are smooth diffeomorphisms because their supports stay inside $\mathcal O$.

First take an increment $\psi=\psi_b\circ\psi_a^{-1}$ sufficiently close to identity that $\|D(\psi-\mathrm{id})\|_\infty<1$. Write $\psi(y,z)=(y,f_1(y,z),\ldots,f_d(y,z))$ and define
\[
    Q_j(y,z)=(y,f_1(y,z),\ldots,f_j(y,z),z_{j+1},\ldots,z_d),
    \qquad Q_0=\mathrm{id},\quad Q_d=\psi.
\]
The derivative of $Q_j-\mathrm{id}$ is obtained by zeroing rows of $D(\psi-\mathrm{id})$, so its operator norm is also less than one. Each $Q_j$ is a global diffeomorphism: for any target $\eta$, the equation $x=\eta-(Q_j(x)-x)$ has a unique solution by the contraction theorem, and the nonsingular derivative gives a smooth inverse. Moreover, $Q_j$ fixes $\mathcal O^c$ pointwise. Bijectivity then forces $Q_j(\mathcal O)=\mathcal O$.

Set $T_j=Q_j\circ Q_{j-1}^{-1}$. The outputs $Q_j(x)$ and $Q_{j-1}(x)$ differ only in their $j$th $z$ coordinate. Hence $T_j$ changes only $z_j$, keeps $y$ fixed, and
\[
    \psi=T_d\circ\cdots\circ T_1.
\]
If $z_j$ is an entry of $V_i$, the inverse formula \eqref{eq:subnet-output-coordinates} changes only $V_i$ and $U_B$. If it is an entry of $U_i$ for $i\in C$, only $U_i$ and $U_B$ change. Each pulled-back factor therefore changes units in $B\cup\{i\}$, a set of at most $k+1$ units. Since it fixes all other units and preserves $F_X$, it preserves this subnetwork's combined output.

Finally, smoothness of $(\psi_t)$ and their common compact support imply that $\psi_b\circ\psi_a^{-1}$ is uniformly $C^1$-close to identity whenever $|b-a|$ is sufficiently small. Choose a finite partition of $[0,1]$ with all increments satisfying the preceding bound. Factor each increment and compose the factorizations. Conjugating back by $\Phi$ proves the result.
\end{proof}

\paragraph{Relation to decomposition results.} Factorization into maps preserving smaller sets of coordinates is related to fiber-preserving diffeomorphism decompositions \citep{haller2003smooth}. The global coordinates above give a direct proof for this network and identify the hidden-unit support of every factor. This differs from constructing a constant-output path between two parameter points \citep{nguyen2019connected}, and from complete ReLU equivalence through rewrites that may change architecture \citep{zhang2026complete}.

\subsection{An explicit compensating action}
\label{app:small-subnetworks-compensation}

A unit can change its incoming weights while a few other units cancel its output change. The output coordinates in the preceding proof give the following family without a compact-support restriction. For $P\in\mathbb R^{m\times(H-k)}$ and $Q\in\mathbb R^{(H-k)\times n}$, prescribe
\[
    V'_B=V_B,\quad V'_C=V_C+Q,\quad U'_C=U_C+P,\quad U'_B =\bigl[F_X(U,V)-U'_C\sigma(V'_CX)\bigr]
      \bigl[\sigma(V_BX)\bigr]^{-1}.
\]

\begin{manualcorollary}{\ref{cor:subnet-compensating-action}}
The maps $a_X^B((P,Q),(U,V))=(U',V')$ defined above form a smooth free action of the additive group $\mathbb R^{(H-k)(m+n)}$ on $\Omega_B$ and preserve $F_X$. This action is generated by commuting one-parameter subgroups, each changing at most $k+1$ hidden-unit blocks and preserving their combined output.
\end{manualcorollary}

This family of data-dependent symmetry fixes $V_B$ and is therefore not the entire class in Theorem~\ref{thm:subnet-map-generation}. 
Its construction follows from the output coordinates in that theorem's proof, rather than from applying the compact-support factorization to a translation.

\begin{proof}
In the coordinates $\Phi(U,V)=(Y,V,U_C)$ established in the preceding proof, the displayed transformation fixes $Y,V_B$ and translates $(U_C,V_C)$ by $(P,Q)$. Translations compose by addition, have inverses $(-P,-Q)$, act freely, and preserve $Y$. All group parameters are allowed because $V_B$, and hence the domain condition, remain unchanged. Substitution into \eqref{eq:subnet-output-coordinates} gives $U_B'$.

A translation of a single entry of $P$ or $Q$ modifies only one unit $j\in C$ and the output columns $U_B$. All other units remain fixed, and the total output is unchanged, so the combined contribution of $B\cup\{j\}$ is preserved. Coordinate translations commute and generate the additive group.
\end{proof}

\paragraph{Example: two sigmoid units.} For a single scalar input, let $F_x=u_1\sigma(v_1x)+u_2\sigma(v_2x)$ with logistic sigmoid $\sigma$. Since $\sigma(v_1x)>0$, any $q\in\mathbb R$ gives an exact transformation
\[
    v'_2=v_2+q,\qquad
    u'_1=u_1+u_2\frac{\sigma(v_2x)-\sigma((v_2+q)x)}{\sigma(v_1x)},
    \qquad u'_2=u_2,\quad v'_1=v_1.
\]
The second unit changes its incoming weight, and the first cancels its output change. 

\paragraph{Relation to earlier constructions.} \citet{zhao2023symmetries} give explicit nonlinear data-dependent group actions for single-input batches, while Corollary~\ref{cor:subnet-compensating-action} provides an explicit larger-batch action. 
The action on $\Omega_B$ is constructed by fixing $k$ units with independent batch features and using their output weights to compensate for changes in the remaining units. Keeping these units' incoming weights fixed preserves the invertibility needed for compensation, without requiring the activation to preserve rank everywhere. The subnetwork view makes both the construction and its $k+1$-unit support bound explicit. 
Output compensation also appears in constant-output path constructions \citep{nguyen2019connected}, but the transformations here additionally have a specified group structure.

\paragraph{Application to gated feedforward blocks.} The proofs use only a linear output readout and independence of each feature row from the other units' incoming parameters. They therefore apply to the smooth gated block
\[
    F_X=U\bigl[\sigma(W_gX)\odot(W_uX)\bigr],
\]
by treating one row of $W_g$, one row of $W_u$, and the matching column of $U$ as one unit \citep{shazeer2020glu}. If $k$ feature rows are independent, the same construction has support at most $k+1$ units, with an explicit additive subgroup of dimension $(H-k)(m+2n)$. An unchanged residual branch can be added without affecting either output-preservation argument. In a token-processing block, $k$ counts all protected token positions used at this boundary; the rank condition must hold for that complete feature matrix.

\section{Experimental setup and validation}
\label{app:experimental-protocol}
\label{app:experimental-details}

The one-generator small-network experiments use five optimization seeds, $101$--$105$, using independently drawn parameter perturbations and transformations for training, checkpoint selection, recipe selection, and final testing. The task seed is $31415$. Replications therefore measure optimization variability for fixed base networks and conditioning instances. The higher-dimensional protocols are given in Appendix~\ref{app:frontier-recovery}, and the pretrained-model protocol is given in Appendix~\ref{app:experimental-pythia-details}.

\subsection{Implementation of the discovery objectives}
\label{app:discovery-objectives}

This subsection gives the group, sampling, normalization, and regularization details for the direct-map method in Section~\ref{sec:automatic-discovery}. In subnetwork discovery, unchanged parameters form part of the fixed context. Affine-field extraction is described in Appendix~\ref{app:frontier-recovery}.

\paragraph{Learning the candidate group.} We fix the matrix size $s$ and generator count $r$, and learn normalized matrices $h_1,\ldots,h_r$ shared across data batches. Their finite exponential products define $G$ in \eqref{eq:learned-generated-group}. This subgroup has a connected immersed Lie-group structure. Its Lie algebra is the smallest vector space containing the $h_i$ and closed under matrix commutators. We use this intrinsic smooth structure when $G$ is not closed in $\mathrm{GL}(s,\mathbb R)$. The action network is smooth in its ambient matrix input, so its restriction to $G\times\mathbb R^p$ is smooth. Symmetry conditions are imposed on this restriction.

The generator count $r$ can be smaller than the resulting algebra dimension. For example, $E_{12}$ and $E_{21}$ generate $[E_{12},E_{21}]=\operatorname{diag}(1,-1)$ and hence a three-dimensional algebra. Here $E_{ij}$ is the $2\times2$ matrix with a single one in position $(i,j)$. Proposition~\ref{prop:learned-generators-suffice} applies to a generating list of any such group.

\paragraph{Exact identity.} Equation~\eqref{eq:exact-identity-action} represents every smooth identity-preserving map at the unrestricted function level by taking $b_X(g,\theta)=a_X(g,\theta)-\theta$. Both evaluations use the same deterministic network and context. Compute their difference before adding $\theta$. For a restricted parameter domain, the architecture must additionally keep its output in that domain.

\paragraph{Sampling group elements.} Let $\mu$ sample the original parameter--batch pairs $(\theta,X)$. Sample a product length, generator indices, and real coefficients, and multiply their matrix exponentials to obtain $g$. Let $\pi$ denote this distribution at the current learned matrices and let $\nu$ return each $h_i$ with probability $1/r$. Include negative coefficients, different orders, and products of several generators. The action network receives the product matrix, so different expressions for the same matrix give the same network input. Keep the coefficient distribution fixed within each training stage. We begin with short products at moderate coefficient magnitudes and expand the range as training progresses.

For a supplied connected matrix group $G$, fix $h_1,\ldots,h_r$ to a Frobenius-orthonormal basis of $\mathfrak g$, choose $\pi$ on $G$, and let $\nu$ be uniform on this basis. Here $r=\dim\mathfrak g$ and only the action map is learned.

\paragraph{Sampling parameter points.} The differential losses use an equal mixture of original pairs from $\mu$ and pairs $(a_X(q,\theta),X)$ obtained with an independently sampled product $q$. This exposes training to points visited by the candidate transformations; the pathwise bounds require residual control along the full evaluated paths. When constructing these additional sample points, detach them from the sampling computation. Derivatives of the residuals are then taken at the selected points. The regularizers always use the original distribution $\mu$ so that their motion scale remains fixed.

The finite composition and finite preservation terms use the same mixture of original and transformed starting points. For $\mathcal L_{\mathrm{comp}}$, use a mixture of independent products $(g_1,g_2)$ and inverse pairs $(g_1,g_1^{-1})$. The batch $X$ stays fixed within each composition. Both sides of the composition residual are differentiated through in full, including the inner transformed parameter. This finite term adds direct supervision of composition over the sampled range. The infinitesimal-only variant omits both finite terms.

\paragraph{Automatic differentiation.} Compute directional derivatives using Jacobian--vector products. In $D_ga_X|_{g,\theta}(hg)$, differentiate only the group input and hold the parameter input fixed. For the group derivative at $I$, the identity matrix in $b_X(I,\theta)$ remains a constant. Retain the dependence of both the direction $hg$ and the evaluation point $a_X(g,\theta)$ when backpropagating the complete transport loss. Hold the sampled indices and coefficients fixed and differentiate through the matrix exponentials and action network. Matrix-valued outputs are flattened before taking Euclidean norms.

\paragraph{Preservation target.} The target function is held fixed during discovery. We retain every network or subnetwork output component inside the squared norm. For ReLU targets, derivatives are interpreted away from activation boundaries, and finite preservation is checked separately. Exact reconstruction of ReLU rescalings uses positive homogeneity, including at activation boundaries (Appendix~\ref{app:frontier-recovery}). The finite-error theorem in Section~\ref{sec:approximate-symmetry} assumes a $C^1$ target.

\paragraph{Finite preservation and objective variants.} The finite-preservation term is
\begin{equation}
\label{eq:discovery-finite-preservation-loss}
    \mathcal L_{\mathrm{finite}}
    =\mathbb E_{X,\theta,g}
      \|L(a_X(g,\theta),X)-L(\theta,X)\|_2^2.
\end{equation}
The base hybrid objective is \eqref{eq:objective-weighted} with $\gamma_2>0$. The finite comparator removes $\mathcal L_{\mathrm{invariance}}$ and $\mathcal L_{\mathrm{assoc}}$ and includes a positive multiple of $\mathcal L_{\mathrm{finite}}$, retaining finite composition and the same regularizers. The infinitesimal-only comparator sets $\gamma_2=0$ and omits $\mathcal L_{\mathrm{finite}}$. Stronger hybrid recipes add $\mathcal L_{\mathrm{finite}}$ to the base objective. Each objective is tuned separately on validation data.

\paragraph{Fixed normalization.} Section~\ref{sec:automatic-discovery} writes the losses before task-specific normalization. In implementation, preservation residuals are divided by a fixed output scale $s_F$, and transport and composition residuals in physical parameter coordinates are divided by $s_\theta\beta$, before squaring. The regularizers use $v_i/s_\theta$ in place of $v_i$, so $\beta$ specifies a field magnitude in units of the fixed parameter scale $s_\theta$. These positive constants leave the zero-residual conditions unchanged. They are fixed before optimization; their definitions for small networks and Pythia are given below and in Appendix~\ref{app:experimental-pythia-details}. Reduced-coordinate actions use the induced physical parameter norm; the Pythia coordinates are isometric for these increments. The motion-relative composition and inverse errors used in evaluation have sample-dependent denominators and are distinct from these fixed training normalizations.

\paragraph{Interpreting the regularizers.} The scale penalty targets mean squared field magnitude $\beta^2$, while the diversity penalty discourages duplicate, opposite, or correlated fields. Both use uncentered moments so that constant translation fields count as nontrivial motion. They allow a field to vanish at individual parameter points. Pointwise independence is evaluated separately through the rank of $[v_1(\theta,X)\ \cdots\ v_r(\theta,X)]$.

Use nonzero initialization for the generators and the group-dependent part of $b_X$, since the squared scale penalty has zero gradient at a fully collapsed parameterization. We evaluate actual finite parameter displacements as well as generator norms. The regularization weights and reference scale are selected on validation data, balancing invariance, composition, nontrivial motion, and redundancy.

\paragraph{Examining the learned structure.} Repeatedly adjoining commutators and taking their linear span determines the generated Lie algebra in exact arithmetic. Numerical dimension estimates should report the rank tolerance and its sensitivity. For an exact action, \eqref{eq:generator-bracket-coordinates} requires the same bracket relations in the induced parameter fields. Directions whose fields vanish identically belong to the kernel of the infinitesimal action. Thus the matrix algebra, its effective action, and the rank of its fields at a parameter point describe different aspects of the learned symmetry. A nonzero derivative with respect to $X$ shows dependence of the parameterization on data. To examine dependence of the preserved function, evaluate the fixed transformation $a_X(g,\theta)$ on a separate batch $X'$.

Training with bounded products and finitely many samples is assessed through held-out errors, including longer compositions. The pathwise bounds in Appendix~\ref{app:associativity-stability} require residual control along the evaluated paths.

\subsection{One-generator experimental setup}
\label{app:one-generator-setup}

\paragraph{Models and parameter sampling.} Linear discovery uses $2\times2$ factors near identity, rejecting numerically singular samples. ReLU recovery uses a bias-free $(2,4,2)$ network. Focused sigmoid discovery preserves $U_B\sigma(V_BX)+u_c\sigma(v_cX)$, with $V_B,u_c$ fixed and $(U_B,v_c)$ transformed. The single-input case has $m=n=k=1$; the two-input case has $m=n=k=2$. Compensating units are selected by pivoted QR during setup and their indices remain fixed. The selected feature matrix must have condition number at most $10^4$. The tanh and GELU targets are bias-free $(4,16,16,2)$ students trained for $3{,}000$ steps against a fixed random teacher on Gaussian inputs. We transform the first and last matrices and hold the middle matrix fixed. Gaussian parameter perturbations have coordinate standard deviation $0.20$ for linear and scalar sigmoid, $0.30$ for ReLU recovery, $0.15$ for two-input sigmoid, $0.05$ for the deeper networks, and $0.20$ for host lifting. Data-dependent fits use a fixed conditioning batch.

\paragraph{Training.} The action network has three width-$256$ SiLU hidden layers and exact identity by construction. We learn one Frobenius-normalized matrix generator per action. Training uses Adam in float32, gradient clipping at $10$, and equal proportions of original and transformed parameter locations for the differential residuals. One quarter of the composition pairs are inverse pairs. Field-scale regularization is active and sparsity is disabled. Each objective has a separate validation search over learning rates, batch sizes, and residual weights. Initial candidates train for $5{,}000$ steps, and the strongest candidates continue to $15{,}000$ total steps before selecting a recipe for fresh-seed replication. The selected learning rates are $10^{-3}$ or $3\times10^{-4}$, with batches of $16$ or $32$. The scalar-sigmoid hybrid recipe uses coefficient radius $0.8$ and at most six factors; the other main-text hybrid recipes use radius $0.5$ and at most three factors. The finite and infinitesimal comparators retain the same action architecture and field regularizers. Selected recipes can differ in batch size and transformation range, so the comparisons assess tuned objectives at equal final step budgets.

\paragraph{Finite-action tests.} We keep the conditioning batch and parameter mask fixed within each comparison. Group samples are signed products $\exp(t_mh_{i_m})\cdots\exp(t_1h_{i_1})$. Coefficient radius $\rho$ is the upper bound on their total absolute coefficient length $\sum_j|t_j|$, sampled uniformly on $[0.2\rho,\rho]$. It controls the group input rather than fixing the resulting parameter displacement. The test grid contains $\rho\in\{0.1,0.3,0.5,0.8,1.2\}$ and one, two, four, or eight factors, with $512$ samples per cell. Tables use the declared cell $\rho=0.5$ with one factor. Longer signed products can have smaller displacement through cancellation.

Let $s_\theta$ be the root-mean-square parameter-coordinate magnitude and $s_F$ the root-mean-square output norm, estimated from $64$ calibration samples and fixed before optimization. Write $\theta_1=a_X(g_1,\theta)$ and $\theta_{12}=a_X(g_2,\theta_1)$. We measure
\[
\begin{aligned}
E_{\mathrm{out}}&=\frac{\|F_X(\theta_1)-F_X(\theta)\|}{s_F},\\
E_{\mathrm{comp}}&=\frac{\|\theta_{12}-a_X(g_2g_1,\theta)\|}{\|\theta_1-\theta\|+\|\theta_{12}-\theta_1\|},\\
E_{\mathrm{inv}}&=\frac{\|a_X(g_1^{-1},\theta_1)-\theta\|}{\|\theta_1-\theta\|}.
\end{aligned}
\]
For example, a composition error of $10^{-3}$ is a discrepancy of $0.1\%$ of the two-step displacement. An output error of $10^{-3}$ is $0.1\%$ of the fixed calibration scale $s_F$, rather than a percentage change in prediction accuracy. The implementation adds $10^{-12}$ to denominators after scaling parameter norms by $s_\theta$. Transport is normalized by the corresponding field norm. Subdivision compares one exponential transformation with four applications at one quarter of its coefficient, normalized by the one-step motion. Cancellation is $\|\Delta Y_B+\Delta Y_C\|/(\|\Delta Y_B\|+\|\Delta Y_C\|+10^{-12})$. Each one-generator small-network table entry aggregates the within-seed statistic by its median across all five seeds.

\paragraph{Joint numerical tolerances.} The error and motion requirements are fixed before final testing. The $95$th-percentile output error must be at most $10^{-3}$; composition, inverse, and subdivision errors must be at most $10^{-2}$; and relative transport error must be at most $2\times10^{-2}$. These requirements apply to both one- and two-factor tests at radius $0.5$. One-factor median motion must be at least $0.05s_\theta$. Compensation additionally requires median moving-output change of at least $0.005s_F$ and cancellation-error $95$th percentile at most $10^{-2}$. Columns headed ``Fits'' count how many fitted actions satisfy every requirement. The reported errors and motion include every seed, regardless of that count. These tolerances summarize the sampled tests and are not global guarantees.

Table~\ref{tab:experimental-main} reports the complete one-generator results, including all seeds in each setting.

\begin{table}[ht]
\centering
\small
\setlength{\tabcolsep}{4pt}
\begin{tabular}{lrrrrr}
\toprule
Setting & Motion (\%) & Output & Composition & Inverse & Fits \\
\midrule
Linear & 5.25 & $4.73\!\times\!10^{-4}$ & $3.68\!\times\!10^{-3}$ & $3.27\!\times\!10^{-3}$ & 5/5 \\
ReLU & 3.74 & $8.70\!\times\!10^{-4}$ & $3.07\!\times\!10^{-3}$ & $6.14\!\times\!10^{-3}$ & 3/5 \\
Sigmoid, $k=1$ & 11.95 & $5.23\!\times\!10^{-5}$ & $3.71\!\times\!10^{-4}$ & $5.16\!\times\!10^{-4}$ & 5/5 \\
Sigmoid, $k=2$ & 6.35 & $1.46\!\times\!10^{-4}$ & $1.48\!\times\!10^{-3}$ & $2.16\!\times\!10^{-3}$ & 4/5 \\
Tanh, separated layers & 1.50 & $9.57\!\times\!10^{-4}$ & $3.29\!\times\!10^{-3}$ & $3.53\!\times\!10^{-3}$ & 2/5 \\
GELU, separated layers & 1.50 & $1.07\!\times\!10^{-3}$ & $4.41\!\times\!10^{-3}$ & $3.77\!\times\!10^{-3}$ & 2/5 \\
ReLU subnetwork & 4.81 & $7.51\!\times\!10^{-4}$ & $4.29\!\times\!10^{-3}$ & $6.62\!\times\!10^{-3}$ & 5/5 \\
\bottomrule
\end{tabular}
\caption{Learned one-generator approximate actions in the small-network experiments. Errors are medians across five seeds of within-seed $95$th percentiles over $512$ samples at coefficient radius $0.5$ and one factor. Output error uses a fixed calibration scale; composition and inverse errors are relative to attempted motion. Motion is the median relative change in the transformed parameter block. The final column counts fits satisfying every tolerance in Appendix~\ref{app:experimental-protocol}, including two-factor tests, transport, subdivision, and nontrivial motion.}
\label{tab:experimental-main}
\end{table}

\paragraph{Generated curves.} We additionally evaluate $\theta(t)=a_X(\exp(th),\theta_0)$ at $61$ equally spaced times in $[-0.5,0.5]$, using $32$ fresh parameter points per seed in float64. Each trajectory keeps $\theta_0$, $X$, and $h$ fixed. The horizontal coordinate $t$ is the signed exponential coefficient and $t=0$ is the original parameter point. The plotted output drift is $\|F_X(\theta(t))-F_X(\theta_0)\|_F/s_F$. Its random reference is a fixed parameter translation $\theta_0+t z$, with direction drawn once and scale calibrated to match root-mean-square physical motion at $t=0.3$ on separate validation parameters. Aggregate curves retain all five seeds. Illustrations use seed $101$ and the first sampled parameter point, selected before evaluating these curves. These are descriptive trajectory evaluations; the randomized finite-action tests and their joint numerical criterion are unchanged.

\paragraph{Main-text curve summaries.} In Figure~\ref{fig:experimental-compensation}(a), each unit's contribution is $Y_j=u_j\sigma(v_jX)$, and its signed change is $\Delta Y_j=Y_j(\theta(t))-Y_j(\theta_0)$. These changes and their sum are divided by $s_F$. This panel uses the prespecified illustration above. Panel (b) takes the median output drift over the $32$ initial parameter points separately in each seed, then the median across seeds. In panel (c), the horizontal coordinate is the seed median raw parameter displacement and the vertical coordinate is the seed $95$th-percentile full-network output error; each is then aggregated by its median across five seeds. Shaded regions span the corresponding seed summaries. The host comparison uses identical complete network parameters and inputs for the two methods, as detailed in Appendix~\ref{app:experimental-subnetwork}.

\FloatBarrier

\section{Objective comparisons and batch-specific preservation}
\label{app:experimental-comparisons}

Table~\ref{tab:experimental-objectives} compares the objectives on the one-generator small-network settings. The hybrid objective gives lower median preservation, composition, and inverse errors than the finite comparator in each paired setting. On scalar sigmoid, the infinitesimal-only objective also produces accurate preserving actions. Adding finite composition to the tuned recipe reduces its median composition error from $2.42\times10^{-3}$ to $3.71\times10^{-4}$.

\begin{table}[ht]
\centering\small
\setlength{\tabcolsep}{3.5pt}
\begin{tabular}{llrrrrr}
\toprule
Setting & Objective & Output & Comp. & Inverse & Transport & Fits \\
\midrule
Linear & Hybrid & $4.73\!\times\!10^{-4}$ & $3.68\!\times\!10^{-3}$ & $3.27\!\times\!10^{-3}$ & $3.92\!\times\!10^{-3}$ & 5/5 \\
Linear & Finite & $1.31\!\times\!10^{-3}$ & $7.33\!\times\!10^{-3}$ & $7.84\!\times\!10^{-3}$ & $1.27\!\times\!10^{-2}$ & 0/5 \\
ReLU & Hybrid & $8.70\!\times\!10^{-4}$ & $3.07\!\times\!10^{-3}$ & $6.14\!\times\!10^{-3}$ & $6.55\!\times\!10^{-3}$ & 3/5 \\
ReLU & Finite & $2.56\!\times\!10^{-3}$ & $1.10\!\times\!10^{-2}$ & $1.50\!\times\!10^{-2}$ & $1.87\!\times\!10^{-2}$ & 0/5 \\
Sigmoid, $k=1$ & Hybrid & $5.23\!\times\!10^{-5}$ & $3.71\!\times\!10^{-4}$ & $5.16\!\times\!10^{-4}$ & $8.88\!\times\!10^{-4}$ & 5/5 \\
Sigmoid, $k=1$ & Finite & $1.07\!\times\!10^{-4}$ & $3.06\!\times\!10^{-3}$ & $3.60\!\times\!10^{-3}$ & $4.24\!\times\!10^{-3}$ & 5/5 \\
Sigmoid, $k=1$ & Infinitesimal & $5.16\!\times\!10^{-5}$ & $2.42\!\times\!10^{-3}$ & $1.43\!\times\!10^{-3}$ & $1.45\!\times\!10^{-3}$ & 5/5 \\
Sigmoid, $k=2$ & Hybrid & $1.46\!\times\!10^{-4}$ & $1.48\!\times\!10^{-3}$ & $2.16\!\times\!10^{-3}$ & $2.03\!\times\!10^{-3}$ & 4/5 \\
Sigmoid, $k=2$ & Finite & $2.50\!\times\!10^{-4}$ & $5.43\!\times\!10^{-3}$ & $4.15\!\times\!10^{-3}$ & $9.40\!\times\!10^{-3}$ & 3/5 \\
Sigmoid, $k=2$ & Infinitesimal & $1.71\!\times\!10^{-4}$ & $1.70\!\times\!10^{-3}$ & $1.66\!\times\!10^{-3}$ & $2.36\!\times\!10^{-3}$ & 3/5 \\
Tanh & Hybrid & $9.57\!\times\!10^{-4}$ & $3.29\!\times\!10^{-3}$ & $3.53\!\times\!10^{-3}$ & $5.23\!\times\!10^{-3}$ & 2/5 \\
Tanh & Finite & $2.40\!\times\!10^{-3}$ & $2.36\!\times\!10^{-2}$ & $9.14\!\times\!10^{-3}$ & $3.27\!\times\!10^{-2}$ & 0/5 \\
GELU & Hybrid & $1.07\!\times\!10^{-3}$ & $4.41\!\times\!10^{-3}$ & $3.77\!\times\!10^{-3}$ & $7.52\!\times\!10^{-3}$ & 2/5 \\
GELU & Finite & $2.75\!\times\!10^{-3}$ & $2.53\!\times\!10^{-2}$ & $9.75\!\times\!10^{-3}$ & $3.69\!\times\!10^{-2}$ & 0/5 \\
Local ReLU host & Hybrid & $7.51\!\times\!10^{-4}$ & $4.29\!\times\!10^{-3}$ & $6.62\!\times\!10^{-3}$ & $7.66\!\times\!10^{-3}$ & 5/5 \\
\bottomrule
\end{tabular}
\caption{Objective comparisons for the one-generator small-network settings. Errors are medians across all five seeds of their within-seed $95$th percentiles at radius $0.5$ and one factor. The joint numerical criterion additionally includes the two-factor cell, subdivision, and motion. Each objective is tuned independently.}
\label{tab:experimental-objectives}
\end{table}

\subsection{Transformations across non-contiguous layers}
\label{app:experimental-separated-layers}
\label{sec:experimental-separated-layers}

We also learn one-generator transformations in tanh and GELU networks with widths $(4,16,16,2)$, trained on synthetic teacher regression. Discovery changes the first and last weight matrices while keeping the middle layer fixed, giving a $96$-parameter search space. With $24$ protected inputs, median relative parameter motion is approximately $1.5\%$. Two of five fits for each activation satisfy all the joint numerical tolerances in Appendix~\ref{app:experimental-protocol}; all five are retained in the tables and figures.

To examine finite preservation, we evaluate the output along curves $\theta(t)=a_X(\exp(th),\theta_0)$ with $\theta_0$, $X$, and $h$ fixed along each curve. Compared with random parameter translations calibrated to comparable motion, the learned curves have much smaller output variation (Figure~\ref{fig:experimental-orbits}(b)). The hybrid objective also reduces the median composition error from $2.36\times10^{-2}$ to $3.29\times10^{-3}$ for tanh and from $2.53\times10^{-2}$ to $4.41\times10^{-3}$ for GELU. These measurements describe coordinated approximate transformations on the tested parameter neighborhoods. The first/last-layer support is specified before training.

\subsection{Preservation on conditioning and fresh batches}

To examine batch-specific preservation, we first construct $\theta'=a_X(g,\theta)$ using the conditioning batch, and then evaluate these same weights on an independently sampled $X'$. Table~\ref{tab:experimental-data-dependence} shows small errors on $X$ and larger errors on $X'$. These measurements use $128$ parameter samples and one- to three-factor products at radius $0.5$. They describe one frozen transformation evaluated on different inputs.

\begin{table}[ht]
\centering\small
\begin{tabular}{lrr}
\toprule
Setting & Conditioning batch & Fresh batch, same weights \\
\midrule
Sigmoid, $k=1$ & $4.46\!\times\!10^{-5}$ & $1.11\!\times\!10^{-1}$ \\
Sigmoid, $k=2$ & $1.14\!\times\!10^{-4}$ & $1.13\!\times\!10^{-2}$ \\
Tanh & $8.06\!\times\!10^{-4}$ & $3.58\!\times\!10^{-3}$ \\
GELU & $9.23\!\times\!10^{-4}$ & $5.48\!\times\!10^{-3}$ \\
\bottomrule
\end{tabular}
\caption{Preservation on conditioning and fresh batches using the same transformed weights. Entries are medians across five seeds of within-seed $95$th-percentile errors, using the original calibration scale. Both columns use exactly the same frozen weights.}
\label{tab:experimental-data-dependence}
\end{table}

\begin{figure}[t]
\centering
\begin{minipage}{0.485\linewidth}\centering
\includegraphics[width=\linewidth]{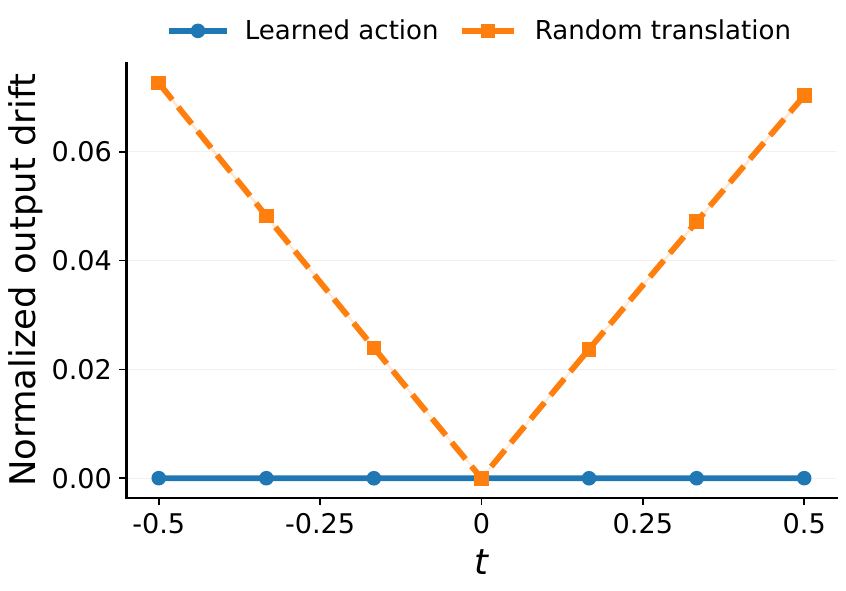}
\small (a) One protected input
\end{minipage}\hfill
\begin{minipage}{0.485\linewidth}\centering
\includegraphics[width=\linewidth]{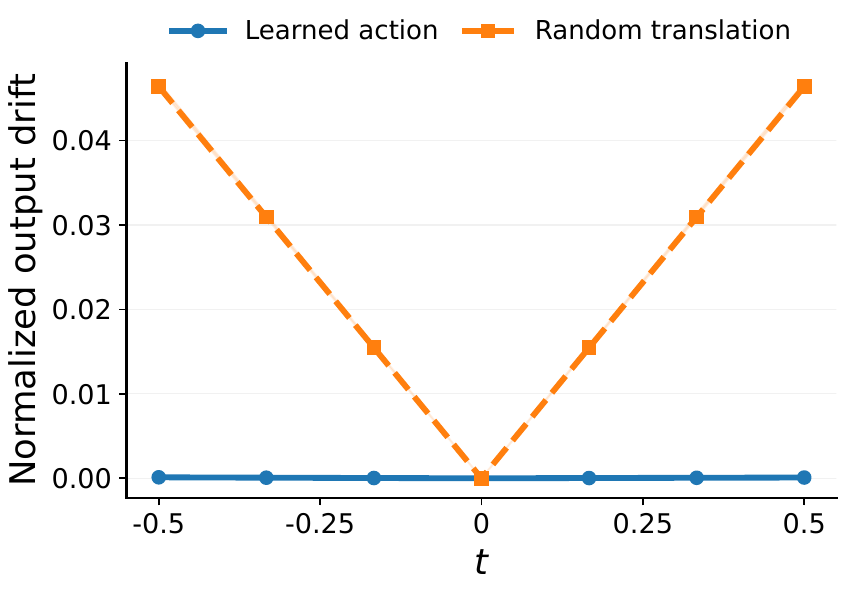}
\small (b) Two protected inputs
\end{minipage}
\caption{Output variation along sigmoid trajectories with one or two protected inputs. The horizontal axis is the signed coefficient $t$ in $\theta(t)=a_X(\exp(th),\theta_0)$. Normalized output drift is $\|F_X(\theta(t))-F_X(\theta_0)\|_F/s_F$, with $s_F$ fixed before training. Learned action follows this curve; random translation follows $\theta_0+t z$ with $z$ fixed and scaled on validation parameters to comparable motion at $t=0.3$. Curves take medians over $32$ initial points, then across all five seeds; shading shows the minimum and maximum seed medians.}
\label{fig:additional-nonlinear-orbits}
\end{figure}

\FloatBarrier

\section{Recovered directions and higher-dimensional actions}
\label{app:experimental-structure}
\label{app:frontier-recovery}

\subsection{One-parameter reference directions}

We compare the induced field $v_{X,h}(\theta)=D_ga_X|_{I,\theta}[h]$ with reference generator values at the same parameter points. The containment error is the normalized distance of the learned tangent span from the reference span. Table~\ref{tab:experimental-reference} uses $24$ parameter samples per seed at a fixed context and relative rank tolerance $10^{-4}$. The one-generator matrix algebra is one-dimensional and abelian by construction. Its induced field has sampled rank one in these fits. The small containment errors demonstrate agreement with the reference directions. Each experiment targets a single one-parameter subgroup.

\begin{table}[ht]
\centering\small
\begin{tabular}{lrrr}
\toprule
Setting & Field dim. & Orbit rank & Reference-span error \\
\midrule
Linear & 1 & 1 & $1.92\!\times\!10^{-3}$ \\
ReLU & 1 & 1 & $7.14\!\times\!10^{-3}$ \\
Sigmoid, $k=1$ & 1 & 1 & $1.35\!\times\!10^{-4}$ \\
Sigmoid, $k=2$ & 1 & 1 & $9.93\!\times\!10^{-4}$ \\
Local ReLU host & 1 & 1 & $3.91\!\times\!10^{-3}$ \\
\bottomrule
\end{tabular}
\caption{Induced-field agreement for the reported one-generator actions. Containment entries are medians across five seeds of mean normalized reference-span distances. Field dimension and sampled orbit rank equal one in each seed.}
\label{tab:experimental-reference}
\end{table}

Figure~\ref{fig:experimental-reference-fields} visualizes the reference agreement directly. Each point compares one coordinate of the learned field with the same coordinate of its pointwise projection onto the analytic reference span. This projection allows the coefficients to depend on the parameter point. The plot therefore illustrates tangent-direction agreement, while the finite-action tests evaluate composition and preservation separately.

\begin{figure}[t]
\centering
\begin{minipage}{0.485\linewidth}\centering
\includegraphics[width=\linewidth]{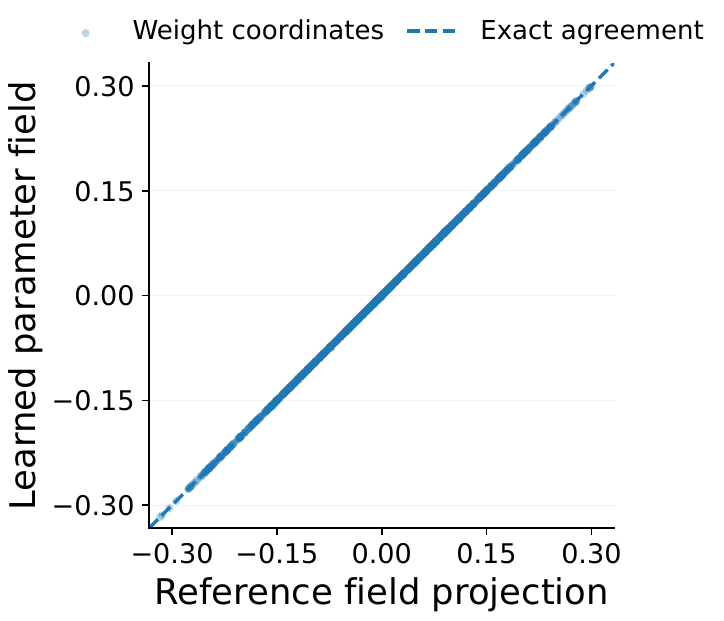}
\small (a) Linear network
\end{minipage}\hfill
\begin{minipage}{0.485\linewidth}\centering
\includegraphics[width=\linewidth]{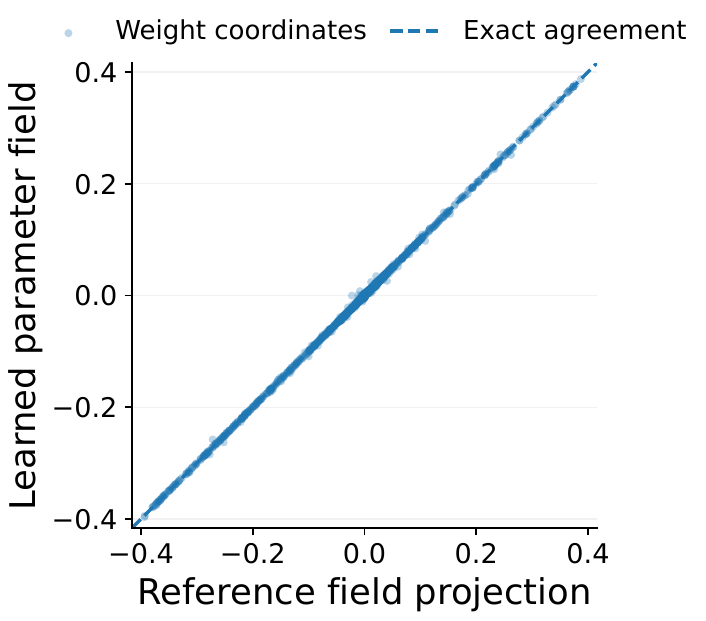}
\small (b) ReLU network
\end{minipage}
\caption{Induced parameter fields and known symmetry directions. Each dot (weight coordinates in the legend) represents one coordinate of $v_{X,h}(\theta)=D_ga_X|_{I,\theta}[h]$, the instantaneous parameter change per unit coefficient. Its horizontal value is that coordinate after orthogonal projection onto the reference tangent span at the same $\theta$; its vertical value is the learned field coordinate. The dashed exact-agreement line is $y=x$. All coordinates from $32$ parameter configurations and five seeds are shown. The projection is pointwise, so the plots measure tangent-direction agreement.}
\label{fig:experimental-reference-fields}
\end{figure}

\subsection{Affine-field extraction and exact verification}

We search for parameter fields $v(\theta)=A\theta+b$ without providing a reference basis or its dimension. For each network, we sample $512$ complete parameter vectors and four Gaussian inputs per vector. An entry in a weight matrix with input width $d$ has distribution $\mathcal N(0,1/d)$. With $J= D_\theta F_X(\theta)$ and $\bar\theta=(\theta^\top,1)^\top$, the constraints are
\[
    J[A\; b]\bar\theta=0.
\]
They are linear in the $p(p+1)$ unknown coefficients of $[A\;b]$. We stack the componentwise equations, normalize their rows, and take the right singular vectors below $10^{-8}$ times the largest singular value. This is an affine extension of the linear-generator extraction used by \citet{moskalev2022liegg}. We repeat extraction with seeds $101$--$103$. The recovered nullities are unchanged at relative thresholds $10^{-4}$, $10^{-6}$, and $10^{-8}$ in every run.

The affine-field solver enforces the sampled infinitesimal preservation equation directly; it does not optimize the neural objective in \eqref{eq:objective-weighted}.

Embed each recovered field as $H=\left[\begin{smallmatrix}A&b\\0&0\end{smallmatrix}\right]$. The generated matrices act linearly on $(\theta,1)$, so the finite composition law is supplied by matrix multiplication. Preservation is tested on $128$ independently sampled parameter/input pairs per seed. Each group sample is a product of four exponentials with independently sampled coefficients in $[-0.18,0.18]$. The output norm is divided by a fixed root-mean-square output norm on these test points. Composition and inverse errors are relative to physical motion. This experiment measures extraction of preserving generators within the affine class.

The solver recovers four- and eight-dimensional hidden-basis algebras in the two- and three-layer linear networks and three-, six-, and eight-dimensional rescaling algebras in the ReLU networks (Table~\ref{tab:frontier-higher-recovery}). All three extraction seeds identify the corresponding reference span. Finite transformations of the floating-point estimates preserve the outputs to approximately $10^{-15}$ in float64, with median relative parameter changes of $4.2\%$--$6.4\%$.

\begin{table}[ht]
\centering\small
\setlength{\tabcolsep}{5pt}
\begin{tabular}{llrrrr}
\toprule
Network & Widths & Dimension & Center & Derived & Output error \\
\midrule
Linear & $(2,2,2)$ & 4 & 1 & 3 & $1.10\times10^{-15}$ \\
Linear & $(2,2,2,2)$ & 8 & 2 & 6 & $1.47\times10^{-15}$ \\
ReLU & $(2,3,2)$ & 3 & 3 & 0 & $1.13\times10^{-15}$ \\
ReLU & $(2,3,3,1)$ & 6 & 6 & 0 & $1.16\times10^{-15}$ \\
ReLU & $(2,2,2,2,2,1)$ & 8 & 8 & 0 & $1.26\times10^{-15}$ \\
\bottomrule
\end{tabular}
\caption{Higher-dimensional recovery by the affine-field solver. Widths include input and output dimensions. The dimension is inferred from the sampled constraints; all three extraction seeds agree. The center consists of generators commuting with every generator, and the derived algebra is the span of their commutators. Exact reconstruction confirms the reference-span identification and the displayed algebra dimensions. Output errors are medians across seeds of within-seed $95$th percentiles over $128$ independent parameter/input samples and four-factor transformations.}
\label{tab:frontier-higher-recovery}
\end{table}

\paragraph{Exact reconstruction and interpretation.} We use pivoted QR to choose coordinates for the recovered span and reconstruct its basis with rational coefficients of denominator at most $32$. These reconstructed matrices are distinct from the floating-point estimates. Exact arithmetic verifies their linear independence, preservation, equality with the reference span, and bracket closure. All $15$ reconstructed families satisfy these exact checks. For linear networks, every component of $D_\theta F_X(\theta)[A\theta+b]$ vanishes as a polynomial in the weights. For ReLU networks, the reconstructed fields are exact linear combinations of hidden-unit rescalings. Positive homogeneity then gives preservation at all inputs and parameters, including activation boundaries.

For example, in the three-layer linear network the reconstructed span agrees with the generators of
\[
 (W_1,W_2,W_3)\longmapsto
 (G_1W_1,\;G_2W_2G_1^{-1},\;W_3G_2^{-1}),
 \qquad G_1,G_2\in\mathrm{GL}^{+}(2).
\]
The product of the three matrices is unchanged. This identifies the recovered algebra as $\mathfrak{gl}(2)\oplus\mathfrak{gl}(2)$, with dimension eight, center dimension two, and derived dimension six. In the five-layer ReLU network, the eight hidden-unit rescalings give an eight-dimensional abelian algebra. These identifications concern the recovered reference families; the sampled nullspace calculation alone is not a classification of every symmetry of the architecture.

\begin{figure}[ht]
\centering
\begin{minipage}{0.485\linewidth}\centering
\includegraphics[width=\linewidth]{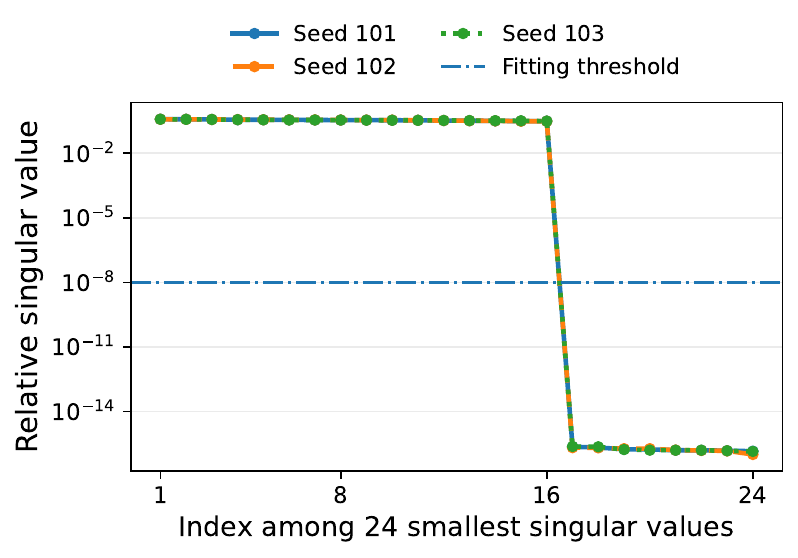}
\small (a) Three-layer linear network
\end{minipage}\hfill
\begin{minipage}{0.485\linewidth}\centering
\includegraphics[width=\linewidth]{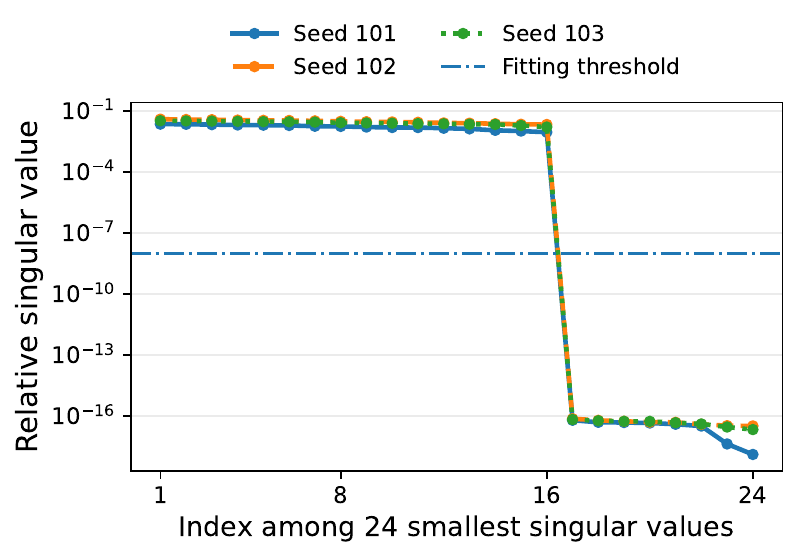}
\small (b) Five-layer ReLU network
\end{minipage}
\caption{Singular values of the stacked affine-field constraints. The horizontal axis orders the $24$ smallest singular values from largest to smallest; each vertical value is divided by the largest singular value of the complete constraint matrix. The fitting-threshold line is $10^{-8}$, below which a direction is retained. Seed labels identify three independently sampled extraction datasets. Each has eight retained directions in both networks, separated by a large gap. Table~\ref{tab:frontier-higher-recovery} gives their different bracket structures.}
\label{fig:frontier-spectra}
\end{figure}

\FloatBarrier

\section{Subnetwork discovery and lifting}
\label{app:experimental-subnetwork}

\paragraph{Paired host evaluation.} Local and full discovery are evaluated on the same $128$ complete host parameter vectors and inputs per seed, with identical coefficient draws and two factors per group sample. Their supports contain $10$ and $88$ parameters. Table~\ref{tab:experimental-paired} reports raw displacements and absolute host-output errors. At radius $0.5$, the per-seed error ratios favor local discovery by $4.7$--$9.9$ times. The preservation targets and supports differ, so this experiment measures the combined subnetwork strategy.

\begin{table}[ht]
\centering\small
\begin{tabular}{lrrrr}
\toprule
Radius & Local motion & Full motion & Local output & Full output \\
\midrule
0.1 & 0.0119 & 0.0115 & $1.92\!\times\!10^{-4}$ & $1.12\!\times\!10^{-3}$ \\
0.3 & 0.0430 & 0.0416 & $5.91\!\times\!10^{-4}$ & $4.09\!\times\!10^{-3}$ \\
0.5 & 0.0486 & 0.0472 & $9.12\!\times\!10^{-4}$ & $5.26\!\times\!10^{-3}$ \\
0.8 & 0.0853 & 0.0828 & $1.61\!\times\!10^{-3}$ & $8.20\!\times\!10^{-3}$ \\
\bottomrule
\end{tabular}
\caption{Paired local and full-parameter discovery in the $(2,8,8,1)$ ReLU host. Motion is the median across seeds of within-seed median raw displacement. Output is the median across seeds of within-seed $95$th-percentile absolute host error.}
\label{tab:experimental-paired}
\end{table}

\FloatBarrier

\section{Subnetwork actions in pretrained language models}
\label{app:experimental-pythia-details}

We fit a separate action for each fixed prefix from the training split of WikiText-103. The reported sites are Pythia-160M layer $6$ with eight tokens and Pythia-1B layer $3$ with $16$ tokens, using zero-based layer indices. Each result includes all three seeds $101$--$103$. Nearby-parameter tests and original-checkpoint installation are separate evaluations.

\subsection{Parameterization and training}

Let $\widetilde X$ contain the block inputs at all protected positions, augmented with a bias row. The selected boundary contribution is $U_BZ_B+U_C\sigma(\widetilde V_C\widetilde X)$, where $C$ contains one moving unit and $B$ contains $k$ compensating units. We hold $\widetilde V_B,U_C$, all other units, and the output bias fixed. The compensator feature matrix $Z_B$ is chosen by pivoted QR and remains fixed. The complete augmented input has rank $k$, and the accepted $Z_B$ has condition number at most $10^4$.

The action uses coordinates $(D,W)$ defined by
\[
    \widetilde X=QR,\qquad U_C=OR_U,\qquad
    \widetilde V_C=\widetilde V_{C,0}+DQ^\top,\qquad
    U_B=U_{B,0}+OW,
\]
where $Q$ and $O$ have orthonormal columns. With $P_0=\widetilde V_{C,0}\widetilde X$, we train on the reduced output
\[
    f_X(D,W)=WZ_B+R_U\sigma(P_0+DR).
\]
Here $f_X$ is the output target $L(\cdot,X)$ used in the discovery losses. Boundary-output changes equal $O\Delta f_X$ and satisfy $\|O\Delta f_X\|_F=\|\Delta f_X\|_F$. Likewise, $\|\Delta DQ^\top\|_F^2+\|O\Delta W\|_F^2=\|\Delta D\|_F^2+\|\Delta W\|_F^2$. The $2k$ coordinates thus retain the physical output and motion norms while restricting incoming changes to directions visible on the prefix. Each action call receives both current coordinates.

We use the width-$256$, three-hidden-layer SiLU decoder, exact identity, and one learned Frobenius-normalized $2\times2$ generator. Training uses float32 Adam, batch size $32$, gradient clipping at $10$, and no sparsity. The sampled state is the original checkpoint with probability $0.2$ and a Gaussian coordinate perturbation otherwise, with standard deviation $0.2$ times the original selected-weight root-mean-square magnitude. Original and transformed residual locations are mixed equally, and one quarter of composition pairs are inverse pairs. The training radius is $0.5$ with up to three signed exponential factors.

For each objective, three recipes train for $5{,}000$ steps and the strongest two continue to $15{,}000$. Validation selects a recipe for three fresh-seed fits of $15{,}000$ steps. Base recipes use learning rates $10^{-3}$ or $3\times10^{-4}$, preservation weight $10$, composition weight $1$, and hybrid transport weight $1$. The stronger recipe uses learning rate $3\times10^{-4}$, preservation weight $30$, composition weight $3$, and hybrid transport weight $3$ plus finite preservation weight $30$. Scale weight is $1$. Final tests use an independent random stream. Objectives are tuned separately at equal step and search budgets.

\subsection{Nearby-parameter validation}

Local tests use $512$ samples per radius/factor cell and the one-generator numerical criterion in Appendix~\ref{app:experimental-protocol}. The fixed parameter scale is the original selected-weight root-mean-square magnitude; the output scale is the norm of the original moving unit's contribution on the prefix. Table~\ref{tab:pythia-local} gives the three-seed Pythia-160M eight-token results.

\begin{table}[ht]
\centering\small
\setlength{\tabcolsep}{4.5pt}
\begin{tabular}{lrrrrrr}
\toprule
Objective & Output & Composition & Inverse & Transport & Cancellation & Fits \\
\midrule
Finite & $6.00\!\times\!10^{-4}$ & $8.00\!\times\!10^{-3}$ & $3.16\!\times\!10^{-3}$ & $1.40\!\times\!10^{-2}$ & $2.43\!\times\!10^{-2}$ & 0/3 \\
Hybrid & $2.18\!\times\!10^{-4}$ & $1.93\!\times\!10^{-3}$ & $1.12\!\times\!10^{-3}$ & $1.72\!\times\!10^{-3}$ & $9.98\!\times\!10^{-3}$ & 2/3 \\
\bottomrule
\end{tabular}
\caption{Pythia-160M, layer $6$, eight protected tokens. Errors are medians across three seeds of within-seed $95$th percentiles at radius $0.5$ and one factor. The final column counts fits satisfying every tolerance, including two factors, subdivision, and visible compensation.}
\label{tab:pythia-local}
\end{table}

\subsection{Installed weights and full-model outputs}

Installation begins at the original checkpoint. We change the selected incoming row, bias, and compensating output columns, cast to float32, run the complete model, and restore the weights. Logit measurements test how preservation of the trained boundary output carries through the remaining computation. The prefix stays fixed within fitting and action-law tests.

For each coefficient radius $\rho\in\{0.1,0.3,0.5,0.8\}$, we evaluate $g=\exp(th)$ with $\|h\|_F=1$ at $16$ signed coefficients. Eight magnitudes are sampled uniformly on $[0.2\rho,\rho]$ and paired with their negatives. The same normalized magnitudes $|t|/\rho$ are reused across radii, and methods share coefficient draws within each seed. Thus $\rho$ bounds the exponential coefficient, not the physical weight displacement. Let $Z$ and $Z'$ denote the original and changed logits on the protected positions. The reported relative error is $\|Z'-Z\|_F/(\|Z\|_F+10^{-12})$, and the absolute root-mean-square error is $\|Z'-Z\|_F/\sqrt{N}$ for $N$ logit entries. Cancellation compares the sum of the moving and compensating output changes with the sum of their norms. Motion is measured on the selected weights after casting. Selected-weight change is $100\|\theta_S^{\prime}-\theta_S\|_2/\|\theta_S\|_2$, computed on the raw selected weights $S$ at the original checkpoint. Composition and inverse tests round after each action and retain the resulting stored weights for the next application. Their relative errors use the same two-step and one-step displacement denominators defined in Appendix~\ref{app:experimental-protocol}. TF32 is disabled.

The analytic comparator translates the incoming coordinates and solves for compensating output coefficients. Its direction and motion scale are fixed on validation data, as are those of the random additive translation. Incoming-only retains the learned incoming update and omits compensation, giving less total motion. The learned decoder receives neither compensation labels nor a solve.

To examine data dependence, we evaluate the same frozen edits on the protected prefix, its teacher-forced continuation, and three reserved test-article windows. In the 1B example, the learned relative-logit error tails are $1.50\times10^{-6}$, $2.12\times10^{-5}$, and $2.24\times10^{-5}$, respectively. The fresh-text statistic averages the three article errors for each edit before computing its within-seed percentile. These measurements describe preservation on the fitted prefix and behavior of the same changed weights on unprotected inputs. All full-model results in this subsection concern the original checkpoint.

\begin{table}[ht]
\centering\small
\setlength{\tabcolsep}{4.5pt}
\begin{tabular}{llrrrr}
\toprule
Model & Transformation & Motion (\%) & Relative logits & RMS logits & Cancellation \\
\midrule
160M & Learned & 0.1541 & $8.82\!\times\!10^{-7}$ & $7.13\!\times\!10^{-4}$ & $4.54\!\times\!10^{-3}$ \\
160M & Analytic & 0.1523 & $8.36\!\times\!10^{-7}$ & $6.76\!\times\!10^{-4}$ & $1.95\!\times\!10^{-5}$ \\
160M & Incoming only & 0.0231 & $1.84\!\times\!10^{-6}$ & $1.48\!\times\!10^{-3}$ & $1.00\!\times\!10^{0}$ \\
160M & Random & 0.1520 & $3.09\!\times\!10^{-6}$ & $2.50\!\times\!10^{-3}$ & $6.56\!\times\!10^{-1}$ \\
\midrule
1B & Learned & 0.0687 & $1.50\!\times\!10^{-6}$ & $5.37\!\times\!10^{-6}$ & $3.29\!\times\!10^{-3}$ \\
1B & Analytic & 0.0695 & $1.41\!\times\!10^{-6}$ & $5.02\!\times\!10^{-6}$ & $5.22\!\times\!10^{-4}$ \\
1B & Incoming only & 0.0105 & $7.35\!\times\!10^{-5}$ & $2.62\!\times\!10^{-4}$ & $1.00\!\times\!10^{0}$ \\
1B & Random & 0.0688 & $1.93\!\times\!10^{-4}$ & $6.90\!\times\!10^{-4}$ & $8.70\!\times\!10^{-1}$ \\
\bottomrule
\end{tabular}
\caption{Installed transformations at radius $0.5$, starting from the original checkpoint. Motion is the median percentage change in the selected parameter block. Error columns are medians across all three seeds of within-seed $95$th percentiles over $16$ transformations. Analytic and random controls are calibrated to similar physical motion; incoming-only removes the learned compensation.}
\label{tab:pythia-installed}
\end{table}

Figure~\ref{fig:pythia160-controls} gives the Pythia-160M counterpart to Figure~\ref{fig:pythia-controls}. At layer $6$ with eight protected tokens, learned compensation keeps protected-logit drift near the analytic reference across the displayed coefficient range. At radius $0.5$, hybrid training reduces stored-weight composition error from $2.51\times10^{-3}$ to $3.08\times10^{-4}$, with similar raw displacement. These plots evaluate the installed edits at the original checkpoint; Table~\ref{tab:pythia-local} separately describes nearby-parameter tests.

\begin{figure}[!htbp]
\centering
\begin{minipage}{0.485\linewidth}\centering
\includegraphics[width=\linewidth]{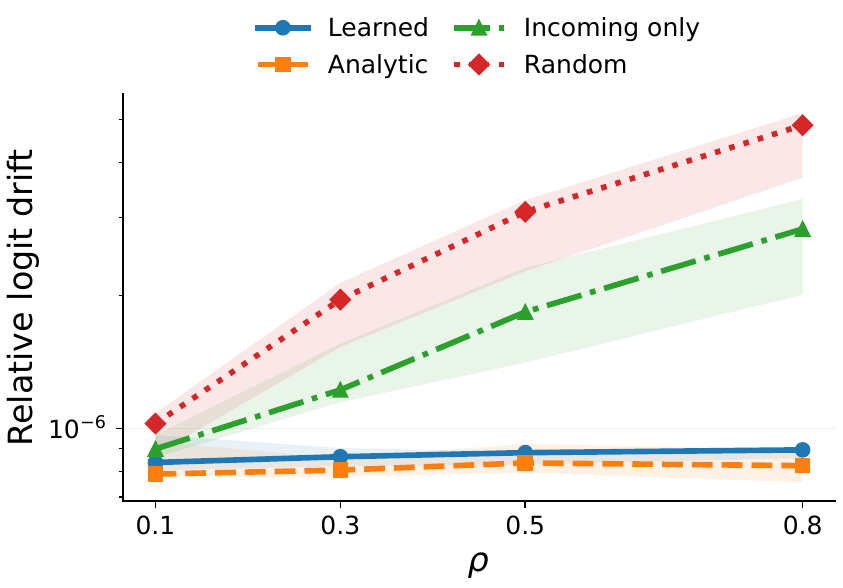}
\small (a) Preserving protected outputs
\end{minipage}\hfill
\begin{minipage}{0.485\linewidth}\centering
\includegraphics[width=\linewidth]{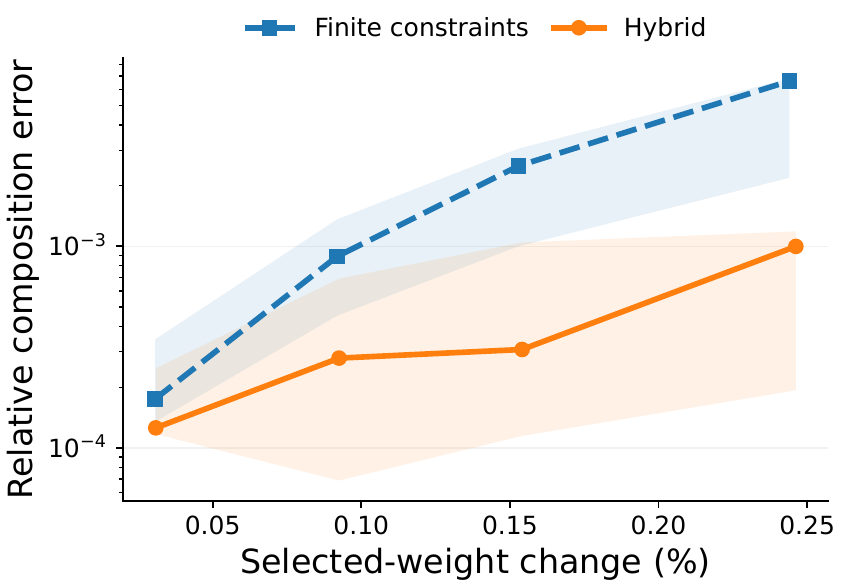}
\small (b) Composing learned transformations
\end{minipage}
\caption{Pythia-160M, layer $6$, at the original checkpoint and one fixed eight-token prefix. (a) Coefficient radius $\rho$ bounds $|t|$ in $g=\exp(th)$; magnitudes lie in $[0.2\rho,\rho]$. Relative logit drift is $\|Z'-Z\|_F/\|Z\|_F$ on the prefix. Learned denotes the hybrid action; analytic solves for compensation; random is a motion-calibrated translation; incoming-only removes the learned compensation. (b) Selected-weight change is $100\|\theta'_S-\theta_S\|_2/\|\theta_S\|_2$ on edited parameters $S$. Relative composition error compares sequential and combined edits, divided by their total two-step displacement, with rounding after each update. Finite constraints and hybrid use separately tuned objectives. Each curve aggregates within-seed statistics across all three seeds by their median; error statistics are $95$th percentiles over $16$ signed transformations per radius, and motion statistics are medians. Shading spans the seed statistics.}
\label{fig:pythia160-controls}
\end{figure}

The analytic reference has a measured finite-precision logit error of $8.36\times10^{-7}$ in the 160M example and $1.41\times10^{-6}$ in the 1B example. The learned transformations reach similar protected-logit errors, with larger cancellation residuals. Table~\ref{tab:pythia-stored} separately reports composition and inverse consistency of the stored-weight transformations. At the 1B site, hybrid training reduces the median stored composition error from $3.21\times10^{-3}$ to $5.81\times10^{-4}$ at nearly identical displacement.

\begin{table}[ht]
\centering\small
\begin{tabular}{llrrr}
\toprule
Model & Objective & Raw motion & Composition & Inverse \\
\midrule
160M & Finite & $2.94\!\times\!10^{-3}$ & $2.51\!\times\!10^{-3}$ & $1.91\!\times\!10^{-3}$ \\
160M & Hybrid & $2.97\!\times\!10^{-3}$ & $3.08\!\times\!10^{-4}$ & $5.92\!\times\!10^{-4}$ \\
1B & Finite & $2.39\!\times\!10^{-3}$ & $3.21\!\times\!10^{-3}$ & $2.86\!\times\!10^{-3}$ \\
1B & Hybrid & $2.39\!\times\!10^{-3}$ & $5.81\!\times\!10^{-4}$ & $9.00\!\times\!10^{-4}$ \\
\bottomrule
\end{tabular}
\caption{Stored-weight action consistency at radius $0.5$. Both learned objectives use the same selected support and separately tuned recipes. Error entries are medians across three seeds of within-seed $95$th percentiles. These evaluations start at the original checkpoint.}
\label{tab:pythia-stored}
\end{table}

\end{document}